\documentclass{article}
 \pdfoutput=1 

\usepackage[preprint, nonatbib]{neurips_2021}





\usepackage[table]{xcolor}




\usepackage[utf8]{inputenc}
\usepackage{booktabs}
\usepackage{multirow}
\usepackage{float}
\usepackage{amssymb}
\usepackage{amsmath}
\usepackage{amsthm}
\usepackage{bbold}
\usepackage{siunitx}
\usepackage{tikz}
\usepackage{chngcntr}
\usepackage{verbatim}
\usepackage{mathtools}
\usepackage{nicefrac}
\usepackage{esvect}
\usepackage{xspace}
\usepackage{xparse}
\usepackage{algorithm}
\usepackage{algorithmic}
\usepackage{thmtools,thm-restate}
\usepackage{graphicx}
\usepackage{sidecap}
\usepackage{hyperref}
\usepackage{wrapfig}
\hypersetup{breaklinks, colorlinks, citecolor=green, linkcolor=blue}      
\usepackage{enumitem}
\usepackage[flushleft]{threeparttable}
\usepackage{tikzit}
\usepackage{subcaption}

\tikzstyle{server}=[fill={rgb,255: red,64; green,64; blue,64}, draw=black, shape=circle]
\tikzstyle{client}=[fill={rgb,255: red,55; green,126; blue,184}, draw=black, shape=circle]
\tikzstyle{optima}=[fill={rgb,255: red,55; green,126; blue,184}, draw=black, shape=rectangle]
\tikzstyle{globalopt}=[fill={rgb,255: red,255; green,128; blue,0}, draw=black, shape=rectangle]
\tikzstyle{globalroundopt}=[fill={rgb,255: red,255; green,128; blue,0}, draw=black, shape=circle]
\tikzstyle{pseudoserver}=[fill={rgb,255: red,191; green,191; blue,191}, draw={rgb,255: red,128; green,128; blue,128}, shape=circle]
\tikzstyle{sgdnode}=[fill=white, draw=black, shape=circle]
\tikzstyle{client2}=[fill={rgb,255: red,77; green,175; blue,74}, draw=black, shape=circle]
\tikzstyle{clientopt}=[fill={rgb,255: red,77; green,175; blue,74}, draw=black, shape=rectangle]
\tikzstyle{averageopt}=[fill={rgb,255: red,191; green,191; blue,191}, draw={rgb,255: red,128; green,128; blue,128}, shape=rectangle]
\tikzstyle{averageroundopt}=[fill={rgb,255: red,191; green,191; blue,191}, draw={rgb,255: red,128; green,128; blue,128}, shape=circle]
\tikzstyle{gradient}=[dashed]

\tikzstyle{server update}=[->, draw={rgb,255: red,77; green,175; blue,74}, very thick]
\tikzstyle{client update}=[draw={rgb,255: red,55; green,126; blue,184}, ->, very thick]
\tikzstyle{update}=[draw=gray, dashed, -, very thick, very thick]
\tikzstyle{sgd}=[draw={rgb,255: red,65; green,61; blue,58}, ->, very thick]
\tikzstyle{correction}=[draw={rgb,255: red,228; green,26; blue,28}, ->, very thick]
\tikzstyle{midpoint}=[-, draw={rgb,255: red,191; green,191; blue,191}, very thick]

\setlist{leftmargin=5.5mm}
\DeclareCaptionFormat{myformat}{\fontsize{8.5}{10}\selectfont#1#2#3}
\usepackage{bm}
\usepackage{tabularx} 

\DeclarePairedDelimiterX{\inp}[2]{\langle}{\rangle}{#1, #2}
\DeclarePairedDelimiterX{\abs}[1]{\lvert}{\rvert}{#1}
\DeclarePairedDelimiterX{\norm}[1]{\lVert}{\rVert}{#1}
\DeclarePairedDelimiterX{\cbr}[1]{\{}{\}}{#1} 
\DeclarePairedDelimiterX{\rbr}[1]{(}{)}{#1} 
\DeclarePairedDelimiterX{\sbr}[1]{[}{]}{#1} 

\providecommand{\tsum}{\textstyle\sum} 
  \providecommand{\R}{\mathbb{R}} 

  
  \DeclareMathOperator{\expect}{\mathbb{E}}
  \DeclareMathOperator{\E}{\expect}

  \DeclareMathOperator{\sgn}{sign}
  \makeatletter
  \def\sign{\@ifnextchar*{\@sgnargscaled}{\@ifnextchar[{\sgnargscaleas}{\@ifnextchar{\bgroup}{\@sgnarg}{\sgn} }}}
  \def\@sgnarg#1{\sgn\rbr{#1}}
  \def\@sgnargscaled#1{\sgn\rbr*{#1}}
  \def\@sgnargscaleas[#1]#2{\sgn\rbr[#1]{#2}}
  \makeatother

  \DeclareMathOperator*{\argmin}{arg\,min}


  \providecommand{\cc}{\bm{c}}
  \providecommand{\dd}{\bm{d}}
  \providecommand{\ee}{\bm{e}}

  \renewcommand{\gg}{\bm{g}}

  \let\lll\ll
  \renewcommand{\ll}{\bm{l}}
  \providecommand{\mm}{\bm{m}}

  \renewcommand{\ss}{\bm{s}}

  \renewcommand{\vv}{\bm{v}}
  
  \providecommand{\xx}{\bm{x}}
  \providecommand{\yy}{\bm{y}}
  \providecommand{\zz}{\bm{z}}



  
  \providecommand{\cB}{\mathcal{B}}
  
  \providecommand{\cD}{\mathcal{D}}
  \providecommand{\cE}{\mathcal{E}}

  \providecommand{\cO}{\mathcal{O}}

  \providecommand{\cS}{\mathcal{S}}
  
  \providecommand{\cU}{\mathcal{U}}
  \providecommand{\cV}{\mathcal{V}}



\newtheorem{theorem}{Theorem}


\newtheorem{corollary}[theorem]{Corollary}

\newtheorem{lemma}{Lemma}

\newcommand{\ignore}[1]{}

\newcommand{\e}{\epsilon}

\definecolor{color1}{RGB}{228,26,28}
\definecolor{color2}{RGB}{55,126,184}
\definecolor{color3}{RGB}{77,175,74}
\definecolor{color4}{RGB}{152,78,163}
\definecolor{color5}{RGB}{255,127,0}

\definecolor{mygreen}{RGB}{77,175,74}
\definecolor{myblue}{RGB}{55,126,184}
\definecolor{skyblue}{RGB}{117,187,253}
\definecolor{myred}{RGB}{228,26,28}

\usepackage{pifont}
%
%
%
%

\makeatletter
\newcommand{\myitem}[1]{%
\item[\textbf{(#1)}]\protected@edef\@currentlabel{#1}%
}
\makeatother

\usetikzlibrary{fit,calc}
\newcommand*{\tikzmk}[1]{\tikz[remember picture,overlay,] \node (#1) {};}

\newcommand{\highlight}[1]{\tikz[remember picture,overlay]{\node[yshift=3pt,fill=#1,opacity=.25,fit={($(A)+(5pt,7pt)$)($(B)+(-3pt,-7pt)$)}] {};}}
\colorlet{client}{red!20}
\newcommand{\speedup}[1]{{\color{gray}(\ifdim #1 pt > 0.3pt #1\else $< #1$\fi{}$\times$)}}

  
\providecommand{\mycomment}[3]{\todo[caption={},size=footnotesize,color=#1!20]{\textbf{#2: }#3}}%
\providecommand{\inlinecomment}[3]{%
  {\color{#1}#2: #3}}%
\newcommand\commenter[2]%
{%
  \expandafter\newcommand\csname i#1\endcsname[1]{\inlinecomment{#2}{#1}{##1}}
  \expandafter\newcommand\csname #1\endcsname[1]{\mycomment{#2}{#1}{##1}}
}

\usepackage{xspace}
\xspaceaddexceptions{]\}}

\newcommand{\fedavg}{{\sc FedAvg}\xspace}
\newcommand{\mime}{{\sc Mime}\xspace}
\newcommand{\mimelite}{{\sc MimeLite}\xspace}
\newcommand{\scaffold}{{\sc Scaffold}\xspace}
\newcommand{\base}{\sc Base}
\newcommand{\serveronly}{{\sc Server-Only}\xspace}

\newcommand{\mimebox}[1]{\colorbox{myblue!30}{#1}}
\newcommand{\litebox}[1]{\colorbox{mygreen!30}{#1}}
\newcommand{\badbox}[1]{\colorbox{myred!30}{#1}}


\usepackage[toc, page, header]{appendix}
\setcounter{tocdepth}{0}



%

\title{Mime: Mimicking Centralized Stochastic Algorithms in Federated Learning}

\author{
  Sai Praneeth Karimireddy\\
  EPFL\\
  \texttt{sai.karimireddy@epfl.ch}\\
  \And
  Martin Jaggi\\
  EPFL\\
  \texttt{martin.jaggi@epfl.ch}\\
  \And
  Satyen Kale\\
  Google Research\\
  \texttt{satyenkale@google.com}\\
  \And
  Mehryar Mohri\\
  Google Research\\
  \texttt{mohri@google.com}\\
  \And
  Sashank J. Reddi\\
  Google Research\\
  \texttt{sashank@google.com}\\
  \And
  Sebastian U. Stich\\
  EPFL\\
  \texttt{sebastian.stich@epfl.ch}\\
  \And
  Ananda Theertha Suresh\\
  Google Research\\
  \texttt{theertha@google.com}
}


\begin{document}
\maketitle

\begin{abstract}

Federated learning (FL) is a challenging setting for optimization due to the heterogeneity of the data across different clients which can cause a \emph{client drift} phenomenon. In fact, designing an algorithm for FL that is uniformly better than simple centralized training has been a major open problem thus far. In this work, we propose a general algorithmic framework, \mime, which i) mitigates client drift and ii) adapts an arbitrary centralized optimization algorithm such as momentum and Adam to the cross-device federated learning setting. \mime uses a combination of \emph{control-variates} and \emph{server-level optimizer state} (e.g.\ momentum) at every client-update step to ensure that each local update mimics that of the centralized method run on i.i.d.\ data. We prove a reduction result showing that \mime can translate the convergence of a generic algorithm in the centralized setting into convergence in the federated setting. Moreover, we show that, when combined with momentum-based variance reduction, \mime is provably \emph{faster than any centralized method}--the first such result. We also perform a thorough experimental exploration of \mime's performance on real world datasets.

\end{abstract}



\section{Introduction}

Federated learning (FL) is an increasingly important 
large-scale learning framework where the training data remains distributed over a
large number of clients, which may be mobile phones or network
sensors~\cite{konevcny2016federated,konecny2016federated2,
mcmahan2017communication,mohri2019agnostic,kairouz2019advances}. A server 
then orchestrates the clients to train a single model, here referred to 
as a \emph{server model}, without ever transmitting client data over the network,
thereby providing some basic levels of data privacy and security.

Two important settings are distinguished in FL
\cite[Table 1]{kairouz2019advances}: the \emph{cross-device} and the
\emph{cross-silo} settings.  The cross-silo setting corresponds to a
relatively small number of reliable clients, typically organizations,
such as medical or financial institutions.  In contrast, in the
\emph{cross-device} federated learning setting, the number of clients
may be extremely large and include, for example, all 3.5 billion
active android phones \cite{holst2019smartphone}. Thus, in that
setting, we may never make even a single pass over the entire clients'
data during training. The cross-device setting is further
characterized by resource-poor clients communicating over a highly
unreliable network. Together, the essential features of this setting
give rise to unique challenges not present in the cross-silo
setting. In this work, we are interested in the more challenging cross-device setting, for
which we will formalize and study stochastic optimization algorithms. Importantly, recent advances in FL optimization, such as SCAFFOLD \cite{karimireddy2019scaffold} or FedDyn \cite{acar2021federated}, are \emph{not anymore applicable} since they are designed for the cross-silo setting. 

\paragraph{The problem.}
The de facto standard algorithm for the cross-device setting is \fedavg
\cite{mcmahan2017communication}, which performs multiple SGD updates
on the available clients before communicating to the server. While
this approach can reduce the \emph{frequency} of communication required,
performing multiple steps on the same client can lead to
`over-fitting' to its atypical local data, a phenomenon known as
\emph{client drift} \cite{karimireddy2019scaffold}. This in turn leads to slower convergence and can, somewhat counter-intuitively, require \emph{larger total communication} \cite{woodworth2020minibatch}. Despite significant attention received from the optimization community, the communication complexity of heterogeneous cross-device has not improved upon that of simple centralized methods, which take no local steps (aka \serveronly methods). Furthermore, algorithmic innovations such as momentum
\cite{sutskever2013importance,cutkosky2019momentum}, adaptivity
\cite{kingma2014adam,zaheer2018adaptive,zhang2019adam}, and clipping
\cite{you2017large,you2019large,zhang2020gradient} are critical to
the success of deep learning applications. The lack of a theoretical understanding of the impact of multiple client steps has also hindered adapting these techniques in a principled manner into the client updates, in order to replace the vanilla SGD update of \fedavg.

To overcome such deficiencies, we propose a new framework, \mime,
that mitigates client drift and can adapt an arbitrary centralized
optimization algorithm, e.g.\ SGD with momentum or Adam, to the
federated setting. In each local client update, \mime uses global
optimizer state, e.g.\ momentum or adaptive learning rates, and an SVRG-style correction to mimic the
updates of the centralized algorithm run on i.i.d.\ data. This optimizer state is computed only at the server level and kept fixed
throughout the local steps, thereby avoiding overfitting to the
atypical local data of any single client. 
\vspace{-2mm}
\paragraph{Contributions.} We summarize our main results below.\vspace{-1mm} 
\begin{itemize}[nosep]
  \item {\bf \mime framework.} We formalize the cross-device federated learning problem, 
  and propose a new framework \mime that can adapt arbitrary 
  centralized algorithms to this setting.
  
  \item 
  {\bf Convergence result.} We prove a result showing that \mime successfully reduces client drift. We also prove that the convergence of any generic algorithm in the centralized setting translates convergence of its \mime version in the federated setting.
  
  \item {\bf Speed-up over centralized methods.} By carefully tracking the bias introduced due to multiple local steps, we prove that \mime with momentum-based variance reduction (MVR) can beat a lower bound for centralized methods, thus breaking a fundamental barrier. This is the first such result in FL, and also the first general result showing asymptotic speed-up due to local steps. 
  
  \item {\bf Empirical validation.} We propose a simpler variant, \mimelite, with
  an empirical performance similar to \mime. We report the results
  of thorough experimental analysis demonstrating that both 
  \mime and \mimelite indeed converge faster than \fedavg.
\end{itemize}


\paragraph{Related work.}

\emph{Analysis of \fedavg:} Much of the recent work in federated
learning has focused on analyzing \fedavg. For identical clients,
\fedavg coincides with parallel SGD, for which
\cite{zinkevich2010parallelized} derived an analysis with asymptotic
convergence. Sharper and more refined analyses of the same method,
sometimes called local SGD, were provided by \cite{stich2018local},
and more recently by \cite{stich2019error},
\cite{patel2019communication}, \cite{khaled2020tighter}, and
\cite{woodworth2020local}, for identical functions.  Their analysis
was extended to heterogeneous clients in
\cite{wang2019adaptive,yu2019parallel, karimireddy2019scaffold,
khaled2020tighter,koloskova2020unified}. \cite{charles2020outsized}
derived a tight characterization of FedAvg with quadratic functions
and demonstrated the sensitivity of the algorithm to both client and
server step sizes. Matching upper and lower bounds were recently given
by \cite{karimireddy2019scaffold} and \cite{woodworth2020minibatch}
for general functions, proving that \fedavg can be slower than even
SGD for heterogeneous data, due to the \emph{client-drift}.

\emph{Comparison to \scaffold:} For the cross-silo setting where the
number of clients is relatively low, \cite{karimireddy2019scaffold}
proposed the \scaffold algorithm, which uses control-variates (similar
to SVRG) to correct for client drift. However, their algorithm
crucially relies on \emph{stateful clients} which repeatedly
participate in the training process. 
FedDyn \cite{acar2021federated} reduces the communication requirements, but also requires persistent stateful clients.
In contrast, we focus on the
cross-device setting where clients may be visited only once during
training and where they are \emph{stateless} (and thus \scaffold and FedDyn are inapplicable). This is akin to the
difference between the finite-sum (corresponding to cross-silo) and stochastic (cross-device) settings in traditional centralized optimization~\cite{lei2017less}.

\emph{Comparison to FedAvg and variants:} \cite{hsu2019measuring} and
\cite{wang2019slowmo} observed that using \emph{server momentum}
significantly improves over vanilla \fedavg. This idea was generalized
by \cite{reddi2020adaptive}, who replaced the server update with an
arbitrary optimizer, e.g.\ Adam. However, these methods only modify
the server update while using SGD for the client updates. \mime, on
the other hand, ensures that every \emph{local client update}
resembles the optimizer e.g.\ \mime would apply momentum in every
client update and not just at the server level. Beyond this,
\cite{li2018fedprox} proposed to add a regularizer to ensure client
updates remain close. However, {this may slow down convergence} 
(cf.~Fig.~\ref{fig:add-prox-scaffold-experiments} and 
\cite{karimireddy2019scaffold,wang2020tackling}). Other orthogonal
directions which can be combined with \mime include tackling
computation heterogeneity, where some clients perform many more
updates than others \cite{wang2020tackling}, improving fairness by
modifying the objective \cite{mohri2019agnostic,li2019fair},
incorporating differential privacy
\cite{geyer2017differentially,agarwal2018cpsgd,thakkar2020understanding},
Byzantine adversaries
\cite{pillutla2019robust,wang2020attack,karimireddy2020learning}, secure
aggregation \cite{bonawitz2017practical,he2020secure}, etc. We defer 
additional discussion to the extensive survey by \cite{kairouz2019advances}.

\section{Problem setup}
This section formalizes the problem of cross-device federated learning~\cite{kairouz2019advances}. Cross-device FL is characterized by a large number of client devices like mobile phones which may potentially connect to the server at most once. Due to their transient nature, it is not possible to store any state on the clients, precluding an algorithm like \scaffold. Furthermore, each client has only a few samples, and there is wide heterogeneity in the samples across clients. Finally, communication is a major bottleneck and a key metric for optimization in this setting is the number of communication rounds.




  
Thus, our objective will be to minimize the following
quantity within the fewest number of client-server communication
rounds:\vspace{-3mm}
\begin{equation}
\label{eqn:problem}
  f(\xx) = \expect_{i \sim \cD}\sbr[\Big]{f_i(\xx) := \frac{1}{n_i}\sum_{\nu=1}^{n_i} f_i(\xx; \zeta_{i,\nu})}\,.
\end{equation}
Here, $f_i$ denotes the loss function of client $i$ and
$\{\zeta_{i,1}, \ldots, \zeta_{i,n_i}\}$ its local data. Since the
number of clients is extremely large, while the size of each local data is
rather modest, we represent the former as an expectation and the
latter as a finite sum. In each round, the algorithm samples a subset
of clients (of size $S$) and performs some updates to the server
model. Due to the transient and heterogeneous nature of the clients, it is easy to see that the problem becomes intractable with arbitrarily dissimilar clients.
Thus, it is \textbf{necessary} to
assume bounded dissimilarity across clients.\vspace*{-2mm}
\begin{enumerate}[leftmargin=26pt]
  \myitem{A1} \label{asm:heterogeneity} \textbf{$G^2$-BGV} or bounded inter-client gradient variance: there exists $G \geq 0$ such that
    \begin{equation*}
        \expect_{i \sim \cD} \sbr{\norm{\nabla f_i(\xx)- \nabla f(\xx)}^2} \leq G^2 \,,\ \forall \xx\,.
    \end{equation*}
    \end{enumerate} 
Next, we also characterize the variance in the Hessians.
\begin{enumerate}[leftmargin=26pt]    \vspace*{-2mm}
  \myitem{A2}\textbf{$\delta$-BHV} or bounded Hessian variance: Almost surely, the loss function of any client $i$ satisfies
    \label{asm:hessian-similarity} 
    \begin{equation*}\label{eqn:hessian-similarity}
    \norm{\nabla^2 f_i(\xx; \zeta) - \nabla^2 f(\xx)} \leq  \delta\,,\ \forall \xx\,.
    \end{equation*}
  \end{enumerate} \vspace*{-2mm}
 This is in contrast to the usual smoothness assumption that can be stated as: \vspace*{-2mm}
 \begin{enumerate}[leftmargin=31pt]    
  \myitem{A2*}\textbf{$L$-smooth}: $\norm{\nabla^2 f_i(\xx; \zeta)} \leq  L\,,\ \forall \xx\,,$ a.s. for any $i$.
    \label{asm:main-smoothness}
  \end{enumerate} \vspace*{-2mm}
  Note that if $f_i(\xx; \zeta)$ is $L$-smooth then \eqref{asm:hessian-similarity} is satisfied with $\delta \leq 2L$, and hence \eqref{asm:hessian-similarity} is \emph{weaker} than \eqref{asm:main-smoothness}. In realistic examples we expect the clients to be similar and hence that $\bm{\delta \lll L}$.
In addition, we assume that $f(\xx)$ is bounded from below by $f^\star$ and is $L$-smooth, as is standard.


\section{Mime framework}
In this section we describe how to adapt an arbitrary centralized optimizer (referred to as the ``base'' algorithm) which may have internal state (e.g. momentum in SGD)
to the federated learning problem \eqref{eqn:problem} while ensuring there is no client-drift.
Algorithm~\ref{alg:mime} describes our framework. We develop two variants, \mime and \mimelite, which consist of three components i) a base algorithm we are seeking to mimic, ii) how we compute the global (server) optimizer state, and iii) the local client updates.

\setlength{\intextsep}{0pt}
\setlength{\columnsep}{8pt}
\begin{wrapfigure}{Lt}{0.6\textwidth}
\begin{minipage}{0.6\textwidth}
\begin{algorithm}[H] 
  \caption{\mimebox{\textbf{Mime}} and \litebox{\textbf{MimeLite}}}\label{alg:mime}
\begin{algorithmic}
    \STATE \textbf{input:} initial $\xx$ and $\ss$, learning rate $\eta$ and base algorithm $\cB=(\cU, \cV)$
    \FOR {each round $t = 1, \cdots, T$}
        \STATE sample subset $\cS$ of clients
        \STATE \textbf{communicate} $(\xx, \ss)$ to all clients $i \in \cS$
        \STATE\hspace{-2mm} \mimebox{\textbf{communicate} $\cc \leftarrow \frac{1}{\abs{\cS}} \sum_{j \in \cS}\nabla f_j(\xx)$ (only Mime)}
        \ONCLIENT{\tikzmk{A} $i \in \cS $}
            \STATE initialize local model $\yy_i \leftarrow \xx$
            \FOR {$k = 1, \cdots, K$}
                \STATE sample mini-batch $\zeta$ from local data
                \STATE \mimebox{$\gg_i \leftarrow \nabla f_i(\yy_i; \zeta) - \nabla f_i(\xx; \zeta) + \cc$ \ (\textbf{Mime})}
                \STATE \litebox{$\gg_i \leftarrow \nabla f_i(\yy_i; \zeta)$ \ (\textbf{MimeLite})}
                \STATE update $\yy_i \leftarrow \yy_i - \eta \cU(\gg_i, \ss)$
            \ENDFOR
            \STATE compute full local-batch gradient $\nabla f_i(\xx)$
            \STATE \textbf{communicate} $(\yy_i, \nabla f_i(\xx))$
        \ENDON
        \tikzmk{B}
        \STATE $\ss \leftarrow \cV\rbr*{\tfrac{1}{\abs{\cS}} \tsum_{i \in \cS} \nabla f_i(\xx),\ \ss}$ \ (update optimizer state)
        \STATE $\xx \leftarrow \frac{1}{\abs{\cS}} \sum_{i \in \cS} \yy_i$ \ (update server parameters)
  \ENDFOR
\end{algorithmic}
\end{algorithm}
\end{minipage}
\vspace{-17mm}
\end{wrapfigure}

\paragraph{Base algorithm.}
We assume the centralized base algorithm we are imitating can be decomposed into two steps: an \emph{update step} $\cU$ which updates the parameters $\xx$, and a \emph{optimizer state update step} $\cV(\cdot)$ which keeps track of global optimizer state $\ss$. Each step of the base algorithm $\cB=(\cU, \cV)$ uses a gradient $\gg$ to update the parameter $\xx$ and the optimizer state $\ss$ as follows:
\begin{equation}\tag{\base{\sc Alg}}\label{eqn:base-alg}
  \begin{split}
    \xx &\leftarrow \xx - \eta\,\cU(\gg, \ss)\,,\\
    \ss &\leftarrow \cV(\gg, \ss)\,.
  \end{split}
\end{equation}
As an example, consider SGD with momentum (SGDm). The state in SGDm is the momentum $\mm_t$. SGDm uses the following update steps:
\begin{equation*}
  \begin{split}
    \xx_{t} &= \xx_{t-1} - \eta\, ((1-\beta)\nabla f_i(\xx_{t-1}) +\beta\mm_{t-1})\,,\\
    \mm_{t} &= (1-\beta)\nabla f_i(\xx_{t-1}) +\beta\mm_{t-1}\,.
  \end{split}
\end{equation*}
Thus, SGDm can be represented in the above generic form with $\cU(\gg, \ss) = (1-\beta)\gg +\beta\ss$ and $\cV(\gg, \ss) = (1-\beta)\gg +\beta\ss$. Table~\ref{tab:add-alg-details} in Appendix shows how other algorithms like Adam, Adagrad, etc. can be represented in this manner. We keep the update $\cU$ to be linear in the gradient $\gg$, whereas $\cV$ can be more complicated.
 This implies that while the parameter update step $\cU$ is relatively resilient to receiving a biased gradient $\gg$ while $\cV$ can be much more sensitive. 

\paragraph{Compute optimizer state globally, apply locally.}
When updating the optimizer state of the base algorithm, we use only the gradient computed at the server parameters. Further, they remain fixed throughout the local updates of the clients. This ensures that these optimizer state remain unbiased 
and representative of the global function $f(\cdot)$. At the end of the round, the server performs
\begin{align}
  & \ss \leftarrow \cV\rbr*{\tfrac{1}{\abs{\cS}} \tsum_{i \in \cS} \nabla f_i(\xx),\ \ss}\,,  \nonumber\\ 
  & \nabla f_i(\xx) = \tfrac{1}{n_i}\tsum_{\nu=1}^{n_i}\nabla f_i(\xx; \zeta_{i,\nu})\,. \tag{\sc OptState}
\end{align}
Note that we use full-batch gradients computed at the server parameters $\xx$, not client parameters $\yy_i$.
\paragraph{Local client updates.}
Each client $i \in \cS$ performs $K$ updates using $\cU$ of the base algorithm and a minibatch gradient. There are two variants possible corresponding to \mimebox{\mime} and \litebox{\mimelite} differentiated using colored boxes. Starting from $\yy_i \leftarrow \xx$, repeat the following $K$ times
\begin{align}
    \yy_i &\leftarrow \yy_i - \eta \cU(\gg_i, \ss) \tag{\sc CltStep} 
\end{align}
where \litebox{$\gg_i \leftarrow \nabla f_i(\yy_i; \zeta)$} for \mimelite, and  \mimebox{$\gg_i \leftarrow \nabla f_i(\yy_i; \zeta) - \nabla f_i(\xx; \zeta) + \tfrac{1}{\abs{\cS}} \tsum_{j \in \cS} \nabla f_j(\xx)$} for \mime. \mimelite simply uses the local minibatch gradient whereas \mime uses an SVRG style correction~\cite{johnson2013accelerating}. This is done to reduce the noise from sampling a local mini-batch. While this correction yields faster rates in theory (and in practice for convex problems), in deep learning applications we found that \mimelite closely matches the performance of \mime. 

Finally, there are two modifications made in practical FL: we weight all averages across the clients by the number of datapoints $n_i$ \cite{mcmahan2017communication}, and we perform $K$ epochs instead of $K$ steps \cite{wang2020tackling}. 


\section{Theoretical analysis of Mime}\label{sec:convergence}

Table~\ref{tab:convergence} summarizes the rates of \mime (highlighted in blue) and \mimelite (highlighted in green) and compares them to \serveronly methods when using SGD, Adam and momentum methods as the base algorithms. We will first examine the convergence of \mime and \mimelite with a generic base optimizer and show that its properties are preserved in the federated setting. We then examine a specific momentum based base optimizer, and prove that Mime and MimeLite can be asymptotically faster than the best server-only method. This is the first result to prove the usefulness of local steps and demonstrate asymptotic speed-ups.

\vspace{-2mm}
\subsection{Convergence with a generic base optimizer}\vspace{-1mm}
We will prove a generic reduction result demonstrating that if the underlying base algorithm converges, and is robust to slight perturbations, then \mime and \mimelite also preserve the convergence of the algorithm when applied to the federated setting with additinoal local steps. 
\begin{theorem}\label{thm:reduction}
  Suppose that we have $G^2$ inter-client gradient variance \eqref{asm:heterogeneity}, $L$-smooth $\{f_i\}$ \eqref{asm:main-smoothness}, and $\sigma^2$ intra-client gradient variance \eqref{asm:noise}. Further, suppose that the updater $\cU$ of our base-optimizer $\cB=(\cU, \cV)$ satisfies i) linearity: $\cU(\gg_1 + \gg_2) = \cU(\gg_1) + \cU(\gg_2)$, and ii) Lipschitzness: $\norm{\cU(\gg)} \leq B\norm{\gg}$ for some $B \geq 0$. Then, running \mimebox{\mime} or \litebox{\mimelite} with $K$ local updates and step-size $\eta$ is equivalent to running a \textbf{centralized} algorithm with step-size $\tilde\eta := K\eta \leq \frac{1}{2LB}$, and updates
 \begin{align*}
    \xx_t &\leftarrow \xx_{t-1} - \tilde\eta \,\cU(\gg_t + \text{\badbox{$\ee_t$}}, \ss_{t-1})\,, \text{ and } \\
    \ss_{t} &\leftarrow \cV(\gg_{t}, \ss_{t-1})\,, \text{ where we have}
 \end{align*}
 $\E_t[\gg_t] = \nabla f(\xx_{t-1})$, $\E_t\norm{\gg_t - \nabla f(\xx_{t-1})}^2 \leq G^2/S$, and  
\[
\tfrac{1}{B^2L^2\tilde\eta^2}\E_t\norm{\text{\badbox{$\ee_t$}}}^2 \leq 
    \begin{cases}
        \E_t\norm{\gg_t}^2 & \text{\mimebox{\mime}\,,}\\
        \E_t\norm{\gg_t}^2 + G^2 + \frac{\sigma^2}{K} &\text{\litebox{\mimelite}.}
    \end{cases}
\]
\end{theorem}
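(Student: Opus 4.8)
The plan is to use the \textbf{linearity of $\cU$} to collapse the $K$ inner steps and the outer client-averaging into a single ``centralized'' update, and then to split the resulting effective gradient into an unbiased part and a drift-induced error. Writing the local iterates of client $i$ as $\yy_i^{(0)} = \xx_{t-1}$ and $\yy_i^{(k)} = \yy_i^{(k-1)} - \eta\,\cU(\gg_i^{(k)}, \ss_{t-1})$, linearity gives $\yy_i^{(K)} = \xx_{t-1} - \eta\,\cU\bigl(\tsum_{k=1}^K \gg_i^{(k)}, \ss_{t-1}\bigr)$; averaging over $i \in \cS$ and using linearity once more,
\[
  \xx_t = \xx_{t-1} - \tilde\eta\,\cU\bigl(\bar\gg_t, \ss_{t-1}\bigr), \qquad \bar\gg_t := \tfrac{1}{K\abs{\cS}}\tsum_{i\in\cS}\tsum_{k=1}^K \gg_i^{(k)},\quad \tilde\eta := K\eta .
\]
I would then split $\bar\gg_t = \gg_t + \ee_t$, where $\gg_t := \tfrac{1}{\abs{\cS}}\tsum_{i}\nabla f_i(\xx_{t-1})$ is the gradient the method \emph{would} produce if no client ever drifted from $\xx_{t-1}$ (for \mime this coincides exactly with $\cc$ thanks to the SVRG correction), and $\ee_t$ is the residual drift. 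Unbiasedness $\E_t[\gg_t] = \nabla f(\xx_{t-1})$ is immediate from the definition of $f$, and since the $S$ clients are drawn i.i.d.\ from $\cD$, \eqref{asm:heterogeneity} yields $\E_t\norm{\gg_t - \nabla f(\xx_{t-1})}^2 \le G^2/S$.

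Next I would bound the client drift $d_k^{(i)} := \norm{\yy_i^{(k)} - \xx_{t-1}}$. Lipschitzness $\norm{\cU(\gg)} \le B\norm{\gg}$ together with the a.s.\ $L$-smoothness \eqref{asm:main-smoothness} of each $f_i(\cdot;\zeta)$ gives $\norm{\gg_i^{(j)}} \le L\,d_{j-1}^{(i)} + (\text{gradient magnitude at }\xx_{t-1})$, which produces the linear recursion $d_k \le (1+\eta LB)\,d_{k-1} + \eta B\,(\text{gradient magnitude at }\xx_{t-1})$. A discrete Grönwall / geometric-sum estimate then controls $d_k$; the hypothesis $\tilde\eta = K\eta \le \tfrac{1}{2LB}$ is precisely what keeps the amplification factor $(1+\eta LB)^{K} \le e^{1/2}$ bounded, so that $d_k \lesssim \tilde\eta B \times (\text{gradient magnitude at }\xx_{t-1})$. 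Applying smoothness termwise, $\norm{\ee_t} \le \tfrac{L}{K\abs{\cS}}\tsum_{i,k} d_{k-1}^{(i)}$, so squaring and inserting the drift bound transfers exactly the $\tilde\eta^2 B^2 L^2$ prefactor onto $\E_t\norm{\ee_t}^2$, which is the origin of the $\tfrac{1}{B^2L^2\tilde\eta^2}$ normalization in the statement.

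The two variants then diverge, and this is where the \textbf{main obstacle} lies. For \mime the relevant gradient magnitude is the \emph{deterministic} $\cc = \gg_t$, so the pathwise drift bound immediately gives $\tfrac{1}{B^2L^2\tilde\eta^2}\E_t\norm{\ee_t}^2 \le \E_t\norm{\gg_t}^2$. For \mimelite the per-step gradient is the \emph{raw} stochastic gradient $\nabla f_i(\yy_i;\zeta)$, which I would decompose at $\xx_{t-1}$ into a systematic part $\nabla f_i(\xx_{t-1})$ and a fresh mean-zero fluctuation. The systematic part accumulates \emph{coherently} across the $K$ steps and, after averaging over the sampled clients via \eqref{asm:heterogeneity}, contributes the $\E_t\norm{\gg_t}^2 + G^2$ terms (using $\E_{i}\norm{\nabla f_i(\xx_{t-1})}^2 \le \norm{\nabla f(\xx_{t-1})}^2 + G^2 \le \E_t\norm{\gg_t}^2 + G^2$). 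The crux is the fluctuation part: because the per-step minibatches are independent and conditionally mean-zero, invoking linearity of $\cU$ recasts the accumulated noise as a martingale sum whose \emph{variances} add (rather than magnitudes), so it accumulates \emph{incoherently} and is suppressed by a factor $1/K$, producing the $\sigma^2/K$ term via \eqref{asm:noise}. The delicate point is to unroll this \emph{stochastic} recursion in the second moment $\E_t[(d_k^{(i)})^2]$, keeping the coherent and martingale components separate rather than bounding magnitudes term-by-term, since a naive Cauchy--Schwarz bound would destroy the cancellation and yield a spurious $\sigma^2$ instead of $\sigma^2/K$.
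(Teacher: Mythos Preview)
Your proposal is correct and follows essentially the same route as the paper: collapse the local steps via linearity of $\cU$ to extract $\gg_t = \tfrac{1}{S}\sum_i \nabla f_i(\xx_{t-1})$ plus a drift error, then control the drift $\E\norm{\yy_{i,k}-\xx_{t-1}}^2$ by a one-step recursion whose amplification stays bounded under $K\eta \le \tfrac{1}{2LB}$. The only notable technical differences are that the paper (i) assumes $K$ is a whole number of local epochs so that $\tfrac{1}{K}\sum_k \nabla f_i(\xx_{t-1};\zeta_{i,k}) = \nabla f_i(\xx_{t-1})$ exactly, which cleanly removes the ``noise at $\xx_{t-1}$'' term from $\ee_t$, and (ii) obtains the $\sigma^2/K$ savings not by summing a martingale globally but by splitting mean and variance \emph{inside each recursion step} (i.e.\ $\E\norm{\yy_{i,k}-\xx}^2 \le \E\norm{\yy_{i,k-1} - \eta\,\cU(\nabla f_i(\yy_{i,k-1});\ss) - \xx}^2 + B^2\eta^2\sigma^2$ first, then Young's inequality with parameter $K-1$ on the conditional-mean part), which sidesteps the coupling issue you flag in your last sentence.
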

Here, we have proven that \mime and \mimelite truly mimic the centralized base algorithm with very small perturbations---the magnitude of $\ee_t$ is $\cO(\tilde\eta^2)$. The key to the result is the linearity of the parameter update step $\cU(\,\cdot\,)$. By separating the base optimizer into a very simple parameter step $\cU$ and a more complicated optimizer state update step $\cV$, we can ensure that commonly used algorithms such as momentum, Adam, Adagrad, and others all satisfy this property. Armed with this general reduction, we can easily obtain specific convergence results.

\begin{table*}[!t]
  \caption{Number of communication rounds required to reach $\norm{\nabla f(\xx)}^2 \leq \e $ (log factors are ignored) with $S$ clients sampled each round.
   All analyses except \scaffold assume $G^2$ bounded gradient dissimilarity~\eqref{asm:heterogeneity}. All analyses assume $L$-smooth losses, except MimeLiteMVR and MimeMVR, which only assume $\delta$ bounded Hessian dissimilarity~\eqref{asm:hessian-similarity}. Convergence of SCAFFOLD depends on the total number of clients $N$ which is potentially infinite. \fedavg and \litebox{\mimelite} are slightly slower than the server-only methods due to additional \badbox{drift} terms in most cases. \mimebox{\mime} is the fastest and either matches or improves upon the optimal statistical rates (first term in the rates). In fact, MimeMVR and MimeLiteMVR beat lower bounds for any server-only method when $\delta \lll L$.
    }
  \centering
  \begin{threeparttable}
  \begin{tabular*}{\textwidth}{@{}l@{\extracolsep{\fill}}>{\large}l>{\large}l@{\extracolsep{\fill}}>{\small}r@{}}
  \toprule
    Algorithm  & \normalsize{Non-convex}  & \normalsize{$\mu$-PL inequality} \\
    \midrule
    {\sc Scaffold}\tnote{a} \cite{karimireddy2019scaffold} & $\rbr*{\frac{N}{S}}^{\frac{2}{3}}\frac{L}{\e}$  & $\frac{N}{S} + \frac{L}{\mu}$ \\ 
    \cmidrule{2-4}
    {\sc SGD}\vspace{-1.5mm}\\
        \qquad \serveronly \cite{ghadimi2013stochastic} &  $\frac{L G^2}{S\e^2} + \frac{L}{\e}$  & $\frac{G^2}{\mu S\e} + \frac{L}{\mu}$ \\
        \qquad\tikzmk{A} MimeLiteSGD$\equiv$\badbox{FedSGD}~\tnote{c} & $\frac{L G^2}{S\e^2}+ \badbox{$\frac{L^2 G}{\e^{3/2}}$} + \frac{L}{\e}$  & $\frac{G^2}{\mu S\e} + \badbox{$\frac{L G}{\mu \sqrt{\e}}$} + \frac{L}{\mu}$  &\tikzmk{B}\highlight{mygreen}\vspace{-0.2mm}\\
        \qquad\tikzmk{A} MimeSGD & $\frac{LG^2}{S\e^2} + \frac{L}{\e}$ & $\frac{G^2}{\mu S\e} + \frac{L}{\mu}$ & \tikzmk{B}\highlight{myblue} \\
    \cmidrule{2-4}
    {\sc Adam} \vspace{-1.5mm}\\
      \qquad \serveronly~\cite{zaheer2018adaptive}\tnote{b} &  $\frac{L}{\e - G^2/S}$  & --  \vspace{1mm} \\
      \qquad\tikzmk{A} MimeLiteAdam\tnote{b}\tnote{c} & $\frac{L\sqrt{S}}{\e - G^2/S}$ & --  & \tikzmk{B}\highlight{mygreen}\vspace{1mm}\\
      \qquad\tikzmk{A} MimeAdam\tnote{b} & $\frac{L}{\e - G^2/S}$  & --  &\tikzmk{B}\highlight{myblue}\\
    \cmidrule{2-4}
    Momentum Variance Reduction (MVR) \vspace{-1mm}\\
      \qquad \serveronly \cite{cutkosky2019momentum}   & $\frac{L G}{\sqrt{S}\e^{3/2}} + \frac{L}{\e}$  & --  \vspace{1mm}\\
        \qquad\tikzmk{A} MimeLiteMVR\tnote{d} & $\frac{\delta (G + \sigma)}{\e^{3/2}} + \frac{G^2 + \sigma^2}{\e} + \frac{\delta}{\e}$  & -- & \tikzmk{B}\highlight{mygreen}\vspace{1.3mm}\\
        \qquad\tikzmk{A} \textbf{MimeMVR}\tnote{d} & $\frac{\delta G}{\sqrt{S}\e^{3/2}} + \frac{G^2}{S \e} + \frac{\delta}{\e}$  & -- & \tikzmk{B}\highlight{myblue}\\
    \cmidrule{1-4}
    \serveronly lower bound \cite{arjevani2019lower} & $\Omega\rbr[\big]{\frac{L G}{\sqrt{S}\e^{3/2}} + \frac{G^2}{S\e} + \frac{L}{\e}}$ & $\Omega\rbr[\big]{\frac{G^2}{S\e}}$  \\
    \bottomrule
  \end{tabular*}
  \begin{tablenotes}
    \item[a] Num. clients ($N$) can be same order as num. total rounds or even $\infty$, making the bounds vacuous.
    \item[b] Adam requires large batch-size $S \geq G^2/\e$ to converge \cite{reddi2018convergence,zaheer2018adaptive}. Convergence of FedAdam with client sampling is unknown (\cite{reddi2020adaptive} only analyze with full client participation).
    \item[c] Requires $K \geq \sigma^2/ G^2$ number of local updates. Typically, intra-client variance is small ($\sigma^2 \lesssim G^2$).
    \item[d] Requires $K \geq L/\delta$ number of local updates. Faster than the lower bound (and hence any \serveronly algorithm) when $\delta \lll L$ i.e. our methods can take advantage of Hessian similarity, whereas \serveronly methods cannot. In worst case, $\delta \approx L$ and all methods are comparable.
  \end{tablenotes}
  \end{threeparttable}
  \label{tab:convergence}\vspace{-5mm}
  \end{table*}

\begin{corollary}[(Mime/MimeLite) with SGD]\label{cor:sgd}
Given that the conditions in Theorem~\ref{thm:reduction} are satisfied, let us run $T$ rounds with $K$ local steps using SGD as the base optimizer and output $\xx^{\text{out}}$. This output satisfies $\expect\norm{\nabla f(\xx^{\text{out}})}^2 \leq \e$ for $F := f(\xx_0) - f^\star$, $\tilde G^2 := G^2 + \sigma^2/K$ and
\begin{itemize}[topsep=0pt,itemsep=-1ex,partopsep=0ex,parsep=0ex]
            \item \textbf{$\mu$-PL inequality:} $\eta = \tilde\cO\rbr[\big]{\frac{1}{\mu K T} }$, and \vspace{-2mm}
            \[
              T = 
              \begin{cases}
                \tilde\cO\rbr[\Big]{\frac{L G^2}{\mu S \e} + \frac{LF}{\mu}\log\rbr[\big]{\frac{1}{\e}}} &\text{\mimebox{\mime}}\,,\\
                \tilde\cO\rbr[\Big]{\frac{L \tilde G^2}{\mu S \e}+ \frac{L \tilde G}{\mu\sqrt{\e}} + \frac{LF}{\mu}\log\rbr[\big]{\frac{1}{\e}}} &\text{\litebox{\mimelite}}\,.
              \end{cases}
            \]
        \item \textbf{Non-convex:} for $\eta = \cO\rbr[\big]{\sqrt{\frac{F S}{L \tilde G^2 T K^2}}}$, and\vspace{-2mm}
        \[
         T =
         \begin{cases}
            \cO\rbr[\Big]{ \frac{L G^2 F}{S\e^2} + \frac{L F}{\e}}&\text{\mimebox{\mime}}\,,\\
            \cO\rbr[\Big]{ \frac{L \tilde G^2 F}{S\e^2}+ \frac{L^2 \tilde G F}{\e^{3/2}} + \frac{L F}{\e}}&\text{\litebox{\mimelite}}\,.
         \end{cases}
       \]
       \vspace{-3mm}
       \end{itemize}
\end{corollary}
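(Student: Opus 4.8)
The plan is to derive Corollary~\ref{cor:sgd} directly from the reduction of Theorem~\ref{thm:reduction}. For the SGD base optimizer we have $\cU(\gg,\ss)=\gg$, so $B=1$ and $\cV$ is trivial, and the theorem collapses both \mime and \mimelite into a single \emph{centralized, biased} SGD recursion $\xx_t \leftarrow \xx_{t-1} - \tilde\eta(\gg_t + \ee_t)$ with $\tilde\eta = K\eta \le \tfrac{1}{2L}$. Here $\gg_t$ is unbiased, $\E_t[\gg_t]=\nabla f(\xx_{t-1})$ with $\E_t\norm{\gg_t - \nabla f(\xx_{t-1})}^2 \le G^2/S$, and the perturbation obeys $\E_t\norm{\ee_t}^2 \le L^2\tilde\eta^2\,\E_t\norm{\gg_t}^2$ for \mime and $\E_t\norm{\ee_t}^2 \le L^2\tilde\eta^2\bigl(\E_t\norm{\gg_t}^2 + \tilde G^2\bigr)$ for \mimelite, where $\tilde G^2 = G^2 + \sigma^2/K$. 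Thus the whole corollary reduces to analyzing SGD on an $L$-smooth $f$ with a small, gradient-correlated bias, and the only genuine work is tracking how $\ee_t$ propagates.

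First I would write the standard $L$-smoothness descent step, $\E_t[f(\xx_t)] \le f(\xx_{t-1}) - \tilde\eta\inp{\nabla f(\xx_{t-1})}{\E_t[\gg_t+\ee_t]} + \tfrac{L\tilde\eta^2}{2}\E_t\norm{\gg_t+\ee_t}^2$, substitute $\E_t[\gg_t]=\nabla f(\xx_{t-1})$, and peel off the bias $\E_t[\ee_t]$ via Young's inequality together with $\norm{\E_t[\ee_t]}^2\le\E_t\norm{\ee_t}^2$. The crucial observation is that $\E_t\norm{\ee_t}^2 = \cO(\tilde\eta^2)$, so for $\tilde\eta = \cO(1/L)$ every $\ee_t$-contribution that multiplies $\norm{\nabla f}^2$ or $\E_t\norm{\gg_t}^2$ is higher order and can be reabsorbed, leaving a clean recursion $\E_t[f(\xx_t)] \le f(\xx_{t-1}) - c\tilde\eta\norm{\nabla f(\xx_{t-1})}^2 + \cO\bigl(L\tilde\eta^2 G^2/S\bigr) + \mathcal{R}_t$. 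For \mime the residual $\mathcal{R}_t$ is controlled entirely by $\E_t\norm{\gg_t}^2$ and so contributes nothing beyond the existing variance term; for \mimelite the irreducible $\tilde G^2$ piece of $\E_t\norm{\ee_t}^2$ survives as a nonvanishing bias floor of order $L^2\tilde\eta^3\tilde G^2$, which is precisely the source of the extra \badbox{drift} terms.

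With the per-step recursion in hand the two regimes are routine. In the non-convex case I would telescope over $t=1,\dots,T$, divide by $\tilde\eta T$ to bound $\tfrac1T\sum_t\E\norm{\nabla f(\xx_{t-1})}^2$, and tune $\tilde\eta$ (subject to $\tilde\eta=\cO(1/L)$) to balance the resulting terms $\tfrac{F}{\tilde\eta T}$, $L\tilde\eta G^2/S$, and—for \mimelite only—the drift floor; reading off the $T$ needed to reach $\e$ gives the stated $\tfrac{L\tilde G^2 F}{S\e^2}+\tfrac{LF}{\e}$ for \mime, with the additional $\e^{-3/2}$ term for \mimelite. In the $\mu$-PL case I would instead invoke $\tfrac12\norm{\nabla f(\xx)}^2 \ge \mu(f(\xx)-f^\star)$ to convert the recursion into a contraction $\E[f(\xx_t)-f^\star]\le(1-c\mu\tilde\eta)\,\E[f(\xx_{t-1})-f^\star] + (\text{floor})$ and unroll it with the standard tuned-step-size argument $\eta=\tilde\cO(1/(\mu K T))$, i.e.\ $\tilde\eta=\tilde\cO(1/(\mu T))$, producing the geometric $\tfrac{LF}{\mu}\log(1/\e)$ term plus the $\tfrac{G^2}{\mu S\e}$ statistical term (and, for \mimelite, the extra $\tfrac{L\tilde G}{\mu\sqrt\e}$).

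I expect the main obstacle to be the constant-level bookkeeping in the descent step: one must pick the step-size threshold and the Young's-inequality weights so that, after all $\ee_t$ terms are expanded, the coefficient of $\norm{\nabla f(\xx_{t-1})}^2$ stays strictly negative, while cleanly isolating the $\tilde G^2$ floor that separates \mimelite from \mime. Everything downstream—telescoping, step-size optimization, and the PL contraction—is standard, so the reduction together with this single careful inequality carries the entire corollary.
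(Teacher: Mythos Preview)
Your proposal is correct and follows essentially the same route as the paper: start from the reduction of Theorem~\ref{thm:reduction} with $B=1$, run the standard $L$-smoothness descent inequality, use Young's inequality and the step-size constraint $\tilde\eta=\cO(1/L)$ to absorb the $\cO(\tilde\eta^2)$ perturbation $\ee_t$ into the $\norm{\nabla f}^2$ term, and then telescope (non-convex) or contract via PL; the only distinction between \mime and \mimelite is exactly the surviving $L^2\tilde\eta^3\tilde G^2$ floor you identify. The paper's PL argument weights the per-round inequality by $(1-\mu\tilde\eta/8)^{T-t}$ and outputs a randomly sampled iterate rather than unrolling a pure function-value contraction, but this is a cosmetic difference yielding the same rates.
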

If we take a sufficient number of local steps $K \geq G^2/\sigma^2$, then we have $\tilde G = \cO(G)$ in the above rates. On comparing with the rates in Table~\ref{tab:convergence} for \serveronly SGD, we see that \mime exactly matches its rates. \mimelite matches the asymptotic term but has a few higher order terms. Note that when using SGD as the base optimizer, \mimelite becomes exactly the same as \fedavg and hence has the same rate of convergence.
\begin{corollary}[(Mime/MimeLite) with Adam]\label{cor:adam}
Suppose that the conditions in Theorem~\ref{thm:reduction} are satisfied, and further $|\nabla_j f_i(\xx)|\leq H$ for any coordinate $j \in [d]$. Then let us run $T$ rounds using Adam as the base optimizer with $K$ local steps, $\beta_1 =0$, $\varepsilon_0 > 0$, $\eta \leq \varepsilon_0^2/ KL(H+\varepsilon_0)$, and any $\beta_2 \in [0,1)$. Output $\xx^{\text{out}}$ chosen randomly from $\{\xx_1,\dots\xx_T\}$ satisfies $\expect\norm{\nabla f(\xx^{\text{out}})}^2 \leq \e$ for \vspace{-1mm}
\[
 T =
 \begin{cases}
    \cO\rbr[\Big]{\frac{L F(H +\varepsilon_0)^2}{\varepsilon_0^2 (\e - \tilde G^2/S)}}&\textnormal{\mimebox{\mime Adam}}\,,\vspace{2mm}\\
     \cO\rbr[\Big]{\frac{L F(H +\varepsilon_0)^2\sqrt{S}}{\varepsilon_0^2(\e - \tilde G^2/S)}}&\textnormal{\litebox{\mimelite Adam}}\,.
 \end{cases}
\vspace{-3mm}
\]
where $F := f(\xx_0) - f^\star$, $\tilde G^2 := G^2 + \sigma^2/K$.
\end{corollary}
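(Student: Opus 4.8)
\emph{Proof proposal.} The plan is to invoke the reduction of Theorem~\ref{thm:reduction} and then run the standard non-convex analysis of adaptive methods on the resulting centralized recursion. First I would check that Adam with $\beta_1 = 0$ meets the theorem's hypotheses. Writing the second-moment state as $\vv$, Adam's parameter update is $\cU(\gg, \ss) = \mD(\ss)\gg$ with $\mD(\ss) = \diag\big(1/(\sqrt{\vv} + \varepsilon_0)\big)$; this is linear in $\gg$ for fixed $\ss$ and obeys $\norm{\cU(\gg,\ss)} \leq \norm{\gg}/\varepsilon_0$, so the Lipschitz condition holds with $B = 1/\varepsilon_0$. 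The reduction then certifies that one round of Mime/MimeLite-Adam equals one step of centralized Adam at effective step size $\tilde\eta = K\eta$, driven by a stochastic gradient $\gg_t$ with $\E_t[\gg_t] = \nabla f(\xx_{t-1})$ and variance at most $G^2/S$, plus a perturbation $\ee_t$ whose squared norm is at most $B^2 L^2\tilde\eta^2$ times $\E_t\norm{\gg_t}^2$ for Mime and $\E_t\norm{\gg_t}^2 + G^2 + \sigma^2/K$ for MimeLite. The effective second-moment noise entering the analysis I would bound, uniformly and conservatively for both variants, by $\tilde G^2/S$. Note that $\tilde\eta \leq \varepsilon_0^2/(L(H+\varepsilon_0))$ is more restrictive than the reduction's requirement $\tilde\eta \leq 1/(2LB) = \varepsilon_0/(2L)$, so the reduction indeed applies.

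Next I would build a preconditioned descent inequality. The coordinate bound $\abs{\nabla_j f_i(\xx)} \leq H$ forces every coordinate of $\vv_t$ below $H^2$, hence $\tfrac{1}{H+\varepsilon_0}\mI \mleq \mD_t \mleq \tfrac{1}{\varepsilon_0}\mI$. Applying $L$-smoothness of $f$ to $\xx_t = \xx_{t-1} - \tilde\eta\,\mD_t(\gg_t + \ee_t)$ gives
\begin{align*}
  f(\xx_t) \leq f(\xx_{t-1}) - \tilde\eta\inp{\nabla f(\xx_{t-1})}{\mD_t(\gg_t + \ee_t)} + \tfrac{L\tilde\eta^2}{2}\norm{\mD_t(\gg_t + \ee_t)}^2\,,
\end{align*}
which I would follow by taking the conditional expectation $\E_t[\cdot]$.

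The main obstacle is the linear term. Since $\mD_t$ is built from $\gg_t$ through the coordinatewise recursion $\vv_t = \beta_2\vv_{t-1} + (1-\beta_2)[\gg_t]^{2}$, one cannot write $\E_t[\mD_t\gg_t] = \E_t[\mD_t]\nabla f(\xx_{t-1})$, i.e.\ the adaptive preconditioner and the stochastic gradient are correlated. I would resolve this exactly as in the Yogi/Adam non-convex analysis of \cite{zaheer2018adaptive}: use the spectral lower bound $\mD_t \mgeq \tfrac{1}{H+\varepsilon_0}\mI$ to extract a clean descent term $\tfrac{\tilde\eta}{H+\varepsilon_0}\norm{\nabla f(\xx_{t-1})}^2$, and bound the mismatch between $\mD_t$ evaluated at $\gg_t$ and at $\nabla f(\xx_{t-1})$ by the gradient variance, which after optimizing the step size collapses into the advertised $\cO(\tilde G^2/S)$ neighborhood. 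The upper bound $\mD_t \mleq \tfrac{1}{\varepsilon_0}\mI$ controls the quadratic term by $\tfrac{L\tilde\eta^2}{2\varepsilon_0^2}\E_t\norm{\gg_t + \ee_t}^2$, and the $\ee_t$ contributions are $\cO(\tilde\eta^2)$ by the reduction, so the cap $\tilde\eta \leq \varepsilon_0^2/(L(H+\varepsilon_0))$ lets the quadratic block be absorbed into half of the descent term.

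Finally I would telescope over $t = 1,\dots,T$. Dividing by $T$ and by the descent coefficient $\tilde\eta/(H+\varepsilon_0)$, then substituting the largest admissible step size $\tilde\eta = \varepsilon_0^2/(L(H+\varepsilon_0))$ (which makes the $(H+\varepsilon_0)$ prefactors cancel and the variance error collapse to a clean neighborhood), yields
\begin{align*}
  \tfrac{1}{T}\tsum_{t=1}^{T}\E\norm{\nabla f(\xx_{t-1})}^2 \leq \tfrac{L(H+\varepsilon_0)^2 F}{\varepsilon_0^2\,T} + c\,\tfrac{\tilde G^2}{S}
\end{align*}
for an absolute constant $c$, where $F = f(\xx_0) - f^\star$. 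Since $\xx^{\text{out}}$ is drawn uniformly from $\{\xx_1,\dots,\xx_T\}$, its expected squared gradient equals this average; demanding the right-hand side be at most $\e$ (hence $\e > c\,\tilde G^2/S$) and solving for $T$ gives the stated MimeAdam rate. For MimeLite the only change is the larger perturbation bound with its $S$-independent $G^2 + \sigma^2/K$ piece, which does not benefit from client averaging; forcing this residual bias below the target either shrinks the admissible step size or adds a term that must be balanced against the descent, and carrying it through degrades the leading round count by a factor $\sqrt{S}$, yielding the MimeLiteAdam rate. I expect the decorrelation of $\mD_t$ from $\gg_t$ to be the only genuinely delicate step; everything else is bookkeeping on top of the reduction.
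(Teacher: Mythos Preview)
Your overall architecture—invoke the reduction, then run a non-convex Adam analysis on the resulting centralized recursion with preconditioner sandwiched between $\tfrac{1}{H+\varepsilon_0}\mI$ and $\tfrac{1}{\varepsilon_0}\mI$, telescope, and plug in the step size—matches the paper. But you have misread the algorithm at the point you call the ``main obstacle,'' and this changes the argument.

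In the reduction of Theorem~\ref{thm:reduction} (and in Algorithm~\ref{alg:mime}), the parameter update at round $t$ uses the \emph{previous} optimizer state $\ss_{t-1}$; only afterwards is $\ss_t = \cV(\gg_t,\ss_{t-1})$ formed. For Adam with $\beta_1=0$ this means the preconditioner applied to $\gg_t+\ee_t$ is $\mD_{t-1}=\diag\big(1/(\sqrt{\vv_{t-1}}+\varepsilon_0)\big)$, which is measurable with respect to the history before round $t$ and hence \emph{independent} of $\gg_t$ under $\E_t$. So the decorrelation difficulty you flag simply does not arise: one has $\E_t[\mD_{t-1}\gg_t]=\mD_{t-1}\nabla f(\xx_{t-1})$ directly, and the linear term yields a clean $-\tfrac{\tilde\eta}{H+\varepsilon_0}\norm{\nabla f(\xx_{t-1})}^2$ without invoking the \cite{zaheer2018adaptive} mismatch bound. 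The paper explicitly remarks that using $\vv^{t-1}$ rather than $\vv^t$ ``simplifies our proof significantly removing otherwise hard to handle stochastic dependencies.'' Your proposed workaround would still go through (independence is a special case of bounded correlation), but it obscures the structural reason the analysis is easy here and is not what the paper does.

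For \mimelite, your one-sentence explanation of the $\sqrt{S}$ loss is too vague to count as a proof step. Concretely: the error bound from the reduction has an $S$-free $G^2+\sigma^2/K$ term, so after multiplying by $L^2\tilde\eta^2/\varepsilon_0^2$ you need $\tilde\eta$ small enough that this term lands at the $\tilde G^2/S$ level; the paper enforces $\tilde\eta \leq \tfrac{\varepsilon_0^2}{12L(H+\varepsilon_0)\sqrt{S}}$, and substituting this smaller step size into the $F/(\tilde\eta T)$ term is exactly what produces the extra $\sqrt{S}$. You should state this explicitly rather than leave it as ``carrying it through.''
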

Note that here $\varepsilon_0$ represents a small positive parameter used in Adam for regularization, and is different from the accuracy $\epsilon$. Similar to the \serveronly analysis of Adam~\cite{zaheer2018adaptive}, we assume $\beta_1 =0$ and that batch size is large enough such that $S \geq G^2/\e$. A similar analysis can also be carried out for AdaGrad, and other novel variants of Adam \cite{liu2019variance}.

\subsection{Circumventing server-only lower bounds}
 The rates obtained above, while providing a safety-check, do not beat those of the \serveronly approach. The previous best rates for cross-device FL correspond to MimeLiteSGD which is $\cO(\frac{LG^2}{S\e^2} + \frac{L^2 G}{\e^{3/2}})$ \cite{khaled2020tighter,koloskova2020unified,woodworth2020minibatch}. While, using a separate server-learning rate can remove the effect of the second term \cite{karimireddy2019error}, this at best matches the rate of \serveronly SGD $\cO(\frac{LG^2}{S\e^2})$. This is significantly slower than simply using momentum based variance reduction (MVR) as in in the FL setting (\serveronly MVR) which has a communication complexity of $\cO(\frac{LG}{\sqrt{S}\e^{3/2}})$ \cite{cutkosky2019momentum}. Thus, even though the main reason for studying local-step methods was to improve the communication complexity, none thus far show such improvement. The above difficulty of beating \serveronly may not be surprising given the two sets of strong lower bounds known.

 \paragraph{Necessity of local steps.} Firstly, \cite{arjevani2019lower} show a gradient oracle lower bound of $\Omega(\frac{LG}{\sqrt{S}\e^{3/2}})$. This matches the complexity of MVR, and hence at first glance it seems that \serveronly MVR is optimal. However, the lower bound is really only on the number of gradients computed and not on the number of clients sampled (sample complexity) \cite{foster2019complexity}, or number of rounds of communication required. In particular, multiple local updates which increases number of gradients computed \emph{without needing additional communication} offers us a potential way to side-step such lower bounds. A careful analysis of the bias introduced as a result of such local steps is a key part of our analysis.
 
 \paragraph{Necessity of $\delta$-BHD.} A second set of lower bounds directly study the number of communication rounds required in heterogeneous optimization~\cite{arjevani2015communication,woodworth2020minibatch}. These results prove that there exist settings where local steps provide no advantage and \serveronly methods are optimal. This however contradicts real world experimental evidence~\cite{mcmahan2017communication}. As before, the disparity arises due to the contrived settings considered by the lower bounds. For distributed optimization (with full client participation) and convex quadratic objectives, $\delta$-BHD \eqref{asm:hessian-similarity} was shown to be a sufficient \cite{shamir2014communication,reddi2016aide} and \emph{necessary} \cite{arjevani2015communication} condition to circumvent these lower bounds and yield highly performant methods. We similarly leverage $\delta$-BHD \eqref{asm:hessian-similarity} to design novel methods which significantly extend prior results to i) all smooth non-convex functions (not just quadratics), and ii) cross-device FL with client sampling.
 
 We now state our convergence results with momentum based variance reduction (MVR) as the base-algorithm since it is known to be optimal in the \serveronly setting.
\begin{theorem}\label{thm:interpolation}
  For $L$-smooth $f$ with $G^2$ gradient dissimilarity \eqref{asm:heterogeneity}, $\delta$ Hessian dissimilarity \eqref{asm:hessian-similarity} and $F: = (f(\xx^0) - f^\star)$, let us run MVR as the base algorithm for $T$ rounds with $K \geq L/\delta$ local steps and generate an output $\xx^{\text{out}}$. This output satisfies $\expect\norm{\nabla f(\xx^{\text{out}})}^2 \leq \e$ for \vspace{-2mm}
       \begin{itemize}[topsep=0pt,itemsep=-1ex,partopsep=0ex,parsep=0ex]
       \item \mimebox{\textbf{MimeMVR}}: $\eta = \cO\rbr[\Big]{\min\rbr[\big]{\frac{1}{\delta K} \,, \rbr{\frac{S F}{G^2 T K^3}}^{1/3} }}$, momentum $\beta = 1 - \cO(\tfrac{\delta^2S^{2/3}}{(T G^2)^{2/3}})$, and \vspace{-2mm}
        \[
         T = \cO\rbr[\big]{\frac{\delta G F}{\sqrt{S} \e^{3/2}} + \frac{ G^2}{S\e} + \frac{\delta F}{\e}}\,.
       \]\vspace{-2mm}
       \item \litebox{\textbf{MimeLiteMVR}}: $\eta = \cO\rbr[\Big]{\min\rbr[\big]{\frac{1}{\delta K} \,, \rbr{\frac{ F}{\hat G^2 T K^3}}^{1/3} }}$, momentum $\beta = 1 - \cO(\tfrac{\delta^2}{(T\hat G^2)^{2/3}})$, and \vspace{-2mm}
        \[
         T = \cO\rbr[\big]{\frac{\delta \hat G F}{\e^{3/2}} + \frac{\hat G^2}{\e} + \frac{\delta F}{\e}}\,.
       \]
       \vspace{-3mm}
       \end{itemize}
   \end{theorem}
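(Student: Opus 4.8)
The plan is to combine the local-to-centralized reduction of Theorem~\ref{thm:reduction} with the potential-function analysis of momentum-based variance reduction \cite{cutkosky2019momentum}, but with two changes: every smoothness constant $L$ appearing in the variance-reduction step is replaced by the Hessian-dissimilarity constant $\delta$ of \eqref{asm:hessian-similarity}, and the effective gradient noise is taken to be $G^2/S$ coming from sampling $S$ clients under \eqref{asm:heterogeneity}. This $\delta$-for-$L$ substitution is exactly what pushes the leading term below the \serveronly lower bound once $\delta \lll L$.

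First I would establish the effective single-round dynamics. Since Theorem~\ref{thm:reduction} is proved under \eqref{asm:main-smoothness} whereas here we only have \eqref{asm:hessian-similarity}, I would re-derive its reduction tracking the Hessian dissimilarity. Writing the server momentum as $\mm_t$, the MVR parameter update is linear in the gradient, so the $K$ local steps of size $\eta$ collapse into one centralized step of effective size $\tilde\eta := K\eta$ along the perturbed direction $\mm_{t-1}+\ee_t$. The heart of this step is controlling the drift perturbation $\ee_t$: each local iterate $\yy_i$ enters the effective update only through gradient \emph{differences} $\nabla f_i(\yy_i;\zeta)-\nabla f_i(\xx;\zeta)$, which the SVRG correction of \mime re-centers and which \eqref{asm:hessian-similarity} bounds by $\delta\norm{\yy_i-\xx}$ rather than $L\norm{\yy_i-\xx}$. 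The constraint $K\ge L/\delta$ is what makes each per-step size obey $\eta\le \tfrac{1}{\delta K}\le \tfrac1L$, so every local step is stable with respect to ordinary $L$-smoothness (recall each $f_i$ is automatically $(L+\delta)$-smooth), while the \emph{aggregate} step $\tilde\eta$ is allowed to grow as large as $1/\delta$.

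Next I would track the momentum error $\ce_t := \mm_t-\nabla f(\xx_t)$. Sampling a fresh client set $\cS$ that is independent of $\xx_t$, the MVR correction gives a recursion $\ce_t = \beta\,\ce_{t-1} + (\text{mean-zero given the past})$, where the innovation splits into a fresh-sampling term of second moment $\lesssim (1-\beta)^2 G^2/S$ and a variance-reduced difference term. Applying \eqref{asm:hessian-similarity} to $\nabla f_i(\xx_t)-\nabla f_i(\xx_{t-1}) - (\nabla f(\xx_t)-\nabla f(\xx_{t-1}))$ bounds the latter by $\delta\norm{\xx_t-\xx_{t-1}} = \delta\tilde\eta\norm{\mm_{t-1}}$, yielding
\begin{equation*}
  \E\norm{\ce_t}^2 \le \beta^2\,\E\norm{\ce_{t-1}}^2 + \frac{2(1-\beta)^2 G^2}{S} + \frac{2\beta^2\delta^2\tilde\eta^2}{S}\,\E\norm{\mm_{t-1}}^2\,.
\end{equation*}
In parallel, the descent inequality for the $L$-smooth $f$, applied at the granularity of the local steps so that only the per-step stability $\eta\le 1/L$ is consumed (and not $L\tilde\eta\le \tfrac12$, which would fail since $\tilde\eta$ may reach $1/\delta\ggg 1/L$), gives
\begin{equation*}
  \E f(\xx_t) \le \E f(\xx_{t-1}) - \tfrac{\tilde\eta}{2}\E\norm{\nabla f(\xx_{t-1})}^2 - \tfrac{\tilde\eta}{2}\E\norm{\mm_{t-1}}^2 + \tfrac{\tilde\eta}{2}\E\norm{\ce_{t-1}}^2
\end{equation*}
up to the higher-order contribution of $\ee_t$, which \eqref{asm:hessian-similarity} renders $\cO(\tilde\eta^2)$.

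Finally I would combine these into a potential $\Phi_t := \E f(\xx_t) + \tfrac{\tilde\eta}{2(1-\beta)}\E\norm{\ce_t}^2$, choosing the weight so that the momentum contraction absorbs the $\norm{\ce}^2$ term of the descent lemma and the descent's $-\tfrac{\tilde\eta}{2}\norm{\mm}^2$ budget absorbs the $\delta^2\tilde\eta^2$ injection; this is where $\tilde\eta\le 1/\delta$ and the per-step $L$-stability are consumed, and it forces $1-\beta \gtrsim \delta^2\tilde\eta^2/S$. Telescoping $\Phi_t$ over $t=1,\dots,T$ leaves $\tfrac1T\sum_t\E\norm{\nabla f(\xx_{t-1})}^2 \lesssim \tfrac{F}{\tilde\eta T} + \tfrac{\norm{\ce_0}^2}{(1-\beta)T} + (1-\beta)\tfrac{G^2}{S}$; then, mirroring the \serveronly MVR tuning of \cite{cutkosky2019momentum} with $L$ replaced by $\delta$ and gradient variance $G^2/S$, the stated $\eta$ and $\beta$ jointly balance the three contributions so that solving for $\tfrac1T\sum_t\E\norm{\nabla f(\xx_{t-1})}^2\le\e$ produces $T=\cO\rbr[\big]{\tfrac{\delta G F}{\sqrt S\,\e^{3/2}}+\tfrac{G^2}{S\e}+\tfrac{\delta F}{\e}}$. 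The \mimelite bound follows identically after dropping the SVRG correction: without re-centering, the client-sampling and intra-client noise are no longer reduced by the $1/S$ averaging, so $G^2/S$ is replaced by $\hat G^2$ throughout, removing the $\sqrt S$ and $S$ savings. The main obstacle is the first/second step: proving that the bias from $K$ local steps scales with $\delta$ and not $L$, so that both $\ee_t$ and the variance-reduction term are governed by a second-order (Hessian) comparison, and reconciling the per-step $L$-stability with the aggregate $\delta$-scaling under $K\ge L/\delta$ — precisely the book-keeping that lets the rate fall below the \serveronly lower bound.
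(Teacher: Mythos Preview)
Your high-level strategy matches the paper's: run the MVR momentum-error recursion with $\delta$ in place of $L$ at every variance-reduction step, combine with a descent inequality taken per local update, and telescope a potential. The gap is in how the local steps are folded in.

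The tension you flag---collapse the $K$ local steps into one effective step of size $\tilde\eta$ via Theorem~\ref{thm:reduction}, yet apply descent per local step because $L\tilde\eta\le\tfrac12$ would fail---is not a tension the paper resolves; the paper never routes through Theorem~\ref{thm:reduction} here. It abandons the reduction entirely and runs a direct per-step analysis: descent at each $\yy_{i,k}^t$, summed over $k$ and over clients, with $\xx^{\text{out}}$ chosen uniformly among all the $\yy_{i,k}^t$. Once you commit to per-step descent, the variance of each local update is (Lemma~\ref{lem:mimemvr-update-bound})
\[
  \E\norm{\dd_{i,k}^t-\nabla f(\yy_{i,k-1}^t)}^2 \le 3\E\norm{\ee^{t-1}}^2 + 3\delta^2\Delta_{i,k-1}^t + \tfrac{3a^2G^2}{S}\,,
\]
where $\ee^{t-1}:=\mm^{t-1}-\nabla f(\xx^{t-2})$ and $\Delta_{i,k}^t := \E\norm{\yy_{i,k}^t - \xx^{t-2}}^2$ is the drift of the local iterate from the \emph{previous} server point at which the momentum is anchored. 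Your two-term potential $\Phi_t = \E f(\xx_t) + \tfrac{\tilde\eta}{2(1-\beta)}\E\norm{\ce_t}^2$ has nothing to absorb this $\delta^2\Delta$ contribution; it is not the reduction's ``higher-order $\ee_t$'', because it evolves over the $K$ local steps and couples across rounds through $\Delta_{i,0}^t = \norm{\xx^{t-1}-\xx^{t-2}}^2$. The paper closes the loop with a three-term potential $\tfrac{1}{K}\E[f(\xx^t)-f^\star] + \tfrac{96\eta}{23a}\E\norm{\ee^t}^2 + \tfrac{8\delta}{K}\Delta^t$, together with a separate recursion on $\Delta_{i,k}^t$ (Lemma~\ref{lem:mimemvr-distance-bound}); this recursion, not the descent step, is where the constraint $\eta\le c/(\delta K)$ is actually consumed. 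A secondary omission: aggregating $\xx^t = \tfrac{1}{S}\sum_j \yy_{j,K}^t$ requires $\delta$-weak convexity of $f$ (Lemma~\ref{lem:averaging}) so that $f(\xx^t)$ is bounded by the client average of $f(\yy_{j,K}^t)$; the paper invokes this explicitly and remarks that it can be avoided by running the local updates on a single client.
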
\vspace{-5mm}
Here, we define $\hat G^2 := G^2 + \sigma^2$ and the expectation in $\expect\norm{\nabla f(\xx^{\text{out}})}^2 \leq \e$ is taken both over the sampling of the clients during the running of the algorithm, the sampling of the mini-batches in local updates, and the choice of $\xx^{\text{out}}$ (which is chosen randomly from the client iterates $\yy_i$).


Remarkably, the rates of our methods are independent of $L$ and only depend on $\delta$. Thus, when $\delta \leq L$ and $\delta \leq \nicefrac{L}{S}$ for MimeMVR and MimeLiteMVR, the rates beat the server only lower bound of $\Omega(\frac{LG}{\sqrt{S}\e^{3/2}})$. In fact, if the Hessian variance is small and $\delta \approx 0$, our methods only need $\cO(\nicefrac{1}{\e})$ rounds to communicate. Intuitively, our results show that local steps are very useful when heterogeneity (represented by $\delta$) is smaller than optimization difficulty (captured by smoothness constant $L$).

MimeMVR uses a momentum parameter $\beta$ of the order of $(1 - \cO(TG^2)^{-2/3})$ i.e.\ as $T$ increases, $\beta$ asymptotically approaches 1. In contrast, previous analyses of distributed momentum (e.g. \cite{yu2019linear}) prove rates of the form $\frac{G^2}{S (1 - \beta) \e^2}$, which are worse than that of standard SGD by a factor of $\frac{1}{1 -\beta}$. Thus, ours is also the first result which theoretically showcases the usefulness of using large momentum in distributed and federated learning.

Our analysis is highly non-trivial and involves two crucial ingredients: i) computing the momentum at the server level to ensure that it remains unbiased and then applying it locally during every client update to reduce variance, and ii) carefully keeping track of the bias introduced via additional local steps. Our experiments (Sec.~\ref{sec:experiments}) verify our theoretical insights are indeed applicable in deep learning settings as well. See App.~\ref{sec:proof-overview} for a proof sketch and App.~\ref{sec:reduction-analysis}--\ref{sec:improvement-analysis} detailed proofs.\vspace{-2mm}


\section{Experimental analysis on real world datasets}\label{sec:experiments}

  We run experiments on \emph{natively} federated datasets to confirm our theory and accurately measure real world performance. Our main findings are i) \mime and \mimelite consistently outperform \fedavg, and ii) momentum and adaptivity significantly improves performance.
\vspace{-3mm}
\subsection{Setup}
\vspace{-2mm}
  \paragraph{Algorithms.} We consider three (meta) algorithms: \fedavg, \mime, and \mimelite. Each of these adapt four base optimizers: SGD, momentum, Adam, and Adagrad.\\
  \fedavg follows \cite{reddi2020adaptive} who run multiple epochs of SGD on each client sampled, and then aggregate the net client updates. This aggregated update is used as a pseudo-gradient in the base optimizer (called server optimizer). The learning rate for the server optimizer is fixed to 1 as in \cite{wang2019slowmo}. This is done to ensure all algorithms have the same number of hyper-parameters. \\
  \mime and \mimelite follow Algorithm~\ref{alg:mime} and also run a fixed number of epochs on the client. However, note that this requires communicating both the full local-batch gradient as well as the parameter updates doubling the communication required to be sent by the client. For a fairer comparison, we split the sampled clients in \mime and \mimelite into two groups--the first communicates only full local-batch gradient and the latter communicates only parameter updates. Thus, all methods have \textbf{equal client communication} to the server. This variant retains the convergence guarantees up to constants (details in the Appendix).
   We also run Loc-\mime where instead of keeping the global optimizer state fixed, we update it locally within the client. The optimizer state is reset after the round finishes.
  In all methods, aggregation is weighted by the number of samples on the clients.\vspace{0mm}



  \textbf{Datasets and models.} We run five simulations on three real-world federated datasets: EMNIST62 with i) a linear classifier, ii) an MLP, and iii) a CNN, iv) a charRNN on Shakespeare, and v) an LSTM for next word prediction on StackOverflow, all accessed through Tensorflow Federated~\cite{tfd2020}. The learning rates were individually tuned and other optimizer hyper-parameters such as $\beta$ for momentum, $\beta_1$, $\beta_2$, $\varepsilon_0$ for Adam and AdaGrad were left to their default values, unless explicitly stated otherwise. We refer to Appendix~\ref{sec:add-experiments-setup} for additional setup details and discussion.

    \begin{figure}[!t]
      \centering
      \captionsetup[subfigure]{position=b,format=myformat}
      \begin{subfigure}{.36\columnwidth}
        \centering
        \includegraphics[width=\linewidth]{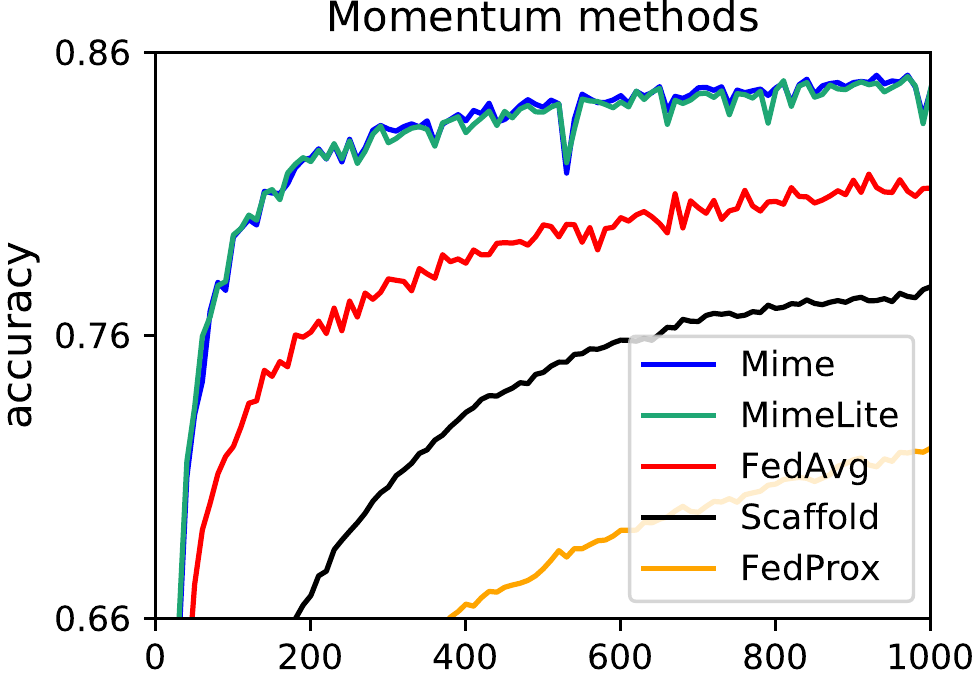}
      \end{subfigure}
      \begin{subfigure}{.31\columnwidth}
        \centering
        \includegraphics[width=\linewidth]{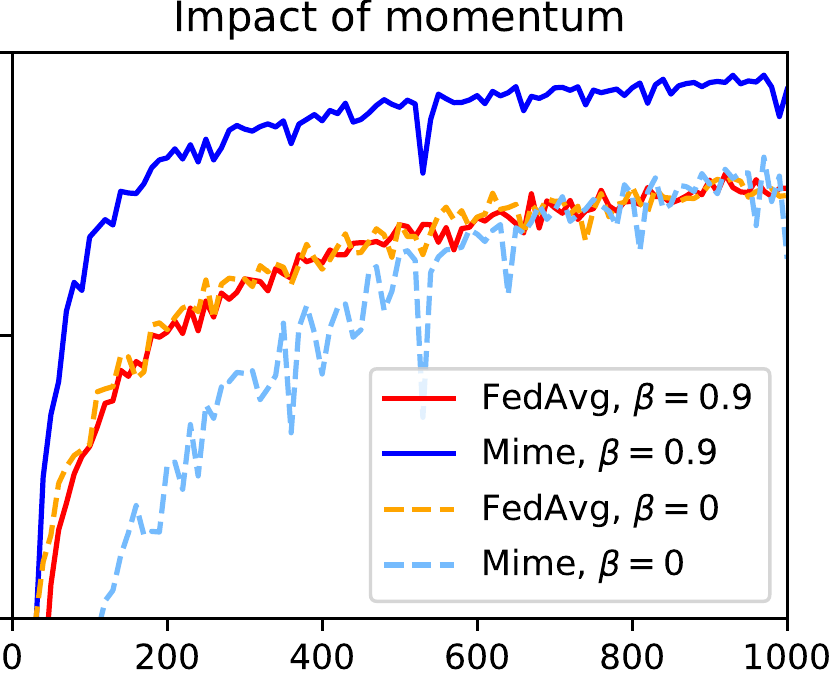}
      \end{subfigure}
      \begin{subfigure}{.31\columnwidth}
        \centering
        \includegraphics[width=\linewidth]{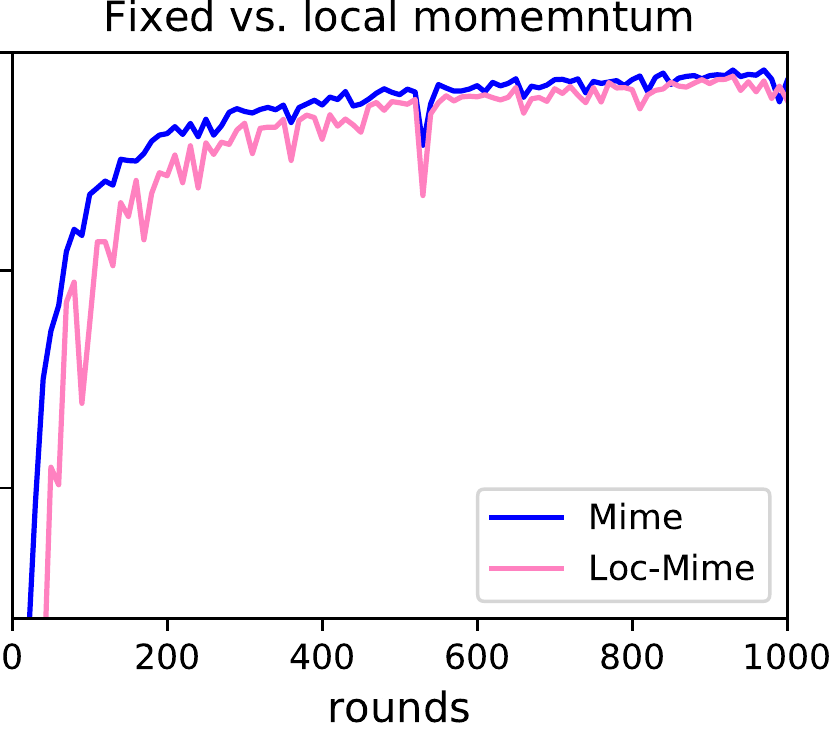}
      \end{subfigure}
      \caption{{\color{myblue}\textbf{Mime}}, {\color{mygreen}\textbf{MimeLite}}, {\color{myred}\textbf{FedAvg}}, \textbf{Scaffold}, {\color{orange}\textbf{FedProx}}, and {\color{pink}\textbf{Loc-Mime}} with SGD+momentum using 10 local epochs, run on EMNIST62 and a 2 hidden layer (300u-100) MLP. (Left) Mime and MimeLite are nearly identical and outperform the rest ($7\times$ faster). (Center) Mime makes better use of momentum than FedAvg, with a large increase in performance. (Right) Locally adapting momentum slows down convergence and makes it more unstable.}
      \label{fig:experiments-curves}\vspace{-5mm}
    \end{figure}
\vspace{-3mm}    
\subsection{Ablation and comparative study}
\vspace{-3mm}
In order to study the different algorithms, we train a 2 hidden layer ($300\mu$-$100$) MLP on EMNIST62 with 10 local epochs for 1k rounds and use SGD+momentum (with tuned $\beta$) as the base optimizer.
\vspace{-3mm}

\paragraph{Mime $\approx$ MimeLite $>$ FedAvg $>$ SCAFFOLD $>$ FedProx.} Fig.~\ref{fig:experiments-curves} (left) shows \mime and \mimelite have nearly identical performance, and are about $7\times$ faster than FedAvg. This implies our strategy of applying momentum to client updates is faster than simply using server momentum.
FedProx~\cite{li2018fedprox} uses an additional regularizer $\mu$ tuned over $[0.1, 0.5, 1]$ ($\mu =0$ is the same as FedAvg). Regularization does not seem to reduce client drift but still slows down convergence \cite{wang2020tackling}. SCAFFOLD \cite{karimireddy2019scaffold} is also slower than Mime and FedAvg in this setup. This is because in cross-device setting with a large number of clients ($N = 3.4k$) means that each client is visited less than 6 times during the entire training (20 clients per round for 1k rounds). Hence, the client control variate stored is quite stale (from about 200 rounds ago) which slows down the convergence. 
\vspace{-4mm}

\paragraph{With momentum $>$ without momentum.} Fig.~\ref{fig:experiments-curves} (center) examines the impact of momentum on FedAvg and Mime. Momentum slightly improves the performance of FedAvg, whereas it has a significant impact on the performance of Mime. This is also in line with our theory and confirms that Mime’s strategy of applying it locally at every client update makes better use of momentum.
\vspace{-4mm}

\paragraph{Fixed $>$ locally updated optimizer state.} Finally, we check how the performance of Mime changes if instead of keeping the momentum fixed throughout a round, we let it change. The latter is a way to combine global and local momentum. The momentum is reset at the end of the round ignoring the changes the clients make to it. Fig.~\ref{fig:experiments-curves} (right) shows that this \emph{worsens} the performance, confirming that it is better to keep the global optimizer state fixed as predicted by our theory.

Together, the above observations validate all aspects of Mime (and MimeLite) design: compute
statistics at the server level, and apply them unchanged at every client update.
\vspace{-3mm}

\begin{table*}[t!]
  \centering
    \caption{Validation \% accuracies after training for 1000 rounds. Best results for each dataset is underlined and the best within each base optimizer is bolded. The number of clients sampled per round has been reduced for \mime and \mimelite to ensure all methods have \textbf{equal client and server communication}. Final accuracies obtained by \mime and \mimelite are competitive with \fedavg, especially with adaptive base optimizers. \fedavg seems unstable with Adam.
    }
    \renewcommand{\tabcolsep}{3pt}
    \begin{tabular}{@{}l p{1.2cm} r r r r@{}}
        \toprule
         &  & \raggedleft EMNIST logistic &\raggedleft EMNIST CNN & \raggedleft Shakespeare & \raggedleft StackOverflow \tabularnewline
        \cmidrule{2-6}
        {\sc SGD} & FedAvgSGD & 66.8 & \textbf{85.8} & \textbf{56.7} & \textbf{23.8}\tabularnewline
         & MimeLiteSGD & 66.8 & \textbf{85.8} & \textbf{56.7} & \textbf{23.8}\tabularnewline
        & MimeSGD & \textbf{67.4} & 85.3 & 56.1 & 12.5\vspace{2mm}\\
        {\sc Momentum} & FedAvgMom  & 67.4 & 85.7 & \textbf{55.4} & \textbf{22.2} \tabularnewline
        & MimeLiteMom & 67.4 & \textbf{86.0} & 49.8 & 19.9 \tabularnewline
        & MimeMom  & \textbf{67.5} & 85.9 & 53.6 & 19.3\vspace{2mm}\\
        {\sc Adam} & FedAvgAdam & 67.3 & 85.9 & 18.5 & 3.2 \tabularnewline
        & MimeLiteAdam & \textbf{\underline{68.0}} & 86.4 & 54.0 & 21.5 \tabularnewline
        & MimeAdam & \textbf{\underline{68.0}} & \textbf{\underline{86.6}} & \textbf{54.1} & \textbf{22.8}\vspace{2mm}\\
        {\sc Adagrad} & FedAvgAdagrad & \textbf{67.6} & 86.3 & 55.5 & \textbf{\underline{24.2}} \tabularnewline
        & MimeLiteAdagrad & 66.6 & 85.5 & 56.8 & 23.8 \tabularnewline
        & MimeAdagrad & 67.4 & \textbf{86.3} & \textbf{\underline{57.1}} & 14.7 \tabularnewline
         \bottomrule
    \end{tabular}
    \label{tab:accuracies}\vspace{-3mm}
    \end{table*}
\subsection{Large scale comparison with equal server and client communication}\vspace{-3mm}
We perform a larger scale study closely matching the setup of \cite{reddi2020adaptive}.
For both \mime and \mimelite, only half the clients compute and transmit the updated parameters, and other half transmit the full local-batch gradients. Hence, client to server communication cost is the same for all methods for all clients. However, \mime and \mimelite require sending additional optimization state to the clients. Hence, we also reduce the number of clients sampled in each round to ensure \emph{sum total} of communication at each round is $40\times$ model size for EMNIST and Shakespeare experiments, and $100\times$ model size for the StackOverflow next word prediction experiment. \vspace{-1mm}

Since we only perform 1 local epoch, the hyper-parameters (e.g. epsilon for adaptive methods) are more carefully chosen following \cite{reddi2020adaptive}, and \mime and \mimelite use significantly fewer clients per round, the difference between \fedavg and \mime is smaller here.  Table~\ref{tab:accuracies} summarizes the results.\vspace{-1mm} 

For the image classification tasks of EMNIST62 logistic and EMNIST62 CNN, Mime and MimeLite with Adam achieve the best performance. Using momentum (both with SGDm, and in Adam) significantly improves their performance. In contrast, FedAvgAdam is more unstable with worse performance.
This is because FedAvg is excessively sensitive to hyperparameters (cf. App.~\ref{addsec:stability}).\vspace{-1mm} 
  
We next consider the character prediction task on Shakespeare dataset, and next word prediction on StackOverflow. Here, the momentum based methods (SGDm and Adam) are slower than their non-momentum counterparts (SGD and AdaGrad). This is because the mini-batch gradients in these tasks are \emph{sparse}, with the gradients corresponding to tokens not in the mini-batch being zero. This sparsity structure is however destroyed when using momentum or Adam. For the same reason, Mime which uses an SVRG correction also significantly increases the gradient density.
\vspace{-3mm}  

\paragraph{Discussion.} For traditional deep learning tasks such as image classification, we observe that Mime outperforms MimeLite which in turn outperforms FedAvg. These methods are able to successfully leverage momentum to improve performance. For tasks where the client gradients are sparse, the SVRG correction used by Mime hinders performance. Adapting our techniques to work with sparse gradients (\`{a} la Yogi \cite{zaheer2018adaptive}) could lead to further improvements. Also, note that we reduce communication by na\"{i}vely reducing the number of participating clients per round. More sophisticated approaches to save on client communication including quantization or sparsification  \cite{suresh2017distributed,alistarh2017qsgd}, or even novel algorithmic innovations \cite{acar2021federated} could be explored. Further, server communication could be reduced using memory efficient optimizers e.g. AdaFactor~\cite{shazeer2018adafactor} or SM3~\cite{anil2019memory}.

\vspace{-4mm}

\section{Conclusion}\vspace{-3mm}
Our work initiated a formal study of the cross-device federated learning problem and provided theoretically justified algorithms.
 We introduced a new framework \mime which overcomes the natural client-heterogeneity in such a setting, and can adapt arbitrary centralized algorithms such as Adam without additional hyper-parameters. We demonstrated the superiority of \mime via strong convergence guarantees and empirical evaluations. Further, we proved that a particular instance of our method, MimeMVR, beat centralized lower-bounds, demonstrating that additional local steps can yield asymptotic improvements for the first time. We believe our analysis will be of independent interest beyond the federated setting for understanding the sample complexity of non-convex optimization, and for yielding improved analysis of decentralized optimization algorithms.
{
  \bibliography{papers}
  \bibliographystyle{plain}
}


\newpage
\part*{Supplementary material for \mime}
\appendix

\renewcommand{\contentsname}{Contents of Appendix}
\tableofcontents
\addtocontents{toc}{\protect\setcounter{tocdepth}{3}} 
\clearpage


\section{How momentum can help reduce client drift}
\begin{figure*}[ht]
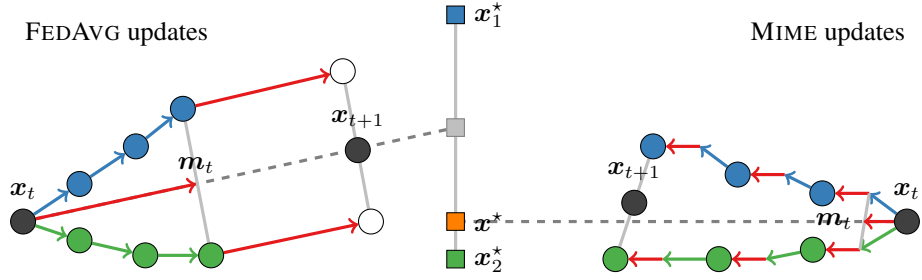

  \centering
  \ctikzfig{sgd}   
  \caption{Client-drift in \fedavg (left) and \mime (right) is illustrated for 2 clients with 3 local steps and momentum parameter $\beta=0.5$. The local SGD updates of \fedavg (shown using arrows for {\color{blue}client 1} and {\color{mygreen}client2}) move towards the {\color{gray} average of client optima $\frac{\xx_1^\star + \xx_2^\star}{2}$} which can be quite different from the true {\color{orange} global optimum $\xx^\star$}. Server {\color{myred}momentum} only speeds up the convergence to the wrong point in this case. In contrast, \mime uses unbiased {\color{myred}momentum} and applies it locally at every update. This keeps the updates of \mime closer to the true {\color{orange} optimum $\xx^\star$}.}\label{fig:client drift}
\end{figure*}
In this section we examine the tension between reducing communication by running multiple client updates each round, and degradation in performance due to client drift \cite{karimireddy2019scaffold}.
To simplify the discussion, we assume a single client is sampled each round and that clients use full-batch gradients.
\vspace{-2mm}
\paragraph{Server-only approach.} A simple way to avoid the issue of client drift is to take no local steps. We sample a client $i \sim \cD$ and run SGD with momentum (SGDm) with momentum parameter $\beta$ and step size $\eta$:
\begin{equation}\label{eqn:cent-sgdm}
  \begin{split}
    \xx_{t} &= \xx_{t-1} - \eta\, ((1-\beta)\nabla f_i(\xx_{t-1}) +\beta\mm_{t-1})\,,\\
    \mm_{t} &= (1-\beta)\nabla f_i(\xx_{t-1}) +\beta\mm_{t-1}\,.
  \end{split}
\end{equation}
Here, the gradient $\nabla f_i(\xx_t)$ is \emph{unbiased} i.e. $\expect[\nabla f_i(\xx_t)] = \nabla f(\xx_t)$ and hence we are guaranteed convergence. However, this strategy can be communication-intensive and we are likely to spend all our time waiting for communication with very little time spent on computing the gradients.

\paragraph{\fedavg approach.} To reduce the overall communication rounds required, we need to make more progress in each round of communication. Starting from $\yy_{0} = \xx_{t-1}$, \fedavg \cite{mcmahan2017communication} runs multiple SGD steps on the sampled client $i \sim \cD$
\begin{equation}\label{eqn:fed-sgdm}
  \begin{split}
    \yy_{k} &= \yy_{k-1} - \eta\nabla f_i(\yy_{k-1}) \text{ for } k \in [K]\,,
  \end{split}
\end{equation}
and then a pseudo-gradient $\tilde \gg_t = -(\yy_K - \xx_t)$ replaces $\nabla f_i(\xx_{t-1})$ in the SGDm algorithm \eqref{eqn:cent-sgdm}. This is referred to as server-momentum since it is computed and applied only at the server level~\cite{hsu2019measuring}. 
However, such updates give rise to \emph{client-drift}
resulting in performance worse than the na\"ive server-only strategy \eqref{eqn:cent-sgdm}. This is because by using multiple local updates, \eqref{eqn:fed-sgdm} starts over-fitting to the local client data, optimizing $f_i(\xx)$ instead of the actual global objective $f(\xx)$. The net effect is that \fedavg moves towards an incorrect point (see Fig~\ref{fig:client drift}, left). If $K$ is sufficiently large, approximately 
\begin{align*}
    \yy_K &\rightsquigarrow \xx_i^\star \,,\  \text{ where } \xx_i^\star := \argmin_{\xx} f_i(\xx)\\
    \Rightarrow  \expect_{i \sim \cD}[\tilde \gg_t] &\rightsquigarrow (\xx_t - \expect_{i \sim \cD}[\xx_i^\star])\,.
\end{align*}
Further, the server momentum is based on $\tilde \gg_t$ and hence is also biased. Thus, it cannot correct for the client drift. We next see how a different way of using momentum can mitigate client drift.

\paragraph{Mime approach.} \fedavg experiences client drift because both the momentum and the client updates are biased. To fix the former, we compute momentum using only global optimizer state as in~\eqref{eqn:cent-sgdm} using the sampled client $i \sim \cD$:
\begin{equation}\label{eqn:mimesgdm-momentum}
  \mm_t = (1- \beta)\nabla f_i(\xx_{t-1}) + \beta \mm_{t-1}\,.
\end{equation}
To reduce the bias in the local updates, we will apply this unbiased momentum every step $k \in [K]$:
\begin{equation}\label{eqn:mimesgdm-update}
  \begin{split}
    \yy_{k} &= \yy_{k-1} - \eta ((1-\beta)\nabla f_i(\yy_{k-1}) + \beta \mm_{t-1})\,.
  \end{split}
\end{equation}
Note that the momentum term is kept fixed during the local updates i.e.\ there is no local momentum used, only global momentum is applied locally. Since $\mm_{t-1}$ is a moving average of unbiased gradients computed over multiple clients, it intuitively is a good approximation of the general direction of the updates. By taking a convex combination of the local gradient with $\mm_{t-1}$, the update \eqref{eqn:mimesgdm-update} is potentially also less biased. In this way \mime combines the communication benefits of taking multiple local steps and prevents client-drift (see Fig~\ref{fig:client drift}, right). Appendix~\ref{sec:proof-overview} makes this intuition precise.


\section{Proof sketch}\label{sec:proof-overview}

In this section, we give proof sketches of the main components of Theorem~\ref{thm:interpolation}: i)~how momentum reduces the effect of client drift, ii) how local steps can take advantage of Hessian similarity, and iii) why the SVRG correction improves constants. \vspace{-3mm}



\paragraph{Improving the statistical term via momentum.} Note that the statistical (first) term in Theorem~\ref{thm:interpolation} without momentum $(\beta=0)$ for the convex case is $\frac{LG^2}{\mu S \e}$. This is (up to constants) optimal and cannot be improved. For the non-convex case however using $\beta=0$ gives the usual rate of $\frac{LG^2}{S \e^2}$. However, this can be improved to $\rbr[\Big]{\frac{(1 + \delta) G^2 F}{S\e^2}}^{3/4}$ using momentum. This matches a similar improvement in the centralized setting \cite{cutkosky2019momentum,tran2019hybrid} and is in fact optimal \cite{arjevani2019lower}.
Let us examine why momentum improves the statistical term. Assume that we sample a single client $i_t$ in round $t$ and that we use full-batch gradients. Also let the local client update at step $k$ round $t$ be of the form \vspace{-1mm}
\begin{equation}\label{eqn:local-update}
  \yy \leftarrow \yy - \eta \dd_k\,.
\end{equation}
 The ideal choice of update is of course $\dd_k^\star = \nabla f(\yy)$ but however this is unattainable.
Instead, \mime with momentum $\beta = 1-a$ uses $\dd_k^{\text{SGDm}} = \tilde\mm_k \leftarrow a \nabla f_{i}(\yy) + (1-a) \mm_{t-1}$ where $\mm_{t-1}$ is the momentum computed at the server. The variance of this update can then be bounded as
\vspace{-1mm}
\begin{align*}
  \expect\norm{\tilde\mm_k - \nabla f(\yy)}^2  &\lesssim a^2 \expect\norm{\nabla f_{i_t}(\yy) - \nabla f(\yy)}^2 + (1-a) \expect\norm{\mm_{t-1} - \nabla f(\yy)}^2 \\
  &\approx a^2 G^2 + (1-a) \expect\norm{\mm_{t-1} - \nabla f(\xx_{t-2})}^2 \approx a G^2\,.
\end{align*}
The last step follows by unrolling the recursion on the variance of $\mm$. We also assumed that $\eta$ is small enough that $\yy \approx \xx_{t-2}$. This way, momentum can reduce the variance of the update from $G^2$ to $(a G^2)$ by using past gradients computed on different clients. To formalize the above sketch requires slightly modifying the momentum algorithm similar to \cite{cutkosky2019momentum}.

\vspace{-1mm}
\paragraph{Improving the optimization term via local steps.} The optimization (second) term in Theorem~\ref{thm:interpolation} for the convex case is $\frac{\delta K + L}{\mu K}$ and for the non-convex case (with or without momentum) is $\frac{\delta K + L}{\e K}$. In contrast, the optimization term of the server-only methods is $L/\mu$ and $L/\e$ respectively. Since in most cases $\delta \lll L$, the former can be significantly smaller than the latter. This rate also suggests that the best choice of number of local updates is $L/\delta$ i.e. we should perform more client updates when they have more similar Hessians. This generalizes results of \cite{karimireddy2019scaffold} from quadratics to all functions.


This improvement is due to a careful analysis of the \emph{bias} in the gradients computed during the local update steps. Note that for client parameters $\yy_{k-1}$, the gradient $\expect[\nabla f_{i_t}(\yy_{k-1})] \neq \expect[\nabla f(\yy_{k-1})]$ since $\yy_{k-1}$ was also computed using the same loss function $f_{i_t}$. In fact, only the first gradient computed at $\xx_{t-1}$ is unbiased. Dropping the subscripts $k$ and $t$, we can bound this bias as:
\begin{align*}
  \expect[\nabla f_{i}(\yy) - \nabla f(\yy)] &=\expect [\underbrace{\nabla f_{i}(\yy) - \nabla f_{i}(\xx)}_{\approx \nabla^2 f_{i}(\xx) (\yy - \xx)} + \underbrace{\nabla f(\xx) - \nabla f(\yy_i)}_{\approx \nabla^2 f(\xx)(\xx - \yy_i)}] + \underbrace{\expect_i[\nabla f_i(\xx)] - \nabla f(\xx)}_{=0 \text{ since unbiased}}\\
  &\approx \expect[(\nabla^2 f_i(\xx) - \nabla^2 f(\xx)) (\yy_i - \xx)] \approx \delta \expect[(\yy_i - \xx)]\,.
\end{align*}
Thus, the Hessian dissimilarity \eqref{asm:hessian-similarity} control the bias, and hence the usefulness of local updates. This intuition can be made formal using Lemma~\ref{lem:similarity}.
\vspace{-1mm}

\paragraph{Mini-batches via SVRG correction.}  In our previous discussion about momentum and local steps, we assumed that the clients compute full batch gradients and that only one client is sampled per round. However, in practice a large number ($S$) of clients are sampled and further the clients use mini-batch gradients. The SVRG correction reduces this within-client variance since
\begin{equation*}
  \text{Var}\rbr*{\nabla f_i(\yy_i; \zeta) - \nabla f_i(\xx; \zeta) + \tfrac{1}{\abs{\cS}} \tsum_{i \in \cS} \nabla f_i(\xx)} \lesssim L^2\norm{\yy_i - \xx}^2 + \frac{G^2}{S} \approx \frac{G^2}{S}\,.
\end{equation*}
Here, we used the smoothness of $f_i(\cdot; \zeta)$ and assumed that $\yy_i \approx \xx$ since we don't move too far within a single round. Thus, the SVRG correction allows us to use minibatch gradients in the local updates while still ensuring that the variance is of the order $G^2/S$.\vspace{-2mm}



\section{Experimental setup}\label{sec:add-experiments-setup}
\subsection{Description of ablation study}
We train a 2 hidden layer MLP with 300u-100 neurons on the EMNIST62 (extended MNIST) dataset \cite{cohen2017emnist}. The clients' data is separated according to the original authors of the characters \cite{caldas2018leaf}. All methods are augmented with momentum--Mime and MimeLite use momentum in the client updates, and the others use server momentum. The momentum parameter is searched over
$\beta \in [0, 0.9, 0.99]$.
For Adam, we fix $\beta_1 = 0.9$, $\beta_2 = 0.99$, and $\epsilon = 10^{-3}$. For both FedProx and SCAFFOLD, $\beta=0$ (no server momentum) yielded the best performance. For FedAvg, Mime, and MimeLite $\beta=0.9$ was the fastest. For FedProx, the regularization parameter $\mu$ was searched over $[0.1, 0.5, 1]$ and $\mu = 0.1$ had highest test accuracy.

\subsection{Description of large scale experiments}
We perform 4 tasks over 3 datasets: 
i) On the EMNIST62 dataset \cite{cohen2017emnist} we run a convex multi-class (62 classes) logistic regression model, and ii) a convolution model with two CNN layers and two dense layers and dropout. 
iii) On the {\sc Shakespeare} dataset, we train a single layer LSTM model with state size of 256 and embedding size of 8 to predict the next character \cite{mcmahan2017communication}. 
iv) Finally, on the {\sc StackOverflow} dataset \cite{so2021}, we train a next word prediction language model with embedding size of 96, a LSTM layer of size 670, and a vocabulary size of 1000. 
In all cases we report the top-1 test accuracy in our experiments. 

All datasets use the metadata indicating the original authors to separate them into multiple clients yielding naturally partitioned datasets. Table~\ref{tab:datasets} summarizes the statistics about the different datasets. Note that the average number of rounds a client participates in (computed as sampled clients$\times$number of rounds$/$number of clients) provides an indication of how much of the training data is seen with {\sc Shakespeare} being closest to the cross-silo setting and {\sc StackOverflow} representing the most cross-device in nature.

\begin{table}[!h]
    \caption{Details about the datasets used and experiment setting.}
    \centering
    \begin{tabular}{@{}l l l l l@{}} 
    \toprule
     & EMNIST62 & {\sc Shakespeare} & {\sc StackOverflow} \\
    \midrule
      Clients & 3,400 & 715 & 342,477 \\
      Examples & 671,585  & 16,068 & 135,818,730\\
      Batch size & 10 & 10 & 10\\
      Number of local epochs & 1 & 1 & 1\\
      Total number of rounds & 1000  & 1000  & 1000 \\
      Avg. rounds each client participates & 5.9 & 28 & 0.15\\
    
    
    \bottomrule
    \end{tabular}
    \label{tab:datasets}
    \end{table}
  
 \vspace{5mm}
\begin{table}[!h]
    \caption{Effective number of sampled clients.}
    \centering
    \begin{tabular}{@{}l l l l l l@{}} 
    \toprule
     & Total Comm. & EMNIST62 & {\sc Shakespeare} & {\sc StackOverflow} \\
    \midrule
      FedAvg & $2\times$ & 20 & 20 & 50 \\
      MimeLiteMom & $5\times$  & 8 & 8 & 20\\
      MimeLiteAdagrad & $5\times$ & 8 & 8 & 20\\
      MimeLiteAdam & $6\times$ & 6 & 6 & 16\\
      MimeMom & $6\times$ & 6 & 6 & 16\\
      MimeAdagrad & $6\times$ & 6 & 6 & 16\\
      MimeAdam & $7\times$ & 5 & 5 & 14\\
    
    
    \bottomrule
    \end{tabular}
    \label{tab:sampling}
    \end{table}
\vspace{5mm}

We use Tensorflow federated datasets \cite{tfd2020} to generate the datasets. Our federated learning simulation code is written in Jax \cite{frostig2018compiling} and is open-sourced at \url{redacted for anonymity}. 
Black and white was reversed in EMNIST62 (i.e.\ subtracted from 1) to make them similar to MNIST. 
The preprocessing for {\sc Shakespeare} and {\sc StackOverflow} datasets exactly matches that of \cite{reddi2020adaptive}.

  \subsection{Practicality of experiments} \label{sec:add-practical}
  In the experiments we only cared about the number of communication rounds, ignoring that \mime actually needs twice the number of bits per round and that the \serveronly methods have a much smaller computational requirement. This is standard in the federated learning setting as introduced by \cite{mcmahan2017communication} and is justified because most of the time in cross-device FL is spent in establishing connections with devices rather than performing useful work such as communication or computation. In other words, \emph{latency} and not bandwidth or computation are critical in cross device FL. However, one can certainly envision cases where this is not true. Incorporating communication compression strategies \cite{suresh2017distributed,alistarh2017qsgd,karimireddy2019error,vogels2019powersgd} or client-model compression strategies \cite{caldas2018expanding,frankle2019lottery,hamer2020fedboost} into our \mime framework can potentially address such issues and are important future research directions.

  As we already discussed previously, we believe both the datasets and the tasks being studied here are close to real world settings since they contain natural heterogeneity. We now discuss our choice of other parameters in the experiment setup (number of training rounds, sampled clients, batch-size, etc.)
  Each round of federated learning takes 3~mins in the real world and is relatively independent of the size of communication~\cite{bonawitz2019towards} implying that training 1000 rounds takes \textbf{2~days} even for small models. In contrast, running a centralized simulation takes about 15 mins. This underscores the importance of ensuring that the algorithms for federated learning converge in as few rounds as possible, as well as have very easy to set default hyper-parameters. Thus, in our experimental setup we keep all parameters other than the learning rate to their default values. In practice, this learning rate can be set by set using a small centralized dataset on the server (as in \cite{hard2020training}). Thus, it is crucial for federated frameworks to be able to translate algorithms which work well in centralized settings directly to the federated setting without additinal hyper-parameter tuning. The choice of batch size being 10 was made both keeping in mind the limited memory available to each client as well as to match prior work. Finally, while we limit ourselves to sampling 20--50 workers per round due to computational constraints, in real world FL thousands of devices are often available for training simultaneously each round \cite{bonawitz2019towards}. They also note that the probability of each of these devices being available has clear patterns and is far from uniform sampling. Conducting a large scale experimental study which mimics these alternate forms of heterogeneity is an important direction for future work.
  
  \subsection{Hyperparameter search}
  
  We run two hyper-parameter sweeps in our experiments: first a \emph{light} setup which is reported in the main paper, and one we believe reflects the real world performance, and second a \emph{heavy} tuning setting to showcase the performance of the methods as we vary the hyper-parameters.
  
  \paragraph{Light-sweep setting ($\mathbf{ 9\times}$).}
  For all SGDm methods, we pick momentum $\beta = 0.9$. For Adam methods, we fix $\beta_1=0.9$ and $\beta_2=0.99$, and $\varepsilon_0 =\num{1e-7}$. For Adagrad we use the default initialization value of $0.1$ and use $\varepsilon_0 =\num{1e-7}$.
  None of the algorithms use weight decay, clipping etc. The learning rate is then tuned to obtain the best test accuracy.
  For all experiments, unless explicitly mentioned otherwise, the learning rate is searched over a grid ($9\times$): 
  \[\eta \in [\num{1e0}, \num{1e-0.5}, \num{1e-1}, \num{1e-1.5}, \num{1e-2}, \num{1e-2.5}, \num{1e-3}, \num{1e-3.5}, \num{1e-4}]\,.\]
  The server learning rate for all methods is kept at its default value of $1$.
  
  \paragraph{Heavy-sweep setting ($\mathbf{567 \times}$).}
  For all SGDm methods, we pick momentum $\beta = 0.9$. For Adam methods, we fix $\beta_1=0.9$ and $\beta_2=0.99$. For Adagrad we use the default initialization value of $0.1$.
  None of the algorithms use weight decay, clipping etc. The learning rate is then tuned to obtain the best test accuracy.
  
  For all experiments, unless explicitly mentioned otherwise, the \textbf{client} learning rate is searched over a grid ($9\times$): 
  \[\eta_{\text{client}} \in [\num{1e0}, \num{1e-0.5}, \num{1e-1}, \num{1e-1.5}, \num{1e-2}, \num{1e-2.5}, \num{1e-3}, \num{1e-3.5}, \num{1e-4}]\,.\]
  
  Further, we also search for the \textbf{server} learning rate is searched over a grid ($9\times$): 
  \[\eta_{\text{server}} \in [\num{1e1}, \num{1e0.5}, \num{1e0}, \num{1e-0.5}, \num{1e-1}, \num{1e-1.5}, \num{1e-2}, \num{1e-2.5}, \num{1e-3}]\,.\]

  Finally, for the \textbf{adaptive} methods such as Adam and Adagrad, we also tune the $\varepsilon_0$ parameter over a grid ($7\times$):
  \[\varepsilon_0 \in [\num{1e0}, \num{1e-1}, \num{1e-2}, \num{1e-3}, \num{1e-4}, \num{1e-5}, \num{1e-6}, \num{1e-7}]\,.\]
  
  \subsection{Comparison with previous results}
    As far as we are aware, \cite{reddi2020adaptive} is the only prior work which conducts a systematic experimental study of federated learning algorithms over multiple realistic datasets. The algorithms comparable across the two works (e.g. FedSGD, FedSGDm, and FedAdam) have qualitatively similar performance except with one exception: FedAdam consistently underperforms FedSGDm. This difference, as we show later, is because FedAdam does not work with the default choices of hyper-parameters such as $\e$ and requires additional tuning. As we explain in Section~\ref{sec:add-practical}, we chose to keep these parameters to the default values of their centralized counterparts to compare methods in a `low-tuning' setting. We also point that while FedAdam struggles to perform in this setup, MimeAdam and MimeLiteAdam are very stable and even often outperform their SGD counterparts. 


\subsection{Additional algorithmic details}\label{sec:add-alg-details}

\begin{table}[H]
    \caption{Decomposing base algorithms into a parameter update ($\cU$) and statistics tracking ($\cV$).}
    \centering
    \begin{tabular}{@{}l l l l@{}} 
    \toprule
    Algorithm & Tracked statistics $\ss$ & Update step $\cU$ & Tracking step $\cV$\\
    \midrule
      SGD     & -- & $\xx - \eta \gg$ & --\\ [2mm]
      SGDm/Mom    & $\mm$ & $\xx - \eta ((1-\beta)\gg +\beta\mm)$ & $\mm = (1-\beta)\gg +\beta\mm$\\   [2mm]
      AdaGrad    & $\vv$ & $\xx - \frac{\eta}{\e + \sqrt{\vv}} \gg$ & $\vv =\gg^2 + \vv$\\ [2mm]
      Adam    & $\mm, \vv$ & $\xx - \frac{\eta}{\e + \sqrt{\vv}} ((1-\beta_1)\gg +\beta_1\mm)$ & {$\begin{aligned}
        \mm &= (1-\beta_1)\gg +\beta_1\mm\\[-1.5mm]
        \vv &= (1-\beta_2)\gg^2 +\beta_2\vv
      \end{aligned}$}\\  
    \bottomrule
    \end{tabular}
    \label{tab:add-alg-details}
    \end{table}

\section{Stability of methods to hyper-parameters}\label{addsec:stability}

    \begin{figure}[H]
      \centering
      \captionsetup[subfigure]{position=b,format=myformat}
      \begin{subfigure}{.31\columnwidth}
        \centering
        \includegraphics[width=\linewidth]{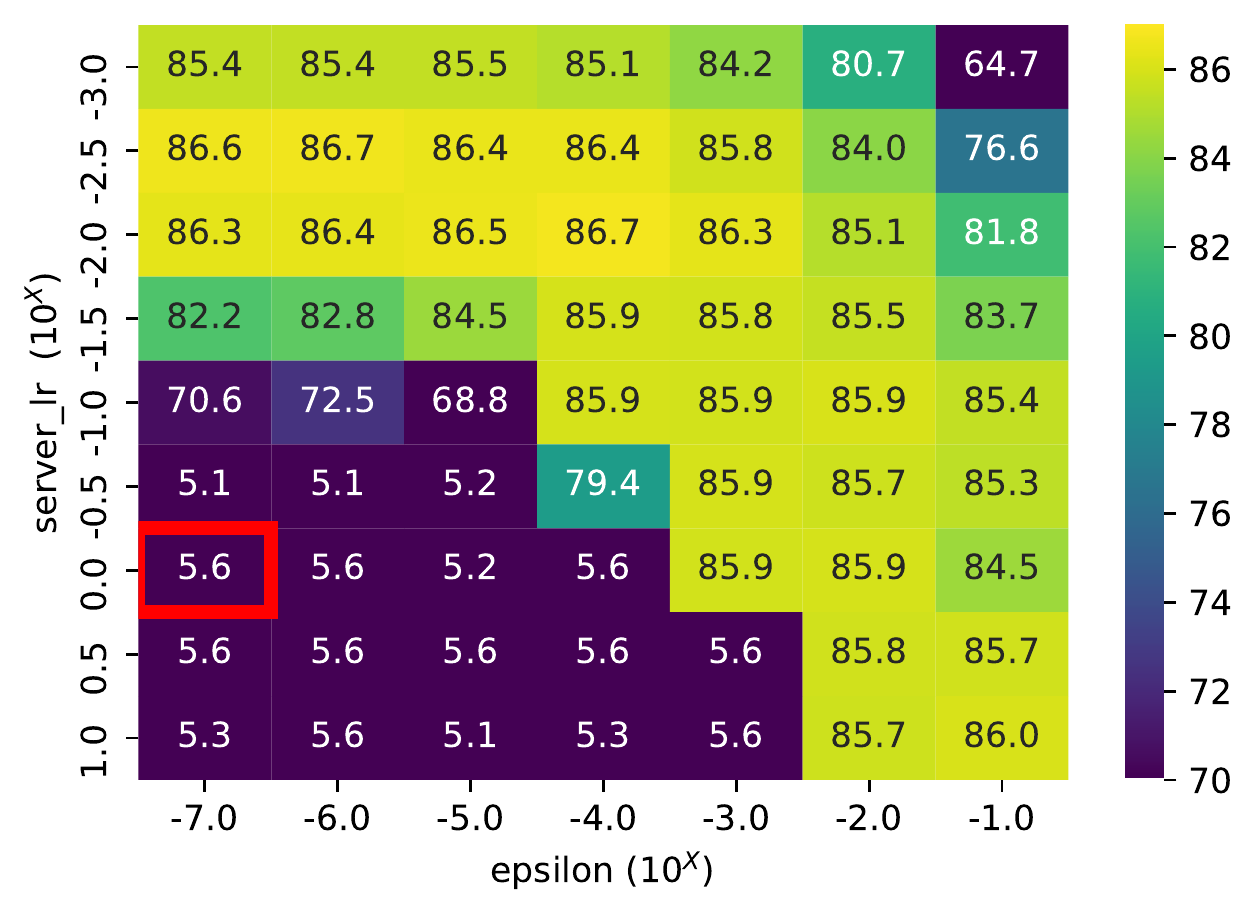}
      \end{subfigure}
      \begin{subfigure}{.31\columnwidth}
        \centering
        \includegraphics[width=\linewidth]{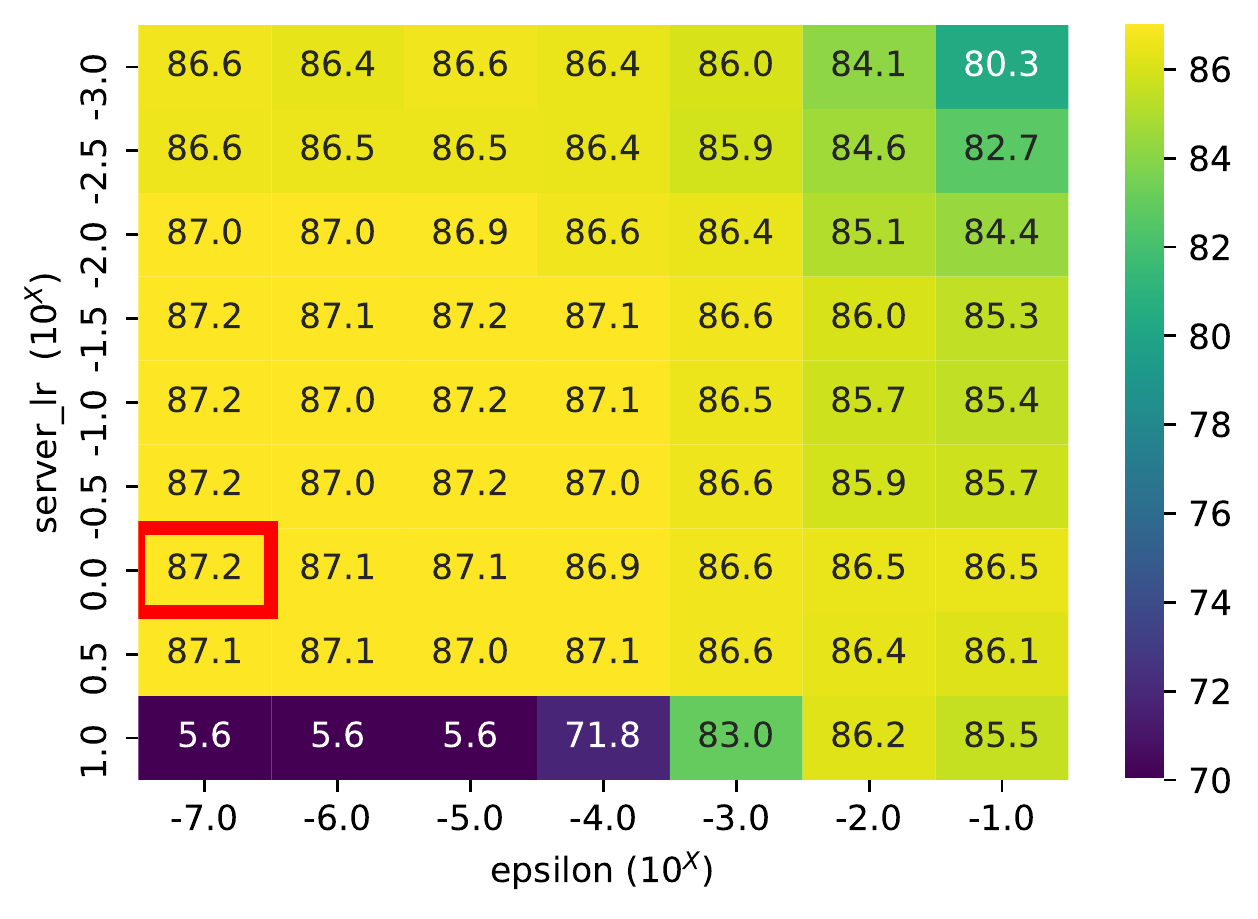}
      \end{subfigure}
      \begin{subfigure}{.31\columnwidth}
        \centering
        \includegraphics[width=\linewidth]{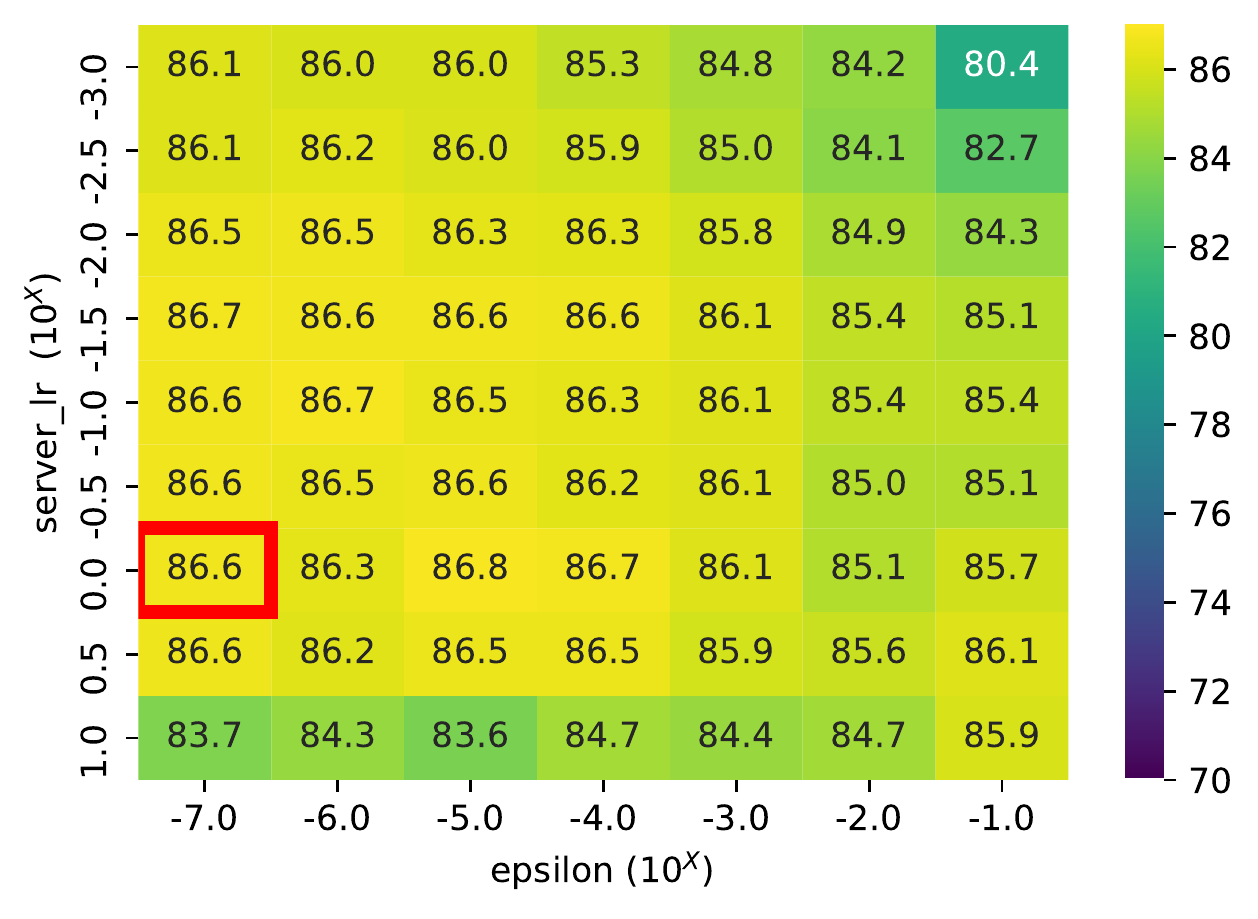}
      \end{subfigure}
      \\
      \begin{subfigure}{.31\columnwidth}
        \centering
        \includegraphics[width=\linewidth]{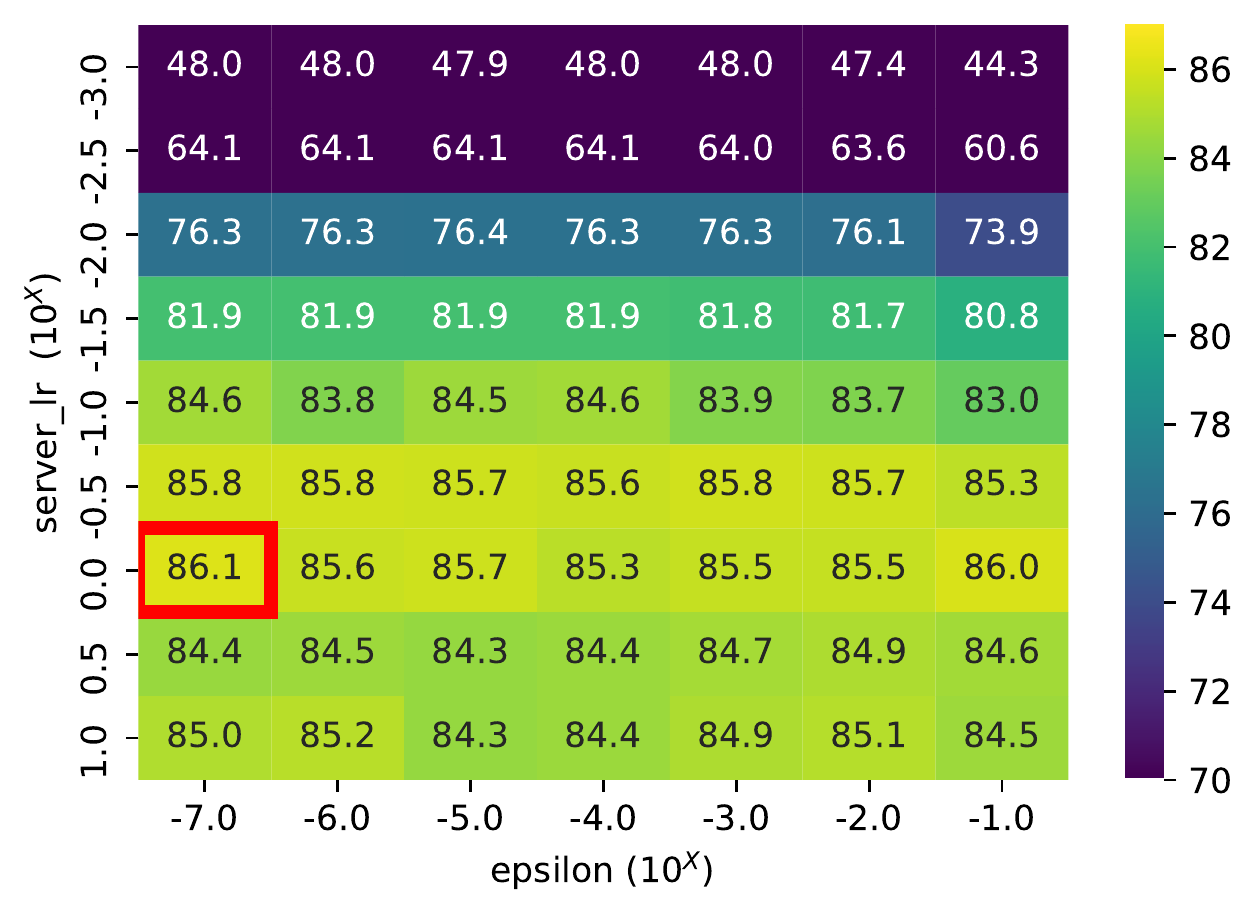}
      \end{subfigure}
      \begin{subfigure}{0.31\columnwidth}
        \centering
        \includegraphics[width=\linewidth]{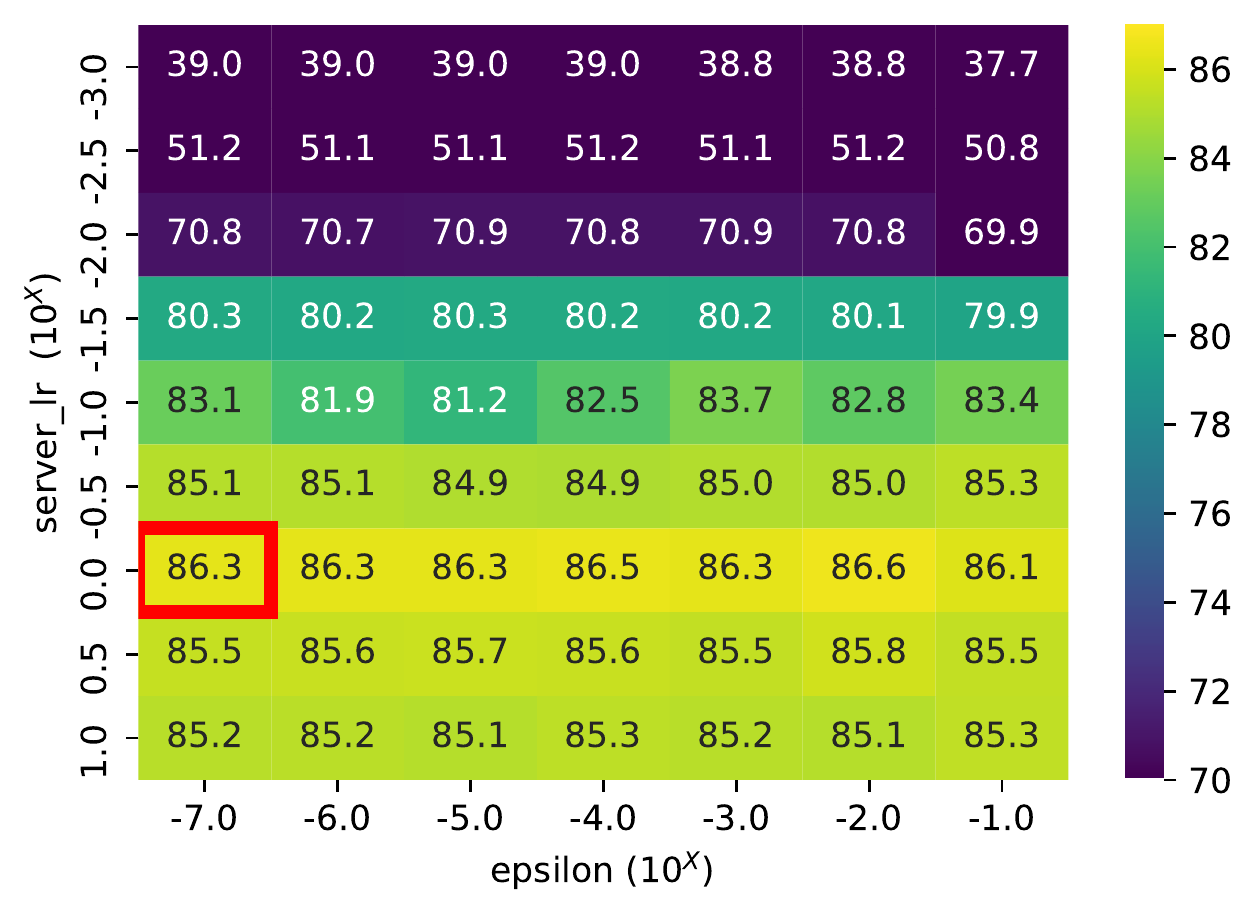}
      \end{subfigure}
      \begin{subfigure}{.31\columnwidth}
        \centering
        \includegraphics[width=\linewidth]{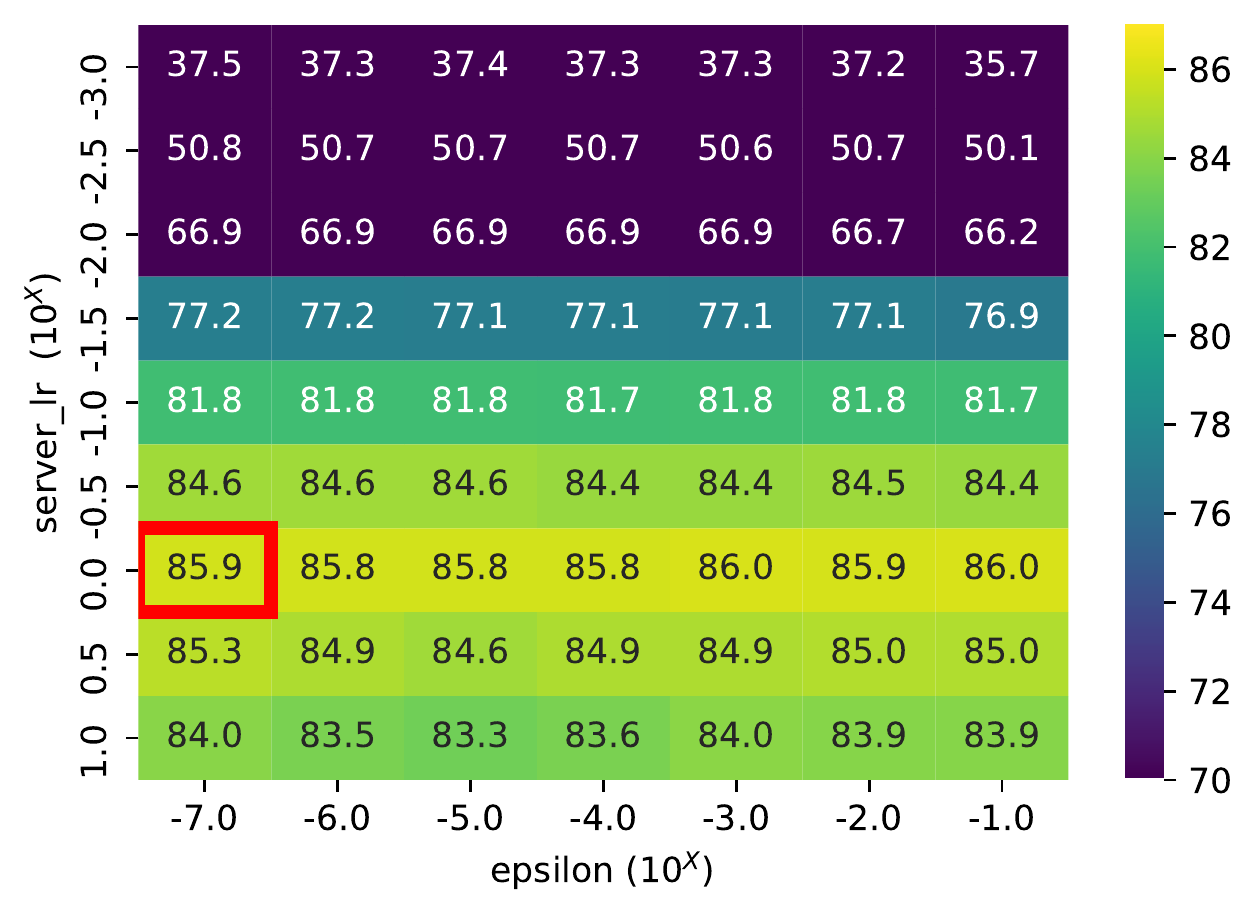}
      \end{subfigure}
      \caption{Stability of adaptive methods with varying server learning: FedAvg (left), Mime (middle) and MimeLite (right) with Adam (top) and Adagrad (bottom) as base algorithms are run on EMNIST62 with CNN. For each value of server learning rate ($y$-axis) and $\varepsilon_0$ ($x$-axis), the client learning rate was tuned over the $9\times$ grid and the accuracy reported. The red box highlights the default configuration in a centralized setting. We see that FedAdam is very sensitive to the server learning rate and $\varepsilon_0$, performing poorly in the default centralized parameter regimes. Mime and MimeLite acheive their best performance with the centralized parameters. This justifies our claim that Mime and MimeLite can \textbf{adapt} any centralized method with the same hyper-parameters and only require tuning of a single learning rate. This, we believe, is crucial for real world deployment.}
      \label{fig:experiments}
    \end{figure}
    
\begin{figure}[H]
    \centering
    \includegraphics[width=.5\linewidth]{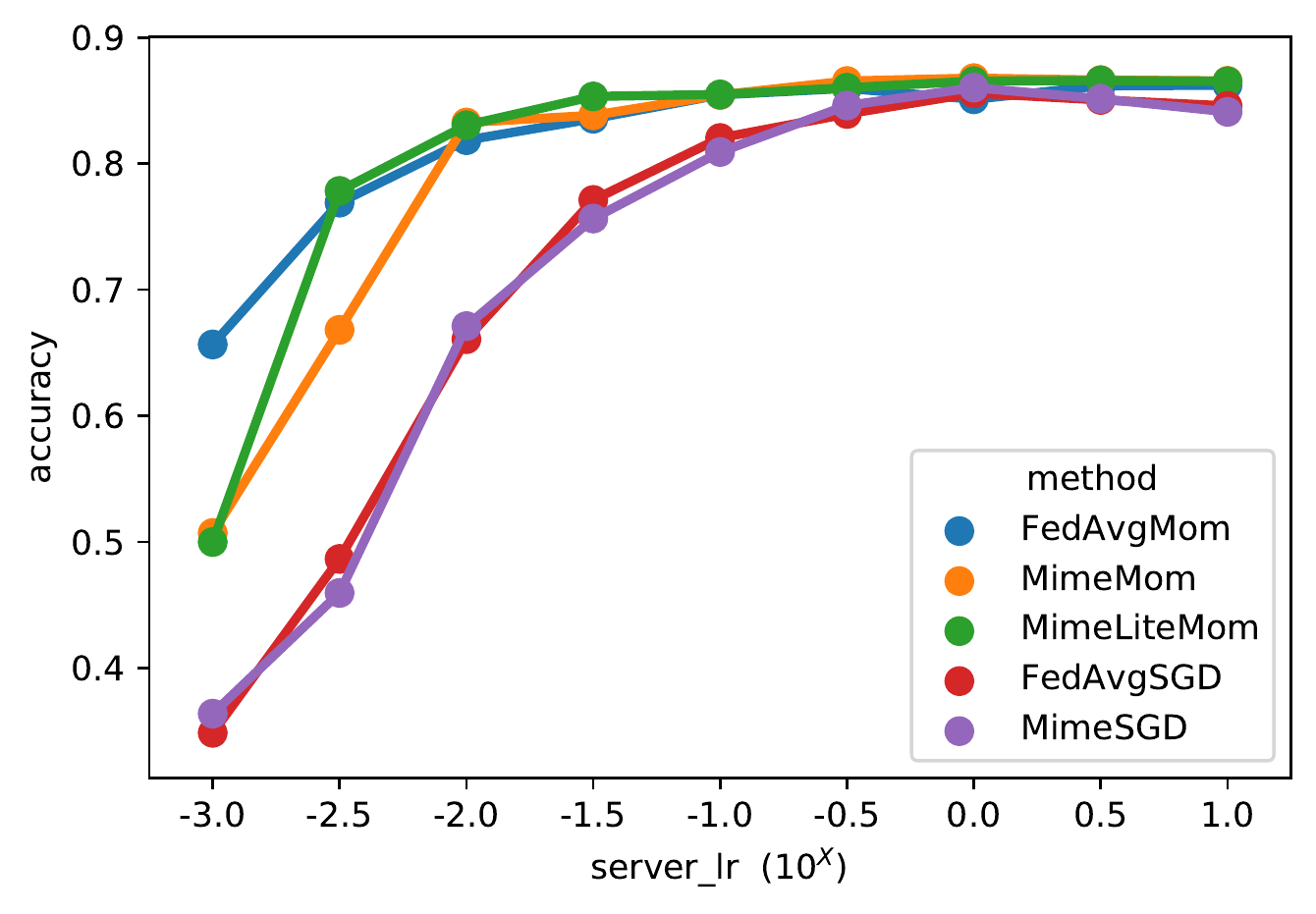}
  \caption{Stability of non-adaptive methods with varying server learning: FedAvg, Mime and MimeLite with SGD and momentum ($\beta=0.9$) as base algorithms are run on EMNIST62 with CNN. For each value of server learning rate, the client learning rate was tuned over the $9\times$ grid. The momentum methods are more insensitive to the server learning rate than the SGD methods. Server learning rate of 1 (default value) seems to work well for all methods.
}
  \label{fig:add-prox-scaffold-experiments}
\end{figure}

\section{Technicalities}\label{sec:technicalities}
We examine some additional definitions and introduce some technical lemmas.

\subsection{Assumptions and definitions}
We make precise a few definitions and explain some of their implications. We first discuss the two assumptions on the dissimilarity between the gradients \eqref{asm:heterogeneity} and the Hessians \eqref{asm:hessian-similarity}. Loosely, these two quantities are an extension of the concepts of \textbf{variance} and \textbf{smoothness} which occur in centralized SGD analysis to the federated learning setting. Just as the variance and smoothness are completely orthogonal concepts, we can have settings where $G^2$ (gradient dissimilarity) is large while $\delta$ (Hessian dissimilarity) is small, or vice-versa.

Our assumption about the bound on the $G$ gradient dissimilarity can easily be extended to $(G,B)$ gradient dissimilarity used by \cite{karimireddy2019error}:
\begin{equation}\label{eqn:heterogeneity}
  \expect_i\norm{\nabla f_i(\xx)}^2 \leq G^2 + B^2 \norm{\nabla f(\xx)}^2 \,.
\end{equation}
All the proofs in the paper extend in a straightforward manner to the above weaker notion. Since this notion does not present any novel technical challenge, we omit it in the rest of the proofs. Note however that the above weaker notion can potentially capture the fact that by increasing the model capacity, we can reduce $G$. In the extreme case, by taking a sufficiently over-parameterized model, it is possible to make $G=0$ in certain settings \cite{vaswani2018fast}. However, this comes both at a cost of increased resource requirements (i.e. higher memory and compute requirements per step) but can also result in other constants increasing (e.g. $B$ and $L$). 

The second crucial definition we use in this work is that of $\delta$ bounded \emph{Hessian} dissimilarity \eqref{asm:hessian-similarity}. This has been used previously in the analyses of distributed \cite{shamir2014communication,arjevani2015communication,reddi2016aide} and federated learning \cite{karimireddy2019scaffold}, but has been restricted to quadratics. Here, we show how to extend both the notion as well as the analysis to general smooth functions. The main manner we will use this assumption is in Lemma~\ref{lem:similarity} to claim that for any $\xx$ and $\yy$ the following holds:
\begin{equation}\label{eqn:similarity}
        \E\norm{\nabla f_i(\yy; \zeta) - \nabla f_i(\xx; \zeta) + \nabla f(\xx) - \nabla f(\yy)}^2 \leq \delta^2 \norm{\yy - \xx}^2\,.
\end{equation}
Here the expectation is over the choice of client $i$. To understand what the above condition means, it is illuminating to define $\Psi_i(\zz) = f_i(\zz; \zeta) - f(\zz)$. Then, we can rewrite \eqref{asm:hessian-similarity} and \eqref{eqn:similarity} respectively as 
\[
\norm{\nabla^2 \Psi_i(\zz)} \leq \delta \quad \text{ and } \quad \E\norm{\nabla \Psi_i(\yy) - \nabla \Psi_i(\xx)}^2 \leq \delta^2 \norm{\yy - \xx}^2\,.
\]
Thus \eqref{eqn:similarity} and \eqref{asm:hessian-similarity} are both different notions of smoothness of $\Psi_i(\xx)$ (formal definition of smoothness will follow soon). The latter definition closely matches the notion of \emph{squared-smoothness} used by \cite{arjevani2019lower} and is a promising relaxation of \eqref{asm:hessian-similarity}. However, we run into some technical issues since in our case the variable $\yy$ can also be a random variable and depend on the choice of the client $i$. Extending our results to this weaker notion of Hessian-similarity and proving tight non-convex lower bounds is an exciting theoretical challenge.

Finally note that if the functions $f_i(\xx; \zeta)$ are assumed to be smooth as in \cite{shamir2014communication,arjevani2015communication,karimireddy2019scaffold}, then $\Psi_i((\xx)$ is $2L$-smooth. Thus, we \emph{always} have that $\delta \leq 2L$.
But, as shown in \cite{shamir2014communication}, it is possible to have $\delta \lll L$ if the data distribution amongst the clients is similar. Further, the lower bound from \cite{arjevani2015communication} proves that Hessian-similarity is the crucial quantity capturing the number of rounds of communication required for distributed/federated optimization.

We next define the terms smoothness and strong-convexity which we repeatedly use in the paper.
\begin{enumerate}[leftmargin=26pt]
  \myitem{A2*}\label{asm:smoothness} $f_i$ is almost surely \textbf{L-smooth} and satisfies:
      \begin{equation}
        \label{eqn:lip-grad}
        \norm{\nabla f_i(\xx; \zeta) - \nabla f_i(\yy; \zeta)} \leq L \norm{\xx - \yy}\,, \text{ for any } \xx, \yy\,.
      \end{equation}
      The assumption \eqref{asm:smoothness} also implies the following quadratic upper bound on $f_i$
      \begin{equation}\label{eqn:quad-upper}
          f_i(\yy) \leq f_i(\xx) + \inp{\nabla f_i(\xx)}{\yy - \xx} + \frac{L}{2}\norm{\yy - \xx}^2\,.
      \end{equation}
        Further, if $f_i$ is twice-differentiable, \eqref{asm:smoothness} implies that $\norm{\nabla^2 f_i(\xx; \zeta)} \leq \beta$ for any $\xx$.
    \myitem{A3}\label{asm:noise}
    We assume that the \textbf{intra-client gradient variance} is bounded by $\sigma^2$. For any client $i$, the following holds almost surely at any fixed $\xx$:
    \begin{equation*}
        \E_{\zeta_i}[\nabla f_i(\xx; \zeta)] = \nabla f_i(\xx)\,, \quad \text{and} \quad \E_{\zeta_i}\norm{\nabla f_i(\xx; \zeta) - \nabla f_i(\xx)}^2 \leq \sigma^2\,.
      \end{equation*}
      Note that we expect the intra-client variance to be smaller than inter-client variance and so typically $\sigma^2 \leq G^2$.
      \myitem{A4}
      \label{asm:strong-convexity} $f$ satisfies the \textbf{$\mu$-PL inequality} \cite{karimi2016linear} for $\mu > 0$ if:
      \begin{equation*}\label{eqn:strong-convexity}
        \norm{\nabla f(\xx)}^2 \geq 2\mu(f(\xx) - f^\star)\,.
      \end{equation*}
        Note that PL-inequality is much weaker than the standard notion of strong-convexity, and in fact is even satisfied by some non-convex functions \cite{karimi2016linear}.      
      \end{enumerate}


\subsection{Some technical lemmas}

Now we cover some technical lemmas which are useful for
computations later on.
    First, we state a relaxed triangle inequality true for the squared
    $\ell_2$ norm.
\begin{lemma}[relaxed triangle inequality]\label{lem:norm-sum}
    Let $\{\vv_1,\dots,\vv_\tau\}$ be $\tau$ vectors in $\R^d$. Then the following are true:
    \begin{enumerate}
\item $\norm{\vv_i + \vv_j}^2 \leq (1 + c)\norm{\vv_i}^2 + (1 + \tfrac{1}{c})\norm{\vv_j}^2$ for any $c >0$, and
\item $\norm{\sum_{i=1}^\tau \vv_i}^2 \leq \tau \sum_{i=1}^\tau\norm{\vv_i}^2$.
    \end{enumerate}
\end{lemma}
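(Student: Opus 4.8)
The plan is to handle the two parts separately, each by a one-line application of a standard inequality, so that neither induction nor any heavier machinery is required.

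For part (1), I would begin from the exact expansion $\norm{\vv_i + \vv_j}^2 = \norm{\vv_i}^2 + 2\inp{\vv_i}{\vv_j} + \norm{\vv_j}^2$ and control the cross term by a weighted Young's inequality. Concretely, for any $c > 0$ the nonnegativity of a square gives $0 \leq \norm{\sqrt{c}\,\vv_i - \tfrac{1}{\sqrt{c}}\,\vv_j}^2 = c\norm{\vv_i}^2 - 2\inp{\vv_i}{\vv_j} + \tfrac{1}{c}\norm{\vv_j}^2$, which rearranges to $2\inp{\vv_i}{\vv_j} \leq c\norm{\vv_i}^2 + \tfrac{1}{c}\norm{\vv_j}^2$. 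Substituting this into the expansion and collecting coefficients yields exactly $(1+c)\norm{\vv_i}^2 + (1 + \tfrac{1}{c})\norm{\vv_j}^2$, as claimed.

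For part (2), I would invoke convexity of the squared Euclidean norm (equivalently, Cauchy--Schwarz). Writing the sum as $\tau$ times its average and applying Jensen's inequality gives $\norm{\tfrac{1}{\tau}\sum_{i=1}^\tau \vv_i}^2 \leq \tfrac{1}{\tau}\sum_{i=1}^\tau \norm{\vv_i}^2$; multiplying both sides by $\tau^2$ produces the stated bound $\norm{\sum_{i=1}^\tau \vv_i}^2 \leq \tau \sum_{i=1}^\tau \norm{\vv_i}^2$. Equivalently, one may pair the all-ones vector with $(\norm{\vv_1},\dots,\norm{\vv_\tau})$ via Cauchy--Schwarz, or, alternatively, argue by induction on $\tau$ using part (1) with the choice $c = \tfrac{1}{\tau-1}$ at the inductive step, all of which recover the same constant $\tau$.

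There is no genuine obstacle here, as both claims are elementary; the only decision is cosmetic, namely whether to present part (2) via Jensen/Cauchy--Schwarz (cleanest, and making the tightness of the constant $\tau$ transparent) or by unrolling part (1) inductively (same factor, more bookkeeping). I would adopt the Jensen/Cauchy--Schwarz route for brevity.
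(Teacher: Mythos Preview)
Your proposal is correct and essentially identical to the paper's own proof: the paper likewise derives part (1) from the nonnegativity of a squared combination of $\sqrt{c}\,\vv_i$ and $\tfrac{1}{\sqrt{c}}\,\vv_j$ (stated there as an algebraic identity), and proves part (2) via Jensen's inequality applied to the convex map $\xx\mapsto\norm{\xx}^2$. The only cosmetic difference is that you expand first and then bound the cross term, whereas the paper writes the identity in one line; the content is the same.
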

\begin{proof}
The proof of the first statement for any $c > 0$ follows from the identity:
\[
    \norm{\vv_i + \vv_j}^2 = (1 + c)\norm{\vv_i}^2 + (1 + \tfrac{1}{c})\norm{\vv_j}^2 - \norm{\sqrt{c} \vv_i + \tfrac{1}{\sqrt{c}}\vv_j}^2\,.
\]
For the second inequality, we use the convexity of
$\xx \rightarrow \norm{\xx}^2$ and Jensen's inequality
\[
     \norm[\bigg]{\frac{1}{\tau}\sum_{i=1}^\tau \vv_i }^2 \leq \frac{1}{\tau}\sum_{i=1}^\tau\norm[\big]{ \vv_i }^2\,. \qedhere
\]
\end{proof}

Next we state an elementary lemma about expectations of norms of
random vectors.
\begin{lemma}[separating mean and variance]\label{lem:independent}
Let $\{\Xi_1,\dots,\Xi_{\tau}\}$ be $\tau$ random variables in $\R^d$ which are not necessarily independent. First suppose that their mean is $\E[\Xi_i] = \xi_i$ and variance is bounded as $\E[\norm{\Xi_i - \xi_i}^2]\leq \sigma^2$. Then, the following holds
\[
    \E[\norm{\sum_{i=1}^\tau \Xi_i}^2] \leq \norm{\sum_{i=1}^\tau \xi_i}^2+ \tau^2 \sigma^2\,.
\]
Now instead suppose that their \emph{conditional mean} is $\E[\Xi_i | \Xi_{i-1}, \dots \Xi_{1}] = \xi_i$ i.e. the variables $\{\Xi_i - \xi_i\}$ form a martingale difference sequence, and the variance is bounded by $\E[\norm{\Xi_i - \xi_i}^2]\leq \sigma^2$ as before. Then we can show the tighter bound
\[
    \E[\norm{\sum_{i=1}^\tau \Xi_i}^2] \leq 2\norm{\sum_{i=1}^\tau \xi_i}^2+ 2\tau \sigma^2\,.
\]
\end{lemma}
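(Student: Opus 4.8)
The plan is to reduce both statements to the two elementary inequalities of Lemma~\ref{lem:norm-sum} after isolating the zero-mean ``noise'' part of each $\Xi_i$. I would define the centered variables $Z_i := \Xi_i - \xi_i$, so that $\tsum_i \Xi_i = \tsum_i \xi_i + \tsum_i Z_i$ and, by hypothesis, $\E\norm{Z_i}^2 \le \sigma^2$.

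For the first bound the means $\xi_i$ are deterministic, so expanding the square gives
\[
\E\norm[\Big]{\tsum_i \Xi_i}^2 = \norm[\Big]{\tsum_i \xi_i}^2 + 2\inp[\Big]{\tsum_i \xi_i}{\tsum_i \E[Z_i]} + \E\norm[\Big]{\tsum_i Z_i}^2\,,
\]
and the cross term vanishes because $\E[Z_i] = \0$. It then remains to control $\E\norm{\tsum_i Z_i}^2$. Since the $Z_i$ need not be independent, I would simply invoke the second inequality of Lemma~\ref{lem:norm-sum}, namely $\norm{\tsum_i Z_i}^2 \le \tau \tsum_i \norm{Z_i}^2$, and take expectations to obtain the claimed $\tau^2 \sigma^2$.

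For the second (tighter) bound the means $\xi_i = \E[\Xi_i \mid \cF_{i-1}]$ are themselves random and $\cF_{i-1}$-measurable, so the cross term no longer vanishes and the exact expansion above is unavailable. Instead I would apply the first inequality of Lemma~\ref{lem:norm-sum} with $c = 1$ to split $\E\norm{\tsum_i \xi_i + \tsum_i Z_i}^2 \le 2\E\norm{\tsum_i \xi_i}^2 + 2\E\norm{\tsum_i Z_i}^2$, which accounts for the factor $2$. The improvement from $\tau^2\sigma^2$ to $\tau\sigma^2$ comes entirely from the martingale-difference structure: for $i > j$ the increment $Z_j$ is $\cF_{i-1}$-measurable while $\E[Z_i \mid \cF_{i-1}] = \0$, so the tower property gives $\E\inp{Z_i}{Z_j} = \E\inp{\E[Z_i\mid\cF_{i-1}]}{Z_j} = 0$. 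Hence all off-diagonal terms drop out and $\E\norm{\tsum_i Z_i}^2 = \tsum_i \E\norm{Z_i}^2 \le \tau\sigma^2$, which completes the bound.

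I expect the only step requiring genuine care to be this vanishing of the martingale cross terms, i.e.\ correctly conditioning on $\cF_{i-1}$ and exploiting measurability of the earlier increment; everything else is a direct application of the relaxed triangle inequalities already established in Lemma~\ref{lem:norm-sum}.
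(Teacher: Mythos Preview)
Your proposal is correct and follows essentially the same approach as the paper: both arguments center the variables, use the bias--variance split (exact for deterministic $\xi_i$, replaced by the $c=1$ relaxed triangle inequality when the $\xi_i$ are random), and then control $\E\norm{\tsum_i Z_i}^2$ either crudely via Lemma~\ref{lem:norm-sum} or sharply via vanishing martingale cross terms. Your treatment of the conditioning in the martingale case is in fact slightly more explicit than the paper's.
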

\begin{proof}
For any random variable $X$, $\E[X^2] = (\E[X - \E[X]])^2 + (\E[X])^2$ implying
\[
     \E[\norm{\sum_{i=1}^\tau \Xi_i}^2] = \norm{\sum_{i=1}^\tau \xi_i}^2 +  \E[\norm{\sum_{i=1}^\tau \Xi_i - \xi_i}^2]  \,.
\]
Expanding the above expression using relaxed triangle inequality (Lemma~\ref{lem:norm-sum}) proves the first claim:
\[
   \E[\norm{\sum_{i=1}^\tau \Xi_i - \xi_i}^2]  \leq \tau\sum_{i=1}^\tau \E[\norm{ \Xi_i - \xi_i}^2] \leq \tau^2\sigma^2\,.
\]
For the second statement, $\xi_i$ is not deterministic and depends on $\Xi_{i-1}, \dots, \Xi_1$. Hence we have to resort to the cruder relaxed triangle inequality to claim
\[
    \E[\norm{\sum_{i=1}^\tau \Xi_i}^2] \leq 2\norm{\sum_{i=1}^\tau \xi_i}^2 +  2\E[\norm{\sum_{i=1}^\tau \Xi_i - \xi_i}^2] 
\]
and then use the tighter expansion of the second term:
\[
   \E[\norm{\sum_{i=1}^\tau \Xi_i - \xi_i}^2]  = \sum_{i, j} \E\sbr*{(\Xi_i - \xi_i)^\top(\Xi_j - \xi_j)} = \sum_{i} \E\sbr*{\norm{\Xi_i - \xi_i}^2} \leq \tau \sigma^2\,.
\]
The cross terms in the above expression have zero mean since $\{\Xi_i - \xi_i\}$ form a martingale difference sequence.
\end{proof}


\subsection{Properties of functions with bounded Hessian dissimilarity}

We now study two lemmas which hold for any functions which satisfy \eqref{asm:hessian-similarity} and \eqref{asm:noise}. The first is closely related to the notion of smoothness \eqref{asm:smoothness}.
 \begin{lemma}[similarity]\label{lem:similarity}
    The following holds for any two functions $f_i(\cdot)$ and $f(\cdot)$ satisfying \eqref{asm:hessian-similarity} and \eqref{asm:noise}, and any $\xx, \yy$:
    \[
        \norm{\nabla f_i(\yy; \zeta) - \nabla f_i(\xx; \zeta) + \nabla f(\xx) - \nabla f(\yy)}^2 \leq \delta^2 \norm{\yy - \xx}^2\,.
    \]
\end{lemma}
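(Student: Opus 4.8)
The plan is to recognize the left-hand side as nothing more than a $\delta$-smoothness statement for the difference function $\Psi_i(\zz) := f_i(\zz; \zeta) - f(\zz)$. First I would regroup the four terms inside the norm, using $\nabla \Psi_i(\zz) = \nabla f_i(\zz;\zeta) - \nabla f(\zz)$, to get the identity
\[
\nabla f_i(\yy;\zeta) - \nabla f_i(\xx;\zeta) + \nabla f(\xx) - \nabla f(\yy) = \nabla \Psi_i(\yy) - \nabla \Psi_i(\xx)\,.
\]
Hence the inequality to establish is exactly $\norm{\nabla \Psi_i(\yy) - \nabla \Psi_i(\xx)}^2 \leq \delta^2 \norm{\yy - \xx}^2$, i.e.\ that $\Psi_i$ has a $\delta$-Lipschitz gradient.

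Next I would translate Assumption~\eqref{asm:hessian-similarity} into a uniform bound on the Hessian of $\Psi_i$. Since $\nabla^2 \Psi_i(\zz) = \nabla^2 f_i(\zz;\zeta) - \nabla^2 f(\zz)$, the $\delta$-BHV assumption states precisely that $\norm{\nabla^2 \Psi_i(\zz)} \leq \delta$ for every $\zz$ (almost surely), where the norm is the operator norm. I would then invoke the fundamental theorem of calculus along the segment joining $\xx$ and $\yy$,
\[
\nabla \Psi_i(\yy) - \nabla \Psi_i(\xx) = \int_0^1 \nabla^2 \Psi_i(\xx + t(\yy - \xx))\,(\yy - \xx)\, dt\,,
\]
take norms, pull the norm inside the integral, and bound each integrand factor to obtain
\[
\norm{\nabla \Psi_i(\yy) - \nabla \Psi_i(\xx)} \leq \int_0^1 \norm{\nabla^2 \Psi_i(\xx + t(\yy - \xx))}\,\norm{\yy - \xx}\,dt \leq \delta\,\norm{\yy - \xx}\,.
\]
Squaring both sides yields the claim.

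The argument is routine and I anticipate no serious obstacle; the only point requiring care is the regularity hypothesis. The integral representation presumes $f_i(\cdot;\zeta)$ and $f$ are twice differentiable, but this is already implicit in Assumption~\eqref{asm:hessian-similarity}, which references Hessians. I also note that Assumption~\eqref{asm:noise} is not actually used in this pointwise estimate—it appears in the lemma hypotheses only because the result is stated and later applied in a context where both \eqref{asm:hessian-similarity} and \eqref{asm:noise} are in force. Were twice-differentiability unavailable, I would instead argue via a Lipschitz-gradient characterization or a mollification of $\Psi_i$, but under the stated assumptions the Hessian route is the cleanest.
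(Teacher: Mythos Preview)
Your proposal is correct and follows essentially the same approach as the paper: define the difference function $\Psi(\zz) := f_i(\zz;\zeta) - f(\zz)$, observe that \eqref{asm:hessian-similarity} gives $\norm{\nabla^2 \Psi(\zz)} \leq \delta$, and conclude $\delta$-Lipschitzness of $\nabla \Psi$. The only cosmetic difference is that you spell out the integral representation via the fundamental theorem of calculus, whereas the paper simply invokes ``standard arguments based on taking limits'' for the implication; your observation that \eqref{asm:noise} is not actually needed here is also correct.
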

\begin{proof}
    Consider the function $\Psi(\zz) : = f_i(\zz; \zeta) - f(\zz)$. By the assumption \eqref{asm:hessian-similarity}, we know that $\norm{\nabla^2 \Psi(\zz)} \leq \delta$ for all $\zz$ i.e. $\Psi$ is $\delta$-smooth. By standard arguments based on taking limits \cite{nesterov2018lectures}, this implies that
    \[
      \norm{\nabla \Psi(\yy) - \nabla \Psi(\xx)} \leq \delta\norm{\yy - \xx}\,.
      \]
      Plugging back the definition of $\Psi$ into the above inequality proves the lemma.
\end{proof}

Next, we see how weakly-convex functions satisfy a weaker notion of ``averaging does not hurt''. This is used to get a handle on the effect of averaging of parameters in FedAvg.

\begin{lemma}[averaging]\label{lem:averaging}
  Suppose $f$ is $\delta$-weakly convex. Then, for any $\gamma \geq \delta$, and a sequence of parameters $\{\yy_i\}_{i \in \cS}$ and $\xx$:
  \[
      \frac{1}{\abs{\cS}}\sum_{i \in \cS} f(\yy_i) + \frac{\gamma}{2}\norm{\xx - \yy_i}^2 \geq f(\bar\yy) + \frac{\gamma}{2}\norm{\xx - \bar\yy}^2  \,, \text{ where } \bar \yy := \frac{1}{\abs{\cS}}\sum_{i \in \cS}\yy_i\,.
  \]
\end{lemma}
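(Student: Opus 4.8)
The plan is to reduce the statement to a single application of Jensen's inequality for a cleverly chosen convex function. Recall that $f$ being $\delta$-weakly convex means precisely that $\zz \mapsto f(\zz) + \tfrac{\delta}{2}\norm{\zz}^2$ is convex. With the point $\xx$ held fixed, I would introduce the auxiliary function
\[
  g(\zz) := f(\zz) + \frac{\gamma}{2}\norm{\zz - \xx}^2\,,
\]
and the first step is to show that $g$ is convex whenever $\gamma \geq \delta$.

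To establish this, I would split $g$ into two pieces,
\[
  g(\zz) = \Big[f(\zz) + \tfrac{\delta}{2}\norm{\zz}^2\Big] + \Big[\tfrac{\gamma}{2}\norm{\zz - \xx}^2 - \tfrac{\delta}{2}\norm{\zz}^2\Big]\,.
\]
The first bracket is convex by the very definition of $\delta$-weak convexity. Expanding the second bracket gives $\tfrac{\gamma - \delta}{2}\norm{\zz}^2 - \gamma\inp{\zz}{\xx} + \tfrac{\gamma}{2}\norm{\xx}^2$, which is a convex quadratic in $\zz$ precisely because the coefficient $\gamma - \delta$ is nonnegative (the remaining terms being linear or constant in $\zz$). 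Since a sum of two convex functions is convex, $g$ is convex, and this is exactly where the hypothesis $\gamma \geq \delta$ is consumed.

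With the convexity of $g$ in hand, the conclusion is immediate. As $\bar\yy = \tfrac{1}{\abs{\cS}}\sum_{i \in \cS}\yy_i$ is a convex combination of the $\yy_i$, Jensen's inequality yields
\[
  g(\bar\yy) \leq \frac{1}{\abs{\cS}}\sum_{i \in \cS} g(\yy_i)\,,
\]
and unfolding the definition of $g$ on both sides recovers exactly the claimed bound. The only mildly delicate point is the convexity bookkeeping in the first step; after that the result follows from a one-line appeal to Jensen, so I do not anticipate any substantive obstacle.
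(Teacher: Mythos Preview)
Your proof is correct and follows essentially the same route as the paper: define the auxiliary function $g(\zz)=f(\zz)+\tfrac{\gamma}{2}\norm{\zz-\xx}^2$, observe it is convex for $\gamma\ge\delta$, and apply Jensen. (In fact the paper's one-line proof has the Jensen inequality written backwards; your version states it with the correct orientation.)
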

\begin{proof}
  Since $f$ is $\delta$-weakly convex, $\Phi(\zz) := f(\zz) + \frac{\gamma}{2}\norm{\zz - \xx}^2$ is convex. This proves the claim since $\frac{1}{\abs{\cS}}\sum_{i \in \cS}\Phi(\yy_i) \leq \Phi(\bar \yy)$.
\end{proof}


\newpage
\section{Convergence with a generic base optimizer}\label{sec:reduction-analysis}

Let us rewrite the Mime and MimeLite updates using notation convenient for analysis. In each round $t$, we sample clients $\cS^{t}$ such that $\abs{\cS^t} = S$. The server communicates the server parameters $\xx^{t-1}$ as well as the average gradient across the sampled clients $\cc^t$ defined as
\begin{equation}\label{eqn:def-mimesgd-control}
  \cc^t = \frac{1}{S}\sum_{i\in \cS^t} \nabla f_i(\xx^{t-1})\,.
\end{equation} 
Note that computing $\cc^t$ (required only by Mime but not by MimeLite) itself requires additional communication. In this proof, we do not make any assumption on how $\cc^t$ is computed as long as it is unbiased and is computed over $S$ clients. In particular, it can either be computed on the sampled $\cS^t$ or a different set of an independent sampled clients $\tilde\cS^t$.

Then each client $i \in \cS^t$  makes a copy $\yy_{i, 0}^t = \xx^{t-1}$ and perform $K$ local client updates. In each local client update $k \in [K]$, the client samples a dataset $\zeta_{i,k}^t$ and 
\begin{align}\label{eqn:def-mimesgd-update}
  \yy_{i, k}^t &= \yy_{i, k-1}^t - \eta \cU(\nabla f_i(\yy_{i, k-1}^t; \zeta_{i,k}^t) - \nabla f_i(\xx^{t-1}; \zeta_{i,k}^t) + \cc^t; \ss^{t-1})\tag{Mime client update}\\
  &= \yy_{i, k-1}^t - \eta \cU(\nabla f_i(\yy_{i, k-1}^t; \zeta_{i,k}^t); \ss^{t-1})\tag{MimeLite client update}\,.
\end{align}
After $K$ such local updates, the server then aggregates the new client parameters as
\begin{align}\label{eqn:def-mimesgd-aggregation}
  \xx^t &= \frac{1}{S}\sum_{i\in\cS^t}\yy_{i,K}^t \tag{Update server parameters}\\
  \ss^t &= \cV(\cc^t, \ss^{t-1}) \tag{Update server statistics}\,.
\end{align}

\subsection{Proof of Theorem~\ref{thm:reduction} (generic reduction)}
\paragraph{Computing server update.}

\begin{lemma}[Deviation from central update.]\label{lem:reduction-server}
  For a linear updater $\cU$ for both Mime and MimeLite the server update can be written as
  \[
        \xx^{t} = \xx^{t-1} - \tilde\eta\cU\rbr*{\frac{1}{S}\sum_{i} \nabla f_i(\xx) + \badbox{\text{$\ee^t$}}; \ss^{t-1}}\,,
  \]
  for $\tilde\eta := K\eta$ and $\badbox{\text{$\ee^t$}} = \frac{1}{KS}\sum_{i, k} (\nabla f_i(\yy_{i, k-1}; \zeta_{i,k}) - \nabla f_i(\xx; \zeta_{i,k}))$.
\end{lemma}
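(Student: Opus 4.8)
The plan is to unroll the $K$ local steps of each client into a single effective step and then average over the sampled clients, pulling every sum outside of $\cU$ by its linearity (with the optimizer state $\ss^{t-1}$ held fixed throughout the round). Fix a round $t$ and abbreviate $\xx := \xx^{t-1}$, $\ss := \ss^{t-1}$, and $\cc := \cc^t$; by \eqref{eqn:def-mimesgd-control} we have $\cc = \tfrac{1}{S}\sum_{i}\nabla f_i(\xx)$. For a client $i \in \cS^t$ let $\gg_{i,k}$ denote the gradient passed to $\cU$ at local step $k$, i.e.\ $\gg_{i,k} = \nabla f_i(\yy_{i,k-1};\zeta_{i,k}) - \nabla f_i(\xx;\zeta_{i,k}) + \cc$ for \mime and $\gg_{i,k} = \nabla f_i(\yy_{i,k-1};\zeta_{i,k})$ for \mimelite.

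First I would telescope the client recursion \eqref{eqn:def-mimesgd-update} to obtain $\yy_{i,K} = \xx - \eta\sum_{k=1}^{K}\cU(\gg_{i,k};\ss)$. Because $\cU(\cdot\,;\ss)$ is linear and the state is identical at every local step, the $K$ summands collapse into $\sum_{k=1}^{K}\cU(\gg_{i,k};\ss) = K\,\cU\!\big(\tfrac{1}{K}\sum_{k=1}^{K}\gg_{i,k};\ss\big)$, so that $\yy_{i,K} = \xx - \tilde\eta\,\cU\!\big(\tfrac{1}{K}\sum_{k}\gg_{i,k};\ss\big)$ with $\tilde\eta = K\eta$. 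Averaging over the $S$ clients via \eqref{eqn:def-mimesgd-aggregation} and applying linearity once more gives $\xx^{t} = \xx - \tilde\eta\,\cU\!\big(\tfrac{1}{KS}\sum_{i,k}\gg_{i,k};\ss\big)$.

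It then remains to split the averaged gradient $\tfrac{1}{KS}\sum_{i,k}\gg_{i,k}$ into the centralized term $\tfrac{1}{S}\sum_i\nabla f_i(\xx)$ plus the perturbation $\ee^t$. For \mime the constant control variate $\cc$ passes through the average untouched, and since $\cc = \tfrac{1}{S}\sum_i\nabla f_i(\xx)$ one reads off $\ee^t = \tfrac{1}{KS}\sum_{i,k}\big(\nabla f_i(\yy_{i,k-1};\zeta_{i,k}) - \nabla f_i(\xx;\zeta_{i,k})\big)$, exactly as claimed. For \mimelite I would instead add and subtract $\tfrac{1}{S}\sum_i\nabla f_i(\xx)$, producing the analogous perturbation $\ee^t = \tfrac{1}{KS}\sum_{i,k}\big(\nabla f_i(\yy_{i,k-1};\zeta_{i,k}) - \nabla f_i(\xx)\big)$, with the full-batch gradient replacing its stochastic counterpart.

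The one point requiring care is the bookkeeping of the fixed optimizer state under linearity. Each of the $K$ calls to $\cU$ contributes the same state-dependent offset (e.g.\ the $\beta\ss$ term for momentum), and the collapse of the sum is valid precisely because this $K$-fold repetition is absorbed into the rescaling $\tilde\eta = K\eta$, leaving the reduced update carrying the state exactly once. I expect this to be the main (if modest) obstacle; everything else is the two applications of linearity and the additive rearrangement above.
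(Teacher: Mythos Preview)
Your overall approach—telescoping the client recursion, using linearity of $\cU(\cdot;\ss)$ with the fixed state, averaging over clients, and then splitting off the centralized gradient—is exactly what the paper does. For \mime your argument is complete and matches the paper: the control variate $\cc=\tfrac{1}{S}\sum_i\nabla f_i(\xx)$ passes through the average and the remainder is precisely the stated $\ee^t$.

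For \mimelite, however, there is a gap. You end up with
\[
\ee^t \;=\; \frac{1}{KS}\sum_{i,k}\bigl(\nabla f_i(\yy_{i,k-1};\zeta_{i,k}) - \nabla f_i(\xx)\bigr),
\]
with the \emph{full-batch} gradient at $\xx$, whereas the lemma claims the \emph{same} $\ee^t$ for both algorithms, namely with the \emph{stochastic} gradient $\nabla f_i(\xx;\zeta_{i,k})$. These two expressions are not equal in general; they coincide only when $\tfrac{1}{K}\sum_{k}\nabla f_i(\xx;\zeta_{i,k}) = \nabla f_i(\xx)$. The paper closes exactly this gap by invoking the standing convention that $K$ is a multiple of the local epoch length, so that the $K$ minibatches sweep the client's dataset an integer number of times and the minibatch gradients at the fixed point $\xx$ average to the full-batch gradient. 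You should state and use this epoch assumption explicitly in the \mimelite case; without it, your \mimelite derivation does not establish the lemma as written.

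Your closing remark about the state-dependent offset is well taken and in fact more careful than the paper: the identity $\sum_{k=1}^K\cU(\gg_k;\ss)=K\,\cU(\tfrac{1}{K}\sum_k\gg_k;\ss)$ holds for maps that are affine in the first argument (such as SGDm, where $\cU(\gg,\ss)=(1-\beta)\gg+\beta\ss$), precisely because the repeated offset is absorbed into the $K$-fold rescaling $\tilde\eta=K\eta$. This is consistent with the paper's use of ``linearity'' and poses no issue for the proof.
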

\begin{proof}
Because the updater $\cU$ is linear in its first parameter, we can rewrite the update to the server for MimeLite as
\begin{align*}
    \xx^t - \xx^{t-1} &= \frac{1}{S}\sum_{i \in \cS^t}\sum_{k=1}^K -\eta \cU(\nabla f_i(\yy^t_{i, k-1}; \zeta^t_{i,k}); \ss^{t-1})\\
    &= \eta K \cU\rbr*{\frac{1}{KS}\sum_{i, k} \nabla f_i(\yy^t_{i, k-1}; \zeta^t_{i,k}); \ss^{t-1}}
\end{align*}
We drop the dependence on $t$ when obvious from context and $i$ by default sums over $\cS^t$ and $k$ over $[K]$ by default. Since $K$ represents a multiple of epochs, we have $\sum_k \nabla f_i(\xx; \zeta^t_{i,k}) = K \nabla f_i(\xx)$. Continuing,
\begin{align*}
    \xx^t - \xx^{t-1} &= \eta K \cU\rbr*{\frac{1}{KS}\sum_{i, k} \nabla f_i(\yy^t_{i, k-1}; \zeta^t_{i,k}); \ss^{t-1}}\\
    &= \tilde\eta\cU\rbr*{\frac{1}{S}\sum_{i} \nabla f_i(\xx) + \ee^t; \ss^{t-1}}
\end{align*}
where
\[
\ee^t = \frac{1}{KS}\sum_{i, k} (\nabla f_i(\yy_{i, k-1}; \zeta_{i,k}) - \nabla f_i(\xx; \zeta_{i,k}))
\]
Now let us examine the update of Mime. Again assuming $K$ is a multiple of epoch, we have $\sum_{i,k}\nabla f_i(\xx; \zeta^t_{i,k}) = K\sum_{i}\nabla f_i(\xx) = KS \xx$. Hence,
\begin{align*}
    \xx^t - \xx^{t-1} &= \frac{1}{S}\sum_{i \in \cS^t}\sum_{k=1}^K -\eta \cU(\nabla f_i(\yy_{i, k-1}; \zeta^t_{i,k})- \nabla f_i(\xx; \zeta^t_{i,k}) + \cc; \ss^{t-1})\\
    &= \eta K \cU\rbr*{\frac{1}{KS}\sum_{i, k} \nabla f_i(\yy^t_{i, k-1}; \zeta^t_{i,k}); \ss^{t-1}}\\
    &= \eta K \cU\rbr*{\frac{1}{S}\sum_{i} \nabla f_i(\xx) + \ee^t; \ss^{t-1}}\,.
\end{align*}
Thus we showed the lemma for both Mime and MimeLite.
\end{proof}

\begin{lemma}[Defining error]\label{lem:reduction-error}
  For $\ee^t$ defined in Lemma~\ref{lem:reduction-server}, assuming all functions $f_i(\,\cdot\,, \zeta)$ are $L$-smooth, we have
  \[
    \E\norm{\ee^t}^2 \leq L^2\underbrace{\frac{1}{KS}\sum_{i,k}\E\norm{\yy_{i,k-1} - \xx}^2}_{=: \cE_K^t }\,.
  \]
\end{lemma}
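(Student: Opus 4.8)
The plan is to bound $\norm{\ee^t}^2$ pointwise by pulling the squared norm inside the double average and then invoking $L$-smoothness term by term, deferring the expectation to the very end. Recall from Lemma~\ref{lem:reduction-server} that
\[
\ee^t = \frac{1}{KS}\sum_{i,k}\rbr*{\nabla f_i(\yy_{i,k-1}; \zeta_{i,k}) - \nabla f_i(\xx; \zeta_{i,k})}\,,
\]
which is simply the average of $KS$ vectors. The whole argument is a two-line combination of Jensen's inequality and per-sample smoothness.

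First I would apply the averaged form of the relaxed triangle inequality (Lemma~\ref{lem:norm-sum}, part 2), i.e.\ the convexity of the squared norm, to move $\norm{\cdot}^2$ inside the sum:
\[
\norm{\ee^t}^2 \leq \frac{1}{KS}\sum_{i,k}\norm{\nabla f_i(\yy_{i,k-1}; \zeta_{i,k}) - \nabla f_i(\xx; \zeta_{i,k})}^2\,.
\]
Next, since each realized loss $f_i(\,\cdot\,; \zeta)$ is almost surely $L$-smooth, the gradient-Lipschitz bound \eqref{eqn:lip-grad} applied to each summand gives $\norm{\nabla f_i(\yy_{i,k-1}; \zeta_{i,k}) - \nabla f_i(\xx; \zeta_{i,k})} \leq L\norm{\yy_{i,k-1} - \xx}$. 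Substituting this and taking expectations yields exactly
\[
\E\norm{\ee^t}^2 \leq \frac{L^2}{KS}\sum_{i,k}\E\norm{\yy_{i,k-1} - \xx}^2 = L^2\,\cE_K^t\,.
\]

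There is no genuine obstacle here: the claim is an immediate consequence of Jensen plus per-sample smoothness. The only point worth flagging is that the difference $\nabla f_i(\yy_{i,k-1}; \zeta_{i,k}) - \nabla f_i(\xx; \zeta_{i,k})$ shares the \emph{same} minibatch $\zeta_{i,k}$ in both gradient terms, so the stochastic smoothness bound \eqref{eqn:lip-grad} applies directly to each summand without any need to separate mean from variance. Because the resulting inequality holds pointwise (before averaging over the randomness), taking the expectation at the end is harmless and introduces no additional terms.
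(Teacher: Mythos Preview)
Your proof is correct and essentially identical to the paper's: both apply Jensen's inequality (convexity of $\norm{\cdot}^2$) to pull the squared norm inside the average, then use per-sample $L$-smoothness on each term. The only cosmetic difference is that the paper takes expectations first and you take them last, which is immaterial since the pointwise bound already holds.
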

\begin{proof}
  Using the smoothness of the individual functions and the definition of $\ee^t$,
  \begin{align*}
  \E\norm{\ee^t}^2 &= \E\norm{ \frac{1}{KS}\sum_{i, k} (\nabla f_i(\yy_{i, k-1}; \zeta_{i,k}) - \nabla f_i(\xx; \zeta_{i,k}))}^2 \\
  &\leq \frac{1}{KS}\sum_{i,k}\E\norm{\nabla f_i(\yy_{i, k-1}; \zeta_{i,k}) - \nabla f_i(\xx; \zeta_{i,k})}^2 \leq L^2\cE^t_K\,.
  \end{align*}
\end{proof}

Henceforth, we will call $\cE^t_K$ as the error, or as the client-drift following \cite{karimireddy2019scaffold}.
  
\paragraph{Bounding error in MimeLite.} 
Now we will try bound the client drift $\cE^t$ for MimeLite.
\begin{lemma}[MimeLite error]\label{lem:reduction-mimelite-error}
  Suppose that all functions $f_i(\,\cdot\,, \zeta)$ are $L$-smooth \eqref{asm:smoothness}, $\sigma^2$ variance \eqref{asm:noise}, and \eqref{asm:heterogeneity} is satisfied, and the updater $\cU$ has $B$-Lipschitz updates. Then using step-size $\tilde\eta\leq \frac{1}{2BL}$,
  \[
    \frac{1}{18 B^2 \tilde\eta^2} \cE^K \leq \E\norm{\nabla f(\xx)}^2 + G^2 + \frac{\sigma^2}{2K}\,.
  \]
\end{lemma}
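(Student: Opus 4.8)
The plan is to unroll each client's local trajectory and bound the accumulated drift $\cE_K$ directly, using the linearity of $\cU$ to peel off the fresh mini-batch noise so that it contributes only an $\cO(\sigma^2/K)$ term while the heterogeneity and gradient terms contribute at order one. Fix the round and write $\Delta_k := \frac{1}{S}\sum_{i\in\cS}\E\norm{\yy_{i,k}-\xx}^2$, so that $\cE_K = \frac{1}{K}\sum_{k=0}^{K-1}\Delta_k$ and $\Delta_0 = 0$. Summing the MimeLite update \eqref{eqn:def-mimesgd-update} and using linearity of $\cU$ gives the telescoped form $\yy_{i,k}-\xx = -\eta\sum_{j=1}^{k}\cU(\nabla f_i(\yy_{i,j-1};\zeta_{i,j});\ss)$. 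I would then decompose each stochastic gradient as
\[
\nabla f_i(\yy_{i,j-1};\zeta_{i,j}) = \underbrace{(\nabla f_i(\yy_{i,j-1};\zeta_{i,j}) - \nabla f_i(\yy_{i,j-1}))}_{\text{noise}} + \underbrace{(\nabla f_i(\yy_{i,j-1}) - \nabla f_i(\xx))}_{\text{drift}} + \underbrace{(\nabla f_i(\xx) - \nabla f(\xx))}_{\text{heterogeneity}} + \nabla f(\xx),
\]
and, again by linearity of $\cU$, split $\yy_{i,k}-\xx$ into the four corresponding sums. Applying the relaxed triangle inequality (Lemma~\ref{lem:norm-sum}) with $\tau=4$ reduces the task to bounding the four pieces separately.

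The crux is the noise piece. Since $\E_{\zeta_{i,j}}[\nabla f_i(\yy_{i,j-1};\zeta_{i,j})]=\nabla f_i(\yy_{i,j-1})$ by \eqref{asm:noise} and $\cU$ is linear, the terms $\{\cU(\nabla f_i(\yy_{i,j-1};\zeta_{i,j}) - \nabla f_i(\yy_{i,j-1});\ss)\}_j$ form a martingale difference sequence with conditional second moment at most $B^2\sigma^2$ (using $B$-Lipschitzness of $\cU$ and \eqref{asm:noise}). The martingale version of Lemma~\ref{lem:independent} then bounds their squared sum by $k B^2\sigma^2$, i.e.\ \emph{linearly} in $k$ rather than quadratically; this linear-in-$k$ accumulation is exactly what produces the extra $1/K$ factor once we divide by $\tilde\eta^2 = K^2\eta^2$. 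For the drift piece I would use $B$-Lipschitzness together with $L$-smoothness \eqref{asm:smoothness} to get $\norm{\cU(\nabla f_i(\yy_{i,j-1}) - \nabla f_i(\xx);\ss)}^2 \leq B^2 L^2\norm{\yy_{i,j-1}-\xx}^2$, so this term feeds back $\Delta_{j-1}$; the heterogeneity piece is controlled by \eqref{asm:heterogeneity} via $\frac1S\sum_i\E\norm{\nabla f_i(\xx)-\nabla f(\xx)}^2\leq G^2$, and the last piece is simply $\norm{\nabla f(\xx)}^2$. The latter two are constant in $j$, so their contribution to $\norm{\yy_{i,k}-\xx}^2$ scales like $k^2$.

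Collecting the pieces yields a bound of the shape $\Delta_k \lesssim \eta^2 B^2\big(k L^2\sum_{j=1}^{k}\Delta_{j-1} + k^2(G^2+\norm{\nabla f(\xx)}^2) + k\sigma^2\big)$, and summing over $k=0,\dots,K-1$ gives an inequality for $\Sigma_K := \sum_{k}\Delta_k = K\cE_K$. The self-referential drift-feedback term emerges with coefficient at most $2\tilde\eta^2 B^2 L^2 \leq \tfrac12$, precisely because $\tilde\eta\leq\frac{1}{2BL}$, so it can be absorbed into the left-hand side; the remaining $\sum_k k^2\leq K^3/3$ and $\sum_k k\leq K^2/2$ then produce, after dividing by $K$, the $(G^2+\norm{\nabla f(\xx)}^2)$ terms at order $\tilde\eta^2 B^2$ and the $\sigma^2$ term at order $\tilde\eta^2 B^2/K$. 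This matches the claimed bound, with the stated constants $18$ and $\tfrac{1}{2}$ leaving ample slack. I expect the only delicate step to be the bookkeeping around the noise term: one must keep the fresh noise inside the martingale sum rather than bounding it per step, since a naive per-step relaxed-triangle argument would charge $\sigma^2$ at every one of the $K$ steps and lose the crucial $1/K$ saving.
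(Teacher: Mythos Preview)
Your proposal is correct and would yield the lemma with room to spare in the constants, but it proceeds along a genuinely different route from the paper. The paper does not telescope the trajectory; instead it sets up a one-step recursion
\[
\E\norm{\yy_{i,k}-\xx}^2 \leq \Big(1+\tfrac{2}{K-1}\Big)\E\norm{\yy_{i,k-1}-\xx}^2 + 2KB^2\eta^2\big(\E\norm{\nabla f(\xx)}^2 + G^2\big) + B^2\eta^2\sigma^2,
\]
obtained by first peeling off the mini-batch noise via the bias--variance identity (this is where linearity of $\cU$ enters, so the noise contributes only $B^2\eta^2\sigma^2$ without the extra $K$ from the relaxed triangle), and then applying Lemma~\ref{lem:norm-sum} with $c=K-1$ to the remaining deterministic part. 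Unrolling geometrically with $(1+\tfrac{2}{K-1})^k\le 9$ gives the bound directly. So the two arguments achieve the crucial $\sigma^2/K$ saving by the same mechanism---exploiting that the fresh noise is conditionally mean zero---but expose it differently: you make the martingale structure explicit by summing $K$ increments at once and invoking Lemma~\ref{lem:independent}, whereas the paper hides it inside a per-step bias--variance split and a geometric unroll. Your approach has the advantage that the feedback absorption (via $2\tilde\eta^2 B^2 L^2\le\tfrac12$) is visible in one line rather than buried in the recursion constant, and it actually produces sharper numerical constants; the paper's recursion is a bit more mechanical and reuses the same template as the \mime bound in Lemma~\ref{lem:reduction-error-mime}.
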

\begin{proof}
  For $K=1$, we have $\E\norm{\yy_{i,1} - \xx}^2 \leq B^2\eta^2(G^2 + \sigma^2) + B^2\eta^2\E\norm{\nabla f(\xx)}^2$. The lemma is easily shown to be true. Assuming $K \geq 2$ henceforth, and starting from the client update of MimeLite we have
  \begin{align*}
      \E\norm{\yy_{i,k} - \xx}^2 &= \E\norm{\yy_{i,k-1} - \eta \cU(\nabla f_i(\yy_{i, k-1}^t; \zeta_{i,k}^t); \ss^{t-1}) - \xx}^2\\
      &\leq \E\norm{\yy_{i,k-1} - \eta \cU(\nabla f_i(\yy_{i, k-1}^t; \ss^{t-1}) - \xx}^2 + B^2\eta^2\sigma^2\\
      &\leq \rbr*{1+\frac{1}{K-1}}\E\norm{\yy_{i,k-1} - \xx}^2 + K\eta^2\E\norm{\cU(\nabla f_i(\yy_{i, k-1}^t; \ss^{t-1})}^2 + B^2\eta^2\sigma^2\\
      &\leq \rbr*{1+\frac{1}{K-1}}\E\norm{\yy_{i,k-1} - \xx}^2 + KB^2 \eta^2\E\norm{\nabla f_i(\yy_{i, k-1}) \pm \nabla f_i(\xx)}^2 + B^2\eta^2\sigma^2\\
      &\leq \rbr*{1+\frac{1}{K-1}}\E\norm{\yy_{i,k-1} - \xx}^2 \\&\hspace{2cm}+ 2KB^2 \eta^2\E\norm{\nabla f_i(\xx)}^2+ 2KB^2L^2\eta^2\E\norm{\yy_{i,k-1} - \xx}^2 + B^2\eta^2\sigma^2\\
      &\leq \rbr*{1+\frac{2}{K-1}}\E\norm{\yy_{i,k-1} - \xx}^2 + 2KB^2 \eta^2\E\norm{\nabla f(\xx)}^2+ 2KB^2\eta^2G^2 + B^2\eta^2\sigma^2\,.
  \end{align*}
  Here, we used the condition on our step size that $\tilde\eta = K\eta \leq \frac{1}{2LB}$, which implies that $2KB^2L^2\eta^2 \leq \frac{1}{K-1}$. Unrolling this recursion, we have
  \[
    \E\norm{\yy_{i,k} - \xx}^2 \leq \rbr*{2KB^2 \eta^2\E\norm{\nabla f(\xx)}^2+ 2KB^2\eta^2G^2 + B^2\eta^2\sigma^2}\sum_{k=1}^K\rbr*{1+\frac{2}{K-1}}^k\,.
  \]
  Note that $\rbr*{1+\frac{2}{K-1}}^k \leq 9$. Averaging then over $k$ and $i$, we get
  \[
    \cE^t_K \leq 18 K^2 B^2 \eta^2\E\norm{\nabla f(\xx)}^2+ 18 K^2 B^2\eta^2G^2 + 9 K B^2\eta^2\sigma^2\,.
  \]
  Finally, recalling that $\tilde\eta = K \eta$ finishes the lemma.
\end{proof}

\paragraph{Bounding error in Mime.} 
Next we will try bound the client drift $\cE^t$ for Mime. The additional SVRG correction term used in Mime improves the bound on the error.
\begin{lemma}[Mime Error]\label{lem:reduction-error-mime}
  Suppose that all functions $f_i(\,\cdot\,, \zeta)$ are $L$-smooth \eqref{asm:smoothness}, $\sigma^2$ variance \eqref{asm:noise}, and \eqref{asm:heterogeneity} is satisfied, and the updater $\cU$ has $B$-Lipschitz updates. Then using step-size $\tilde\eta\leq \frac{1}{2BL}$,
  \[
     \cE^K \leq 18 B^2 \tilde\eta^2\E\norm*{\frac{1}{S}\sum_{i}\nabla_i f(\xx)}^2 \,.
  \]
\end{lemma}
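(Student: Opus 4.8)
The plan is to mirror the structure of the MimeLite error bound (Lemma~\ref{lem:reduction-mimelite-error}): establish a one-step recursion for $\E\norm{\yy_{i,k} - \xx}^2$ and then unroll it. The entire point of the SVRG correction in Mime is that it controls the per-step variance by a \emph{multiplicative} factor of $\norm{\yy_{i,k-1} - \xx}^2$ rather than by the additive constants $\sigma^2$ and $G^2$ incurred by MimeLite, and this is exactly what eliminates those two terms from the final bound. I would first write the local gradient as $\gg_{i,k} = \nabla f_i(\yy_{i,k-1}; \zeta_{i,k}) - \nabla f_i(\xx; \zeta_{i,k}) + \cc$ with conditional mean $\bar\gg_{i,k} := \E_{\zeta}[\gg_{i,k}] = \nabla f_i(\yy_{i,k-1}) - \nabla f_i(\xx) + \cc$, where $\cc = \tfrac{1}{S}\sum_i \nabla f_i(\xx)$ is fixed throughout the round.

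Using linearity of $\cU$, I would split $\cU(\gg_{i,k}) = \cU(\bar\gg_{i,k}) + \cU(\gg_{i,k} - \bar\gg_{i,k})$, the second piece being conditionally mean-zero since $\cU(0) = 0$. Conditioning on $\yy_{i,k-1}$ kills the cross term, so by separation of mean and variance (Lemma~\ref{lem:independent}),
\[ \E\norm{\yy_{i,k} - \xx}^2 = \E\norm{\yy_{i,k-1} - \xx - \eta\cU(\bar\gg_{i,k})}^2 + \eta^2\E\norm{\cU(\gg_{i,k} - \bar\gg_{i,k})}^2 \,. \]
For the variance term I would invoke $B$-Lipschitzness of $\cU$ together with the crucial observation that the SVRG correction makes the stochastic pieces cancel: since both gradients use the same sample $\zeta$, $\E\norm{\gg_{i,k} - \bar\gg_{i,k}}^2 \leq \E\norm{\nabla f_i(\yy_{i,k-1}; \zeta) - \nabla f_i(\xx; \zeta)}^2 \leq L^2\norm{\yy_{i,k-1} - \xx}^2$ by $L$-smoothness of $f_i(\cdot;\zeta)$. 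For the deterministic term I would apply the relaxed triangle inequality (Lemma~\ref{lem:norm-sum}) with $c = \tfrac{1}{K-1}$ to obtain a $(1+\tfrac{1}{K-1})$ factor on $\norm{\yy_{i,k-1}-\xx}^2$ plus $K\eta^2\E\norm{\cU(\bar\gg_{i,k})}^2$, then bound $\E\norm{\cU(\bar\gg_{i,k})}^2 \leq 2B^2 L^2\norm{\yy_{i,k-1}-\xx}^2 + 2B^2\norm{\cc}^2$ via Lipschitzness, smoothness, and a second split of the triangle inequality. Here $\cc$ enters directly, replacing the $G^2 + \norm{\nabla f(\xx)}^2$ that MimeLite is forced to pay.

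Collecting terms, the step-size condition $\tilde\eta = K\eta \leq \tfrac{1}{2LB}$ ensures that $2KB^2L^2\eta^2$ together with the leftover variance coefficient $B^2L^2\eta^2$ is at most $\tfrac{1}{K-1}$, so the recursion closes as $\E\norm{\yy_{i,k}-\xx}^2 \leq (1+\tfrac{2}{K-1})\E\norm{\yy_{i,k-1}-\xx}^2 + 2KB^2\eta^2\norm{\cc}^2$. Unrolling from $\yy_{i,0}=\xx$, bounding $(1+\tfrac{2}{K-1})^k \leq 9$ exactly as in the MimeLite case, summing the at most $K$ terms, and averaging over $i$ and $k$ gives $\cE_K \leq 18 K^2 B^2 \eta^2 \,\E\norm{\cc}^2$; substituting $\tilde\eta = K\eta$ yields the claim. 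I expect the only genuinely delicate step to be the variance-reduction bound $\E\norm{\gg_{i,k}-\bar\gg_{i,k}}^2 \leq L^2\norm{\yy_{i,k-1}-\xx}^2$: having the common sample $\zeta$ cancel in $\nabla f_i(\yy;\zeta)$ and $\nabla f_i(\xx;\zeta)$ is the essential mechanism separating Mime from MimeLite and is what makes the $\sigma^2$ contribution vanish. The $K=1$ case should be dispatched separately by a one-line check, just as in Lemma~\ref{lem:reduction-mimelite-error}.
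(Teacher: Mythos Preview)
Your proposal is correct and follows essentially the same route as the paper: establish the one-step recursion $\E\norm{\yy_{i,k}-\xx}^2 \leq (1+\tfrac{2}{K-1})\E\norm{\yy_{i,k-1}-\xx}^2 + 2KB^2\eta^2\E\norm{\cc}^2$ via the relaxed triangle inequality and smoothness, then unroll using $(1+\tfrac{2}{K-1})^k\leq 9$. The only difference is that you first separate mean and variance of $\cU(\gg_{i,k})$ using linearity before applying the triangle inequality, whereas the paper skips this and bounds $\E\norm{\gg_{i,k}}^2 \leq 2\E\norm{\nabla f_i(\yy_{i,k-1};\zeta)-\nabla f_i(\xx;\zeta)}^2 + 2\E\norm{\cc}^2 \leq 2L^2\norm{\yy_{i,k-1}-\xx}^2 + 2\E\norm{\cc}^2$ directly; your extra variance term $B^2L^2\eta^2$ is harmlessly absorbed into the same $\tfrac{1}{K-1}$ slack, so both arguments land on the identical recursion.
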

\begin{proof}
  For $K=1$, the Mime update loos like
  \begin{align*}
      \E\norm{\yy_{i,1} - \xx}^2 &= \eta^2\E\norm{ \cU\rbr*{ \cc ; \ss^{t-1}}}^2\\
      &\leq \eta^2B^2\E\norm{\cc}^2\,.
  \end{align*} 
  Assuming $K \geq 2$ henceforth, and starting from the client update of Mime we have
  \begin{align*}
      \E\norm{\yy_{i,k} - \xx}^2 &= \E\norm{\yy_{i,k-1} - \eta \cU(\nabla f_i(\yy_{i, k-1}; \zeta_{i,k}^t) - \nabla f_i(\xx; \zeta_{i,k}^t) + \cc^t; \ss^{t-1}) - \xx}^2\\
      &\leq \rbr*{1 + \frac{1}{K - 1}} \E\norm{\yy_{i,k-1} - \xx}^2 
      \\&\hspace{2cm}+ K\eta^2\E\norm{\cU(\nabla f_i(\yy_{i, k-1}; \zeta_{i,k}^t) - \nabla f_i(\xx; \zeta_{i,k}^t) + \cc^t; \ss^{t-1})}^2\\
      &\leq \rbr*{1 + \frac{1}{K - 1}} \E\norm{\yy_{i,k-1} - \xx}^2 + K\eta^2B^2\E\norm{\nabla f_i(\yy_{i, k-1}; \zeta_{i,k}^t) - \nabla f_i(\xx; \zeta_{i,k}^t) + \cc^t}^2\\
      &\leq \rbr*{1 + \frac{1}{K - 1}} \E\norm{\yy_{i,k-1} - \xx}^2 
      \\&\hspace{2cm}+ 2K\eta^2B^2\E\norm{\nabla f_i(\yy_{i, k-1}; \zeta_{i,k}^t) - \nabla f_i(\xx; \zeta_{i,k}^t)}^2 +  2K\eta^2B^2\E\norm{\cc^t}^2\\
      &\leq \rbr*{1 + \frac{1}{K - 1} + 2K\eta^2B^2 L^2} \E\norm{\yy_{i,k-1} - \xx}^2 + 2K\eta^2B^2\E\norm{\cc^t}^2\\
      &\leq \rbr*{1 + \frac{2}{K - 1}} \E\norm{\yy_{i,k-1} - \xx}^2 + 2K\eta^2B^2\E\norm{\cc^t}^2\,.
  \end{align*}
  Here, we used the condition on our step size that $\tilde\eta = K\eta \leq \frac{1}{2LB}$, which implies that $2KB^2L^2\eta^2 \leq \frac{1}{K-1}$. Unrolling this recursion, we have
  \[
    \E\norm{\yy_{i,k} - \xx}^2 \leq 2KB^2 \eta^2\E\norm{\cc^t}^2\sum_{k=1}^K\rbr*{1+\frac{2}{K-1}}^k \leq 18 K^2 B^2 \eta^2\E\norm{\cc^t}^2\,.
  \]
  Note that $\rbr*{1+\frac{2}{K-1}}^k \leq 9$. Averaging then over $k$ and $i$, recalling that $\tilde\eta = K \eta$ get
  \[
    \cE^t_K \leq 18 B^2 \tilde\eta^2\E\norm{\cc^t}^2\,.
  \]
\end{proof}

\paragraph{Putting it together (Theorem~\ref{thm:reduction}).}
\begin{lemma}\label{lem:reduction-final}
  The updates of Mime and MimeLite for $\cc^t$ satisfying $\E[\cc^t] = \nabla f(\xx^{t-1}) \text{ and } \E\norm{\cc^t - \nabla f(\xx^{t-1})}^2 \leq \frac{G^2}{S}$, we have for $\tilde\eta \leq \frac{1}{2BL}$
  \begin{align*}
    \xx^t &= \xx^{t-1} - \tilde\eta\cU(\cc^t + \ee^t; \ss^{t-1})\\
    \ss^t &= \cV(\cc^t; \ss^{t-1})\,.
\end{align*}
Where, we have
\[
\tfrac{1}{18 B^2L^2\tilde\eta^2}\E_t\norm{\text{\badbox{$\ee_t$}}}^2 \leq 
    \begin{cases}
        \E\norm{\cc_t}^2 & \text{\mimebox{\mime}\,,}\\
        \E\norm{\nabla f(\xx^t)}^2 + G^2 + \frac{\sigma^2}{2 K} &\text{\litebox{\mimelite}.}
    \end{cases}
\]
\end{lemma}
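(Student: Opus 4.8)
The plan is to assemble the claim directly from the four preceding lemmas; essentially no new estimate is required, only a matching of notation and a careful threading of constants. First I would invoke Lemma~\ref{lem:reduction-server}, which already uses the linearity of $\cU$ to write the server parameter update for both variants as $\xx^t = \xx^{t-1} - \tilde\eta\,\cU\rbr*{\frac{1}{S}\sum_i \nabla f_i(\xx^{t-1}) + \ee^t;\ss^{t-1}}$. Observing that $\frac{1}{S}\sum_{i\in\cS^t}\nabla f_i(\xx^{t-1})$ is exactly $\cc^t$ by its definition \eqref{eqn:def-mimesgd-control}, this becomes $\xx^t = \xx^{t-1} - \tilde\eta\,\cU(\cc^t + \ee^t;\ss^{t-1})$, which is the parameter-update line of the claim; the state-update line $\ss^t = \cV(\cc^t;\ss^{t-1})$ is read off directly from the optimizer-state step of Algorithm~\ref{alg:mime}. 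The two stated properties of $\cc^t$ are then only to be checked, not re-derived: since $\cc^t$ is an average of $S$ unbiased full-batch gradients drawn i.i.d.\ from $\cD$, unbiasedness $\E[\cc^t] = \nabla f(\xx^{t-1})$ is immediate, and assumption \eqref{asm:heterogeneity} together with independence gives $\E\norm{\cc^t - \nabla f(\xx^{t-1})}^2 \leq G^2/S$.

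The core of the proof is converting the client-drift bounds into the error bound on $\ee^t$. Lemma~\ref{lem:reduction-error} supplies $\E\norm{\ee^t}^2 \leq L^2\cE^t_K$, so it suffices to feed in the variant-specific drift bounds. For \mime, Lemma~\ref{lem:reduction-error-mime} gives $\cE^t_K \leq 18 B^2\tilde\eta^2\,\E\norm{\cc^t}^2$; multiplying by $L^2$ and dividing both sides by $18 B^2 L^2 \tilde\eta^2$ yields the \mime branch. For \mimelite, Lemma~\ref{lem:reduction-mimelite-error} gives $\cE^t_K \leq 18 B^2 \tilde\eta^2\rbr*{\E\norm{\nabla f(\xx)}^2 + G^2 + \tfrac{\sigma^2}{2K}}$, and the same scaling yields the \mimelite branch. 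Both drift lemmas require $\tilde\eta \leq \frac{1}{2BL}$, which is exactly the step-size hypothesis of the claim, so nothing further is needed.

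The main obstacle is really just bookkeeping rather than analysis: one must ensure the step-size restriction $\tilde\eta \leq \frac{1}{2BL}$ is carried through uniformly (it is what keeps the geometric factor $(1 + \tfrac{2}{K-1})^k \leq 9$ bounded inside the drift recursions of the two error lemmas), and that the constant $18$ produced by those recursions lines up exactly with the prefactor $\frac{1}{18 B^2 L^2 \tilde\eta^2}$ appearing on the left-hand side of the claim. Once the earlier lemmas are in hand, there is no analytically difficult step remaining, and the whole argument reduces to substitution and constant-matching.
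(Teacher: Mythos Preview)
Your proposal is correct and follows essentially the same approach as the paper: invoke Lemma~\ref{lem:reduction-server} to obtain the structural form of the server update (identifying $\tfrac{1}{S}\sum_{i\in\cS^t}\nabla f_i(\xx^{t-1})$ with $\cc^t$), read off the state update from the algorithm, then chain Lemma~\ref{lem:reduction-error} with the variant-specific drift bounds of Lemmas~\ref{lem:reduction-mimelite-error} and~\ref{lem:reduction-error-mime} to obtain the two cases of the error bound. The paper's proof is terser---it does not re-verify the properties of $\cc^t$ since they are stated as hypotheses in the lemma---but the logical structure and constants are identical.
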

\begin{proof}
Now, combining Lemmas~\ref{lem:reduction-server},~\ref{lem:reduction-error}, shows that running Mime or MimeLite is equivalent to 
\begin{align*}
    \xx^t &= \xx^{t-1} - \tilde\eta\cU(\cc^t + \ee^t; \ss^{t-1})\\
    \ss^t &= \cV(\cc^t; \ss^{t-1})\,,
\end{align*}
where
\[
    \E[\cc^t] = \nabla f(\xx^{t-1}) \text{ and } \E\norm{\cc^t - \nabla f(\xx^{t-1})}^2 \leq \frac{G^2}{S}\,.
\]
This shows the first part of the theorem. For the second part of the theorem, using the bound from Lemma~\ref{lem:reduction-error-mime} for Mime,
\[
\E\norm{\ee^t} \leq L^2\cE^t_K \leq 18L^2B^2\tilde\eta^2\E\norm{\cc^t}^2\,.
\]
For MimeLite, we will instead use the bound from Lemma~\ref{lem:reduction-mimelite-error},
\[
\E\norm{\ee^t} \leq L^2\cE^t_K \leq 18L^2B^2\tilde\eta^2\E\norm{\nabla f(\xx^t)}^2 + 18L^2B^2\tilde\eta^2G^2 + \frac{9 L^2B^2\tilde\eta^2 \sigma^2}{K}\,.
\]
\end{proof}
Note that the Lemma we proved here is slightly stronger than the theorem in the main section (up to constants which were suppressed).

\subsection{Convergence of MimeSGD and MimeLiteSGD (Corollary~\ref{cor:sgd})}
Theorem~\ref{thm:reduction} shows that Mime and MimeLite mimic a centralized algorithm quite closely up to error $\cO(\tilde\eta^2)$. Then, analyzing the sensitivity of the base algorithm to such perturbation yields specific rates of convergence. We perform such an analysis using SGD as our base optimizer.

Properties of SGD as the base optimizer:
\begin{itemize}[nosep]
    \item $\ss^t$ is empty i.e. there are no global statistics used.
    \item $\cU(\gg; \ss^{t-1}) = \gg$ for any $\gg$ and $B=1$.
\end{itemize}

With this in mind, we proceed.
\begin{lemma}[Progress in one round]\label{lem:reduction-sgd-mime}
    Given that $f$ is $L$-smooth, and for any step-size $\tilde\eta \leq \frac{1}{2(B +2)L}$ for $B \geq 1$ we have
    \[
        f(\xx^t) \leq f(\xx^{t-1}) - \frac{\tilde\eta}{4}\E\norm{\nabla f(\xx^{t-1})}^2 + \tilde\eta\E\norm{\ee^t}^2 + \frac{L \tilde\eta^2 G^2}{S}\,.
    \]
\end{lemma}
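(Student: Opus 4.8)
The plan is to apply the standard descent lemma to a single round of the reduced dynamics. Since SGD has the trivial updater $\cU(\gg;\ss)=\gg$ with Lipschitz constant $B=1$, Lemma~\ref{lem:reduction-final} tells us that the server step is simply $\xx^t = \xx^{t-1} - \tilde\eta(\cc^t + \ee^t)$, where $\E[\cc^t] = \nabla f(\xx^{t-1})$ and $\E\norm{\cc^t - \nabla f(\xx^{t-1})}^2 \leq G^2/S$. I would begin from the $L$-smoothness quadratic upper bound \eqref{eqn:quad-upper} applied to $\xx^{t-1}$ and $\xx^t$,
\[
f(\xx^t) \leq f(\xx^{t-1}) - \tilde\eta\inp{\nabla f(\xx^{t-1})}{\cc^t + \ee^t} + \frac{L\tilde\eta^2}{2}\norm{\cc^t + \ee^t}^2\,,
\]
and then take expectation conditioned on $\xx^{t-1}$.

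For the linear term, unbiasedness of $\cc^t$ yields $-\tilde\eta\,\E\inp{\nabla f(\xx^{t-1})}{\cc^t} = -\tilde\eta\norm{\nabla f(\xx^{t-1})}^2$, the sole source of descent. The cross term in $\ee^t$ I would control with Young's inequality, $-\tilde\eta\,\E\inp{\nabla f(\xx^{t-1})}{\ee^t} \leq \frac{\tilde\eta}{2}\norm{\nabla f(\xx^{t-1})}^2 + \frac{\tilde\eta}{2}\E\norm{\ee^t}^2$, using Jensen's inequality $\norm{\E\ee^t}^2 \leq \E\norm{\ee^t}^2$. The quadratic term I would first split by the relaxed triangle inequality (Lemma~\ref{lem:norm-sum}) into $L\tilde\eta^2\E\norm{\cc^t}^2 + L\tilde\eta^2\E\norm{\ee^t}^2$, then bound $\E\norm{\cc^t}^2 = \norm{\nabla f(\xx^{t-1})}^2 + \E\norm{\cc^t - \nabla f(\xx^{t-1})}^2 \leq \norm{\nabla f(\xx^{t-1})}^2 + G^2/S$ by separating mean and variance (Lemma~\ref{lem:independent}).

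Collecting the $\norm{\nabla f(\xx^{t-1})}^2$ contributions leaves coefficient $-\tilde\eta + \frac{\tilde\eta}{2} + L\tilde\eta^2 = -\tilde\eta\rbr[\big]{\frac{1}{2} - L\tilde\eta}$, while the $\E\norm{\ee^t}^2$ contributions have coefficient $\tilde\eta\rbr[\big]{\frac{1}{2} + L\tilde\eta}$. The final step is arithmetic: the step-size restriction $\tilde\eta \leq \frac{1}{2(B+2)L}$ with $B\geq 1$ gives $L\tilde\eta \leq \frac{1}{6}$, whence $\frac{1}{2} - L\tilde\eta \geq \frac{1}{4}$ and $\frac{1}{2} + L\tilde\eta \leq 1$. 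Thus the gradient coefficient is at most $-\tilde\eta/4$ and the error coefficient at most $\tilde\eta$, which together with the leftover $\frac{L\tilde\eta^2 G^2}{S}$ from the variance bound produces exactly the claimed inequality.

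The only delicate part is the bookkeeping of constants, namely choosing the splitting parameters in Young's inequality and in the relaxed triangle inequality so that, after invoking $L\tilde\eta \leq \frac{1}{6}$, the gradient coefficient tightens to $-\tilde\eta/4$ while the error coefficient stays at $\tilde\eta$. I anticipate no conceptual obstacle beyond this: the choice $c=1$ in both inequalities already suffices, and the resulting slack ($\frac{1}{4}$ versus $\frac{1}{3}$ on the gradient, $1$ versus $\frac{2}{3}$ on the error) shows the bound is comfortably met.
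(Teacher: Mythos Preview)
Your proposal is correct and mirrors the paper's own proof almost line for line: descent lemma, unbiasedness of $\cc^t$, Young's inequality on the $\ee^t$ cross term, the relaxed triangle inequality with $c=1$ on the quadratic term, the mean--variance decomposition of $\E\norm{\cc^t}^2$, and the final arithmetic via the step-size bound. The only cosmetic difference is that the paper invokes the looser condition $\tilde\eta\leq\tfrac{1}{4L}$ (implied by the stated hypothesis) rather than your $L\tilde\eta\leq\tfrac{1}{6}$, but the resulting constants are identical.
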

\begin{proof}
  Starting from the update equation and the smoothness of $f$, we have
  \begin{align*}
      \E f(\xx^t) &\leq \E f(\xx^{t-1}) + \E \inp{\nabla f(\xx^{t-1})}{\xx^t - \xx^{t-1}} + \frac{L}{2}\E\norm{\xx^t - \xx^{t-1}}^2\\
      &= \E f(\xx^{t-1}) -\tilde\eta \E \norm{\nabla f(\xx^{t-1})}^2 + \tilde\eta \inp{\nabla f(\xx^{t-1})}{\ee^t} + \frac{L\tilde\eta^2}{2}\E\norm{\cc^t + \ee^t}^2 \\
      &\leq \E f(\xx^{t-1}) -\frac{\tilde\eta}{2} \E \norm{\nabla f(\xx^{t-1})}^2 + \frac{\tilde\eta}{2} \norm{\ee^t}^2 + \frac{2L\tilde\eta^2}{2}\E\norm{\cc^t}^2 + \frac{2L\tilde\eta^2}{2}\E\norm{\ee^t}^2\\
      &\leq \E f(\xx^{t-1}) - \rbr*{\frac{\tilde\eta}{2} - \frac{2L\tilde\eta^2}{2}} \E \norm{\nabla f(\xx^{t-1})}^2 + \rbr*{L\tilde\eta^2 + \frac{\tilde\eta}{2}} \E\norm{\ee^t}^2 + \frac{2L\tilde\eta^2 G^2}{2S}\,.
  \end{align*}
  Using the bound on the step size that $\tilde\eta \leq \frac{1}{4L}$ yields the lemma.
\end{proof}

\paragraph{One round progress for MimeSGD.}
Next, we specialize the convergence rate for Mime. 
\begin{lemma}\label{lem:reduction-mimesgd-final}
  Suppose $f$ is a $L$-smooth function satisfying PL-inequality for $\mu \geq 0$ ($\mu =0$ corresponds to the general case). Running MimeSGD for $\tilde\eta \leq \frac{1}{12 B L}$ satisfies
  \[
         \frac{\tilde\eta}{16}\E\norm{\nabla f(\xx^{t-1})}^2 \leq (1- \tfrac{\mu \tilde\eta}{8})(f(\xx^{t-1}) - f^\star) - (f(\xx^{t}) - f^\star) + \frac{3L\tilde\eta^2 G^2}{S}\,.
  \]
\end{lemma}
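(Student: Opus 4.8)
The plan is to combine the generic one-round descent estimate of Lemma~\ref{lem:reduction-sgd-mime} with the Mime-specific control on the perturbation $\ee^t$ from Lemma~\ref{lem:reduction-final}, and then convert the resulting gradient-norm decrease into a contraction of the suboptimality gap via the PL inequality~\eqref{asm:strong-convexity}. Since SGD as the base optimizer has $B=1$ and trivial state, and since the required hypothesis $\tilde\eta \le \tfrac{1}{4L}$ there is implied by $\tilde\eta \le \tfrac{1}{12L}$, Lemma~\ref{lem:reduction-sgd-mime} gives
\[
\E f(\xx^t) \leq \E f(\xx^{t-1}) - \frac{\tilde\eta}{4}\E\norm{\nabla f(\xx^{t-1})}^2 + \tilde\eta\,\E\norm{\ee^t}^2 + \frac{L\tilde\eta^2 G^2}{S}\,,
\]
so the only non-descent contribution to be tamed is $\tilde\eta\,\E\norm{\ee^t}^2$.

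First I would feed in the Mime branch of Lemma~\ref{lem:reduction-final}, namely $\E\norm{\ee^t}^2 \le 18 B^2 L^2 \tilde\eta^2\,\E\norm{\cc^t}^2 = 18 L^2\tilde\eta^2\,\E\norm{\cc^t}^2$, together with the bias--variance decomposition $\E\norm{\cc^t}^2 = \norm{\nabla f(\xx^{t-1})}^2 + \E\norm{\cc^t - \nabla f(\xx^{t-1})}^2 \le \norm{\nabla f(\xx^{t-1})}^2 + \tfrac{G^2}{S}$, which uses $\E[\cc^t]=\nabla f(\xx^{t-1})$ and the variance bound $\tfrac{G^2}{S}$. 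This rewrites the error term as $18 L^2\tilde\eta^3\bigl(\norm{\nabla f(\xx^{t-1})}^2 + \tfrac{G^2}{S}\bigr)$. The step-size restriction $\tilde\eta \le \tfrac{1}{12L}$ is then exactly what is needed twice over: it yields $18 L^2\tilde\eta^3 \le \tfrac{\tilde\eta}{8}$, so the spurious gradient term is at most $\tfrac{\tilde\eta}{8}\norm{\nabla f(\xx^{t-1})}^2$ and subtracting it from $\tfrac{\tilde\eta}{4}$ leaves a genuine descent coefficient of $\tfrac{\tilde\eta}{8}$; and it yields $18 L^2\tilde\eta^3 \le \tfrac{3}{2}L\tilde\eta^2$, so the two noise contributions combine to at most $\bigl(\tfrac{3}{2}+1\bigr)\tfrac{L\tilde\eta^2 G^2}{S} = \tfrac{5}{2}\tfrac{L\tilde\eta^2 G^2}{S} \le \tfrac{3 L\tilde\eta^2 G^2}{S}$, matching the claimed constant. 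At this stage I would have
\[
\E f(\xx^t) \le \E f(\xx^{t-1}) - \frac{\tilde\eta}{8}\E\norm{\nabla f(\xx^{t-1})}^2 + \frac{3 L\tilde\eta^2 G^2}{S}\,.
\]

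Finally I would split the descent coefficient as $\tfrac{\tilde\eta}{8} = \tfrac{\tilde\eta}{16} + \tfrac{\tilde\eta}{16}$, keep one half to appear as the left-hand side of the target inequality, and spend the other half on the PL inequality $\norm{\nabla f(\xx^{t-1})}^2 \ge 2\mu\bigl(f(\xx^{t-1}) - f^\star\bigr)$, which converts $-\tfrac{\tilde\eta}{16}\norm{\nabla f(\xx^{t-1})}^2$ into $-\tfrac{\mu\tilde\eta}{8}\bigl(f(\xx^{t-1}) - f^\star\bigr)$. Subtracting $f^\star$ from both sides and rearranging then produces exactly the claimed estimate, valid both for $\mu=0$ (in which case the PL step is vacuous) and for $\mu>0$. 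I expect no conceptual obstacle; the only delicate point is the constant bookkeeping — specifically, verifying that the single choice $\tilde\eta \le \tfrac{1}{12L}$ simultaneously (i) satisfies the hypothesis of Lemma~\ref{lem:reduction-sgd-mime}, (ii) forces $18 L^2\tilde\eta^3$ below both $\tfrac{\tilde\eta}{8}$ and $\tfrac{3}{2}L\tilde\eta^2$, and (iii) leaves enough descent to both carry the $\tfrac{\tilde\eta}{16}\norm{\nabla f(\xx^{t-1})}^2$ term and feed the PL contraction.
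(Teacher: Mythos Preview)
Your proposal is correct and follows essentially the same route as the paper: combine Lemma~\ref{lem:reduction-sgd-mime} with the Mime error bound from Lemma~\ref{lem:reduction-final}, use $\tilde\eta\le\tfrac{1}{12L}$ to turn $18L^2\tilde\eta^2\le\tfrac{1}{8}$, and then split the resulting $\tfrac{\tilde\eta}{8}\E\norm{\nabla f(\xx^{t-1})}^2$ in half to feed the PL inequality. Your explicit tracking of the noise constant (showing $\tfrac{5}{2}\le 3$) is slightly more careful than the paper, which jumps directly to the $\tfrac{3L\tilde\eta^2 G^2}{S}$ bound.
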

\begin{proof}
Recall from Lemma~\ref{lem:reduction-final} that for Mime,
\[
    \E\norm{\ee^t}^2 \leq 18 L^2 B^2 \tilde\eta^2 \E\norm{\cc^t}^2 \leq 18 L^2 B^2 \tilde\eta^2 \E\norm{\nabla f(\xx^{t-1}) }^2 + \frac{18 L^2 B^2 \tilde\eta^2 G^2}{S}\,.
\]
Combining this with Lemma~\ref{lem:reduction-sgd-mime} yields the following progress for Mime
\begin{align*}
    f(\xx^t) &\leq f(\xx^{t-1}) - \rbr*{\frac{\tilde\eta}{4} - 18L^2B^2\tilde\eta^3}\E\norm{\nabla f(\xx^{t-1})}^2  + \frac{(L \tilde\eta^2 + 18L^2B^2\tilde\eta^3)G^2}{S}\\
    &\leq f(\xx^{t-1}) - \frac{\tilde\eta}{8}\E\norm{\nabla f(\xx^{t-1})}^2 + \frac{3L \tilde\eta^2 G^2}{S}\,.
\end{align*}
Here, we used the bound on the step size that $\tilde\eta \leq \frac{1}{12 L B}$ implies $18L^2B^2\tilde\eta^2 \leq \frac{1}{8}$. Now using PL-inequality, we can write
\begin{align*}
    f(\xx^t) - f^\star &\leq f(\xx^{t-1}) - f^\star - \frac{\mu\tilde\eta}{8}(f(\xx^{t-1})- f^\star) - \frac{\tilde\eta}{16}\E\norm{\nabla f(\xx^{t-1})}^2 + \frac{3L \tilde\eta^2 G^2}{S}\,.
\end{align*}
This yields the lemma.
\end{proof}

We are now ready to derive the convergence rate.
\paragraph{Convergence rate of MimeSGD on general non-convex functions.} Set $\mu=0$ in Lemma~\ref{lem:reduction-mimesgd-final} and sum over $t$
\begin{align*}
    \frac{1}{T}\sum_{t=1}^T\E\norm{\nabla f(\xx^{t-1})}^2 &\leq \frac{16(f(\xx^0) - f^\star)}{\tilde\eta T} + \frac{48 L \tilde\eta G^2}{S}\\
    &\leq 16 \sqrt{\frac{3LG^2(f(\xx^0) - f^\star)}{ST}} + \frac{192 BL (f(\xx^0) - f^\star)}{T}\,.
\end{align*}
The final step used a step-size of $\tilde\eta = \min\rbr*{\frac{1}{12BL}, \frac{1}{4L}, \sqrt{\frac{S(f(\xx^0) - f^\star)}{3L TG^2}}}$. Here, we used $\xx^{\text{out}} = \xx^\tau$ where $\tau$ is uniformly at random chosen in $[T]$.

\paragraph{Convergence rate of MimeSGD on PL-inequality.} Multiply Lemma~\ref{lem:reduction-mimesgd-final} by $(1 - \tfrac{\mu\tilde\eta}{8})^{T-t}$ and sum over $t$
\begin{align*}
    \sum_{t=1}^T (1 - \tfrac{\mu\tilde\eta}{8})^{T-t}\E\norm{\nabla f(\xx^{t-1})}^2 &\leq \sum_{t=1}^T (1 - \tfrac{\mu\tilde\eta}{8})^{T-(t-1)}\frac{16(f(\xx^{t-1}) - f^\star)}{\tilde\eta}
    \\&\hspace{1cm} - (1 - \tfrac{\mu\tilde\eta}{8})^{T-t}\frac{16(f(\xx^{t}) - f^\star)}{\tilde\eta} + (1 - \tfrac{\mu\tilde\eta}{8})^{T-t}\frac{48 L \tilde\eta G^2}{S}\\
    &\leq  (1 - \tfrac{\mu\tilde\eta}{8})^{T}\frac{16(f(\xx^{0}) - f^\star)}{\tilde\eta} + \sum_{t=1}^T(1 - \tfrac{\mu\tilde\eta}{8})^{T-t}\frac{48 L \tilde\eta G^2}{S}\,.
\end{align*}
Output $\xx^{\text{out}} = \xx^\tau$ where $\tau$ is chosen with probability proportional to $(1 - \tfrac{\mu\tilde\eta}{8})^{T-t}$. Then, this yields
\[
    \E\norm{\nabla f(\xx^{\text{out}})}^2 \leq (1 - \tfrac{\mu\tilde\eta}{8})^{T}\frac{16(f(\xx^{0}) - f^\star)}{\tilde\eta} +  \frac{48 L \tilde\eta G^2}{S} \leq \tilde\cO\rbr*{\frac{\sigma^2}{\mu T} + L(f(\xx^0) - f^\star)\exp\rbr*{- \frac{\mu T}{12 BL}}}\,.
\]
Using an appropriate step-size $\tilde\eta$ yields the final rate (see Lemma~1 of \cite{karimireddy2019scaffold}).


\paragraph{One round progress for MimeLiteSGD.}
Next, we specialize the convergence rate for MimeLite. 
\begin{lemma}\label{lem:reduction-mimelitesgd-final}
  Suppose $f$ is a $L$-smooth function satisfying PL-inequality for $\mu \geq 0$ ($\mu =0$ corresponds to the general case). Running MimeLiteSGD for $\tilde\eta \leq \frac{1}{12 B L}$ satisfies
  \[
         \frac{\tilde\eta}{16}\E\norm{\nabla f(\xx^{t-1})}^2 \leq (1- \tfrac{\mu \tilde\eta}{8})(f(\xx^{t-1}) - f^\star) - (f(\xx^{t}) - f^\star) + \frac{L\tilde\eta^2 G^2}{S} + 18L^2B^2\tilde\eta^3\rbr*{G^2 + \sigma^2/K}\,.
  \]
\end{lemma}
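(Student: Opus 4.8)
The plan is to mirror the proof of Lemma~\ref{lem:reduction-mimesgd-final} (the Mime case) step for step, changing only which error bound from Lemma~\ref{lem:reduction-final} I substitute. The key observation is that the one-round descent estimate of Lemma~\ref{lem:reduction-sgd-mime} was derived purely from the generic reduced update $\xx^t = \xx^{t-1} - \tilde\eta\,\cU(\cc^t + \ee^t; \ss^{t-1})$ with $\cU$ the identity (so $B=1$) together with the unbiasedness and variance bounds $\E[\cc^t] = \nabla f(\xx^{t-1})$ and $\E\norm{\cc^t - \nabla f(\xx^{t-1})}^2 \leq G^2/S$. All of these hold identically for MimeLite by Lemma~\ref{lem:reduction-final}, and the step-size requirement $\tilde\eta \leq \tfrac{1}{12BL}$ is stronger than the one needed there, so Lemma~\ref{lem:reduction-sgd-mime} applies verbatim and yields $f(\xx^t) \leq f(\xx^{t-1}) - \tfrac{\tilde\eta}{4}\E\norm{\nabla f(\xx^{t-1})}^2 + \tilde\eta\,\E\norm{\ee^t}^2 + \tfrac{L\tilde\eta^2 G^2}{S}$.

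Next I would insert the MimeLite branch of Lemma~\ref{lem:reduction-final}, namely $\E\norm{\ee^t}^2 \leq 18L^2B^2\tilde\eta^2\,\E\norm{\nabla f(\xx^{t-1})}^2 + 18L^2B^2\tilde\eta^2 G^2 + 9L^2B^2\tilde\eta^2\sigma^2/K$. Substituting this into the descent inequality produces a coefficient $\tfrac{\tilde\eta}{4} - 18L^2B^2\tilde\eta^3$ in front of $-\E\norm{\nabla f(\xx^{t-1})}^2$, the two residual terms $18L^2B^2\tilde\eta^3 G^2 + 9L^2B^2\tilde\eta^3\sigma^2/K$, and the original $\tfrac{L\tilde\eta^2 G^2}{S}$. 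The step-size bound gives $18L^2B^2\tilde\eta^2 \leq \tfrac{1}{8}$, so the gradient coefficient is at least $\tfrac{\tilde\eta}{4} - \tfrac{\tilde\eta}{8} = \tfrac{\tilde\eta}{8}$, and since $9 = 18/2$ the residuals combine into $18L^2B^2\tilde\eta^3\rbr*{G^2 + \sigma^2/(2K)} \leq 18L^2B^2\tilde\eta^3\rbr*{G^2 + \sigma^2/K}$. Finally I would apply the $\mu$-PL inequality \eqref{asm:strong-convexity} to half of the descent term: splitting $\tfrac{\tilde\eta}{8}\E\norm{\nabla f(\xx^{t-1})}^2 = \tfrac{\tilde\eta}{16}\E\norm{\nabla f(\xx^{t-1})}^2 + \tfrac{\tilde\eta}{16}\E\norm{\nabla f(\xx^{t-1})}^2$ and lower-bounding the second copy by $\tfrac{\tilde\eta}{16}\cdot 2\mu(f(\xx^{t-1}) - f^\star)$ generates the contraction factor $(1 - \tfrac{\mu\tilde\eta}{8})$ on $(f(\xx^{t-1}) - f^\star)$; rearranging then gives exactly the stated inequality.

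I do not expect a genuine obstacle here: the argument is a mechanical repetition of the Mime proof, and the only non-routine point is conceptual rather than technical. The substantive difference from Lemma~\ref{lem:reduction-mimesgd-final} — and the reason MimeLiteSGD ends up slower than MimeSGD — is that MimeLite's client drift $\cE^t_K$ cannot be controlled by $\E\norm{\cc^t}^2$ alone (which in the Mime case carries the favorable $G^2/S$ variance reduction), but instead picks up the un-reduced heterogeneity and intra-client noise terms $G^2 + \sigma^2/K$. These survive as the extra $\tilde\eta^3$ summand in the bound and, after optimizing the step size in the summed recursion, translate into the additional $\tfrac{L^2 \tilde G F}{\e^{3/2}}$ term appearing in Corollary~\ref{cor:sgd}.
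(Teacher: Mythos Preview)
Your proposal is correct and follows essentially the same route as the paper: invoke Lemma~\ref{lem:reduction-sgd-mime} for the one-round descent, plug in the MimeLite error bound from Lemma~\ref{lem:reduction-final}, absorb the $18L^2B^2\tilde\eta^3$ into the gradient coefficient via $\tilde\eta \leq \tfrac{1}{12BL}$, and then split the remaining $\tfrac{\tilde\eta}{8}\E\norm{\nabla f(\xx^{t-1})}^2$ in half to invoke PL. Your handling of the $\sigma^2$ term is even slightly cleaner than the paper's (you note $9 = 18/2$ gives $G^2 + \sigma^2/(2K)$ before relaxing to $G^2 + \sigma^2/K$), and your concluding commentary on why the un-reduced $G^2 + \sigma^2/K$ term is the source of the slowdown is accurate.
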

\begin{proof}
Recall from Lemma~\ref{lem:reduction-final} that,
\[
    \E\norm{\ee^t}^2 \leq 18 L^2 B^2 \tilde\eta^2 \E\norm{\cc^t}^2 \leq 18 L^2 B^2 \tilde\eta^2 \E\norm{\nabla f(\xx^{t-1}) }^2 + 18 L^2 B^2 \tilde\eta^2 G^2 + \frac{9 L^2 B^2 \tilde\eta^2 \sigma^2}{K} \,.
\]
Combining this with Lemma~\ref{lem:reduction-sgd-mime} yields the following progress for Mime
\begin{align*}
    f(\xx^t) &\leq f(\xx^{t-1}) - \rbr*{\frac{\tilde\eta}{4} - 18L^2B^2\tilde\eta^3}\E\norm{\nabla f(\xx^{t-1})}^2  + \frac{L \tilde\eta^2 G^2}{S}  + 18L^2B^2\tilde\eta^3\rbr*{G^2 + \sigma^2/K}\\
    &\leq f(\xx^{t-1}) - \frac{\tilde\eta}{8}\E\norm{\nabla f(\xx^{t-1})}^2 + \frac{L \tilde\eta^2 G^2}{S}  + 18L^2B^2\tilde\eta^3\rbr*{G^2 + \sigma^2/K}\,.
\end{align*}
Here, we used the bound on the step size that $\tilde\eta \leq \frac{1}{12 L B}$ implies $18L^2B^2\tilde\eta^2 \leq \frac{1}{8}$. Now using PL-inequality, we can write
\begin{align*}
    f(\xx^t) - f^\star - &(f(\xx^{t-1}) - f^\star) \leq\\  & - \frac{\mu\tilde\eta}{8}(f(\xx^{t-1})- f^\star) - \frac{\tilde\eta}{16}\E\norm{\nabla f(\xx^{t-1})}^2 + \frac{L \tilde\eta^2 G^2}{S} +  18L^2B^2\tilde\eta^3\rbr*{G^2 + \sigma^2/K}\,.
\end{align*}
This yields the lemma.
\end{proof}

We are now ready to derive the convergence rate.
\paragraph{Convergence rate of MimeLiteSGD on general non-convex functions.} Define $\tilde G^2 = G^2 + \sigma^2/K$. Set $\mu=0$ in Lemma~\ref{lem:reduction-mimelitesgd-final} and sum over $t$
\begin{align*}
    \frac{1}{T}\sum_{t=1}^T\E\norm{\nabla f(\xx^{t-1})}^2 &\leq \frac{16(f(\xx^0) - f^\star)}{\tilde\eta T} + \frac{16 L \tilde\eta G^2}{S} + 288L^2B^2\tilde\eta^2\tilde G^2\\
    &\hspace{-1cm}\leq 16 \sqrt{\frac{LG^2(f(\xx^0) - f^\star)}{ST}} + 84\rbr*{\frac{L\tilde G (f(\xx^0)- f^\star)}{T}}^{2/3} + \frac{192 BL (f(\xx^0) - f^\star)}{T}\,.
\end{align*}
The final step used an appropriate step-size of $\tilde\eta$, see Lemma 2 of \cite{karimireddy2019scaffold}. Here, we used $\xx^{\text{out}} = \xx^\tau$ where $\tau$ is uniformly at random chosen in $[T]$. Finally note that if $K \geq \frac{\sigma^2}{G^2}$, then $\tilde G^2 \leq 2G^2$.

\paragraph{Convergence rate of MimeLiteSGD on PL-inequality.} Multiply Lemma~\ref{lem:reduction-mimelitesgd-final} by $(1 - \tfrac{\mu\tilde\eta}{8})^{T-t}$ and sum over $t$
\begin{align*}
    \sum_{t=1}^T (1 - \tfrac{\mu\tilde\eta}{8})^{T-t}\E\norm{\nabla f(\xx^{t-1})}^2 &\leq \sum_{t=1}^T (1 - \tfrac{\mu\tilde\eta}{8})^{T-(t-1)}\frac{16(f(\xx^{t-1}) - f^\star)}{\tilde\eta}
        \\&\hspace{2cm} - (1 - \tfrac{\mu\tilde\eta}{8})^{T-t}\frac{16(f(\xx^{t}) - f^\star)}{\tilde\eta}
        \\&\hspace{2cm} + \sum_{t=1}^T (1 - \tfrac{\mu\tilde\eta}{8})^{T-t}\rbr*{\frac{16 L \tilde\eta G^2}{S} +288L^2B^2\tilde\eta^2\tilde G^2}\\
    &\leq  (1 - \tfrac{\mu\tilde\eta}{8})^{T}\frac{16(f(\xx^{0}) - f^\star)}{\tilde\eta} 
        \\&\hspace{2cm} + \sum_{t=1}^T(1 - \tfrac{\mu\tilde\eta}{8})^{T-t}\rbr*{\frac{16 L \tilde\eta G^2}{S} + 288L^2B^2\tilde\eta^2\tilde G^2}\,.
\end{align*}
Output $\xx^{\text{out}} = \xx^\tau$ where $\tau$ is chosen with probability proportional to $(1 - \tfrac{\mu\tilde\eta}{8})^{T-t}$. Then, this yields with appropriate step-size $\tilde\eta$ yields the final rate (see Lemma~1 of \cite{karimireddy2019scaffold}).
\[
    \E\norm{\nabla f(\xx^{\text{out}})}^2 \leq \tilde\cO\rbr*{\frac{\sigma^2}{\mu T} + \frac{L^2 \tilde G^2}{\mu^2 T^2}+ L(f(\xx^0) - f^\star)\exp\rbr*{- \frac{\mu T}{12 BL}}}\,.
\]

\subsection{Convergence of MimeAdam and MimeLiteAdam (Corollary~\ref{cor:adam})}
We will largely follow the convergence analysis of \cite{zaheer2018adaptive} for the analysis of Adam. A crucial difference between their setting and ours is that in our algorithm we use the global statistics (second order moment) corresponding to $t-1$ i.e. $\sqrt{\vv^{t-1}}$ instead of $\sqrt{\vv^t}$ where the $\sqrt{\cdot}$ operator is applied element wise. Practically, this does not make a significant difference since the discount (momentum) factor for the second momentum is very large. Theoretically however, this difference simplifies our proof significantly removing otherwise hard to handle stochastic dependencies.

In this section, we will use Adam as our base optimizer with $\varepsilon_0 > 0$ parameter for stability and $\beta_1 =0$ (i.e. RMSProp). This is identical to the setting in the centralized algorithm analyzed by \cite{zaheer2018adaptive}. The properties of our base optimizer are then:
\begin{itemize}[nosep]
    \item $\ss^t = \vv^t$ which is a running average estimate of the second moment and satisfies $\vv^{t} > 0$.
    \item $\cU(\gg; \vv^{t-1}) = \frac{\gg}{\sqrt{\vv^{t-1}} + \varepsilon_0}$ for any $\gg$. This update for any $\vv^{t-1}$ is $B$-Lipschitz for $B = \frac{1}{\varepsilon_0}$.
\end{itemize}

In this sub-section, all operations on vectors (multiplication, division, addition, comparison) are applied element-wise with appropriate broad-casting.
\paragraph{One round progress of Adam.}
\begin{lemma}[Effective step-sizes]\label{lem:reduction-adam-step}
    Suppose that $|\nabla_j f_i(\xx)| \leq H$. Then Adam has effective step-sizes
    \[
        \frac{1}{H+\varepsilon_0}\gg  \leq \cU(\gg; \vv^{t-1}) \leq \frac{1}{\varepsilon_0}\gg\,.
    \]
\end{lemma}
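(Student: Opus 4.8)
The plan is to reduce the claimed two-sided bound to a coordinatewise bound on the diagonal preconditioner $\frac{1}{\sqrt{\vv^{t-1}_j}+\varepsilon_0}$, and then to control each coordinate of the accumulated second moment $\vv^{t-1}$ by $H^2$. Recall that with $\beta_1 = 0$ the base optimizer is RMSProp, so $\cU(\gg; \vv^{t-1}) = \frac{\gg}{\sqrt{\vv^{t-1}}+\varepsilon_0}$ (element-wise) and the state is updated as $\vv^{t} = (1-\beta_2)(\cc^t)^2 + \beta_2 \vv^{t-1}$, where by \eqref{eqn:def-mimesgd-control} the gradient feeding the state is $\cc^t = \frac{1}{S}\sum_{i\in\cS^t}\nabla f_i(\xx^{t-1})$.

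First I would bound each coordinate of $\cc^t$. By the triangle inequality (equivalently, Jensen applied to $\abs{\cdot}$) and the hypothesis $\abs{\nabla_j f_i(\xx)}\le H$,
\[
\abs{\cc^t_j} = \Big|\tfrac{1}{S}\sum_{i\in\cS^t}\nabla_j f_i(\xx^{t-1})\Big| \le \tfrac{1}{S}\sum_{i\in\cS^t}\abs{\nabla_j f_i(\xx^{t-1})} \le H\,.
\]
Hence $(\cc^t_j)^2 \le H^2$ for every coordinate $j$ and every round $t$. Next I would prove by induction on $t$ that $0 \le \vv^t_j \le H^2$ for all $j$, assuming the standard initialization $0 \le \vv^0_j \le H^2$ (e.g.\ $\vv^0 = \bm{0}$). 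The base case holds by assumption, and for the inductive step, since $\vv^t$ is a convex combination of $(\cc^t)^2$ and $\vv^{t-1}$ with weights $(1-\beta_2)$ and $\beta_2 \in [0,1]$,
\[
\vv^t_j = (1-\beta_2)(\cc^t_j)^2 + \beta_2 \vv^{t-1}_j \le (1-\beta_2)H^2 + \beta_2 H^2 = H^2\,,
\]
with nonnegativity immediate.

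Taking square roots gives $0 \le \sqrt{\vv^{t-1}_j}\le H$, so that $\varepsilon_0 \le \sqrt{\vv^{t-1}_j}+\varepsilon_0 \le H+\varepsilon_0$. Inverting yields the coordinatewise bound $\frac{1}{H+\varepsilon_0} \le \frac{1}{\sqrt{\vv^{t-1}_j}+\varepsilon_0} \le \frac{1}{\varepsilon_0}$ on the effective step size, and multiplying by $\gg_j$ gives the two claimed inequalities, read element-wise.

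I expect no real obstacle here: the statement is essentially a bookkeeping lemma. The only point requiring a little care is that the bound on the effective step size must be propagated through the exponential moving average, which is why the induction together with a hypothesis on the initial state $\vv^0$ is needed; once each coordinate of $\cc^t$ is bounded by $H$, the convex-combination structure of the RMSProp state immediately closes the induction.
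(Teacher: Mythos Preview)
Your proof is correct and follows essentially the same approach as the paper: bound $\abs{\cc^t_j}\le H$, propagate $0\le \vv^t_j\le H^2$ through the convex-combination update (with $\vv^0=\bm{0}$), and invert. If anything, you are slightly more careful than the paper in explicitly taking absolute values before squaring and in stating the induction on $t$.
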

\begin{proof}
  Recall that $\vv^t = \beta_2\vv^{t-1} + (1-\beta_2)(\cc^t)^2$ starting from $\vv^0 = 0$. Thus for any $t\geq 0$, we have $\vv^t \geq 0$ and hence $\sqrt{\vv^{t-1}} + \varepsilon_0 \geq \varepsilon_0$. For the other side, recall that $\vv^t$ is updated with centralized stochastic gradients $\cc^t = \frac{1}{S}\sum_{i}\nabla f_i(\xx) $. 
  \[
    [\cc^t]_j = \frac{1}{S}\sum_{i}\sbr*{\nabla f_i(\xx)}_j \leq H\,.
  \]
  Further,
  \[
    [\vv^t]_j = \beta_2[\vv^{t-1}]_j + (1-\beta_2)[\cc^t]_j^2 \leq \beta_2[\vv^{t-1}]_j + (1-\beta_2)H^2 \leq H^2\,. 
  \]
  Hence $\sqrt{\vv^{t-1} + \varepsilon_0} \leq H + \varepsilon_0$.
\end{proof}

\begin{lemma}[One round progress]\label{lem:reduction-adam-one-round}
  For one round of Adam with error $\ee^t$ in the update $\cU$ and using $\cc^t$ for update $\cV$, we have
  \[
    \E f(\xx^t) \leq \E f(\xx^{t-1}) - \frac{\tilde\eta}{4(H + \varepsilon_0)}\norm{\nabla f(\xx^{t-1})}^2 + \frac{\tilde\eta\rbr*{(H+\varepsilon_0) + \varepsilon_0/(H + \varepsilon_0)}}{2 \varepsilon_0^2}\E \norm{\ee^t}^2 +  \frac{L \tilde\eta^2G^2}{S \varepsilon_0^2}\,.
  \]
\end{lemma}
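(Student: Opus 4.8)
The plan is to apply the standard descent lemma for the $L$-smooth function $f$ to the one-round update, which by Lemma~\ref{lem:reduction-final} takes the preconditioned form $\xx^t = \xx^{t-1} - \tilde\eta\,\mD^{t-1}(\cc^t + \ee^t)$ where $\mD^{t-1} := \diag(1/(\sqrt{\vv^{t-1}} + \varepsilon_0))$ is the RMSProp preconditioner. Smoothness of $f$ gives
\[
\E_t f(\xx^t) \le f(\xx^{t-1}) - \tilde\eta\,\E_t\inp{\nabla f(\xx^{t-1})}{\mD^{t-1}(\cc^t + \ee^t)} + \tfrac{L\tilde\eta^2}{2}\E_t\norm{\mD^{t-1}(\cc^t+\ee^t)}^2,
\]
and the whole proof reduces to controlling the linear and the quadratic terms on the right. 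The key structural observation---and the reason the algorithm tracks $\vv^{t-1}$ rather than $\vv^t$---is that $\mD^{t-1}$ is measurable with respect to the history before round $t$, so under the conditional expectation $\E_t$ it is a fixed positive-diagonal matrix. I may therefore pull it outside $\E_t$, use $\E_t[\cc^t] = \nabla f(\xx^{t-1})$ and $\E_t\norm{\cc^t - \nabla f(\xx^{t-1})}^2 \le G^2/S$ from Lemma~\ref{lem:reduction-final}, and \emph{kill the cross term} between the preconditioner and the zero-mean gradient noise. Throughout I keep the $\mD^{t-1}$-weighting rather than immediately collapsing it to a scalar; the two-sided bound $\tfrac{1}{H+\varepsilon_0}\mI \mleq \mD^{t-1} \mleq \tfrac{1}{\varepsilon_0}\mI$ from Lemma~\ref{lem:reduction-adam-step} is invoked only at the end.

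For the linear term I split $\E_t\inp{\nabla f}{\mD^{t-1}(\cc^t+\ee^t)} = \inp{\nabla f}{\mD^{t-1}\nabla f} + \inp{\nabla f}{\mD^{t-1}\E_t\ee^t}$. The first piece is the genuine descent and I lower-bound it by $\tfrac{1}{H+\varepsilon_0}\norm{\nabla f}^2$; the bias piece I control with a $\mD^{t-1}$-weighted Young's inequality, producing a small multiple of $\inp{\nabla f}{\mD^{t-1}\nabla f}$ together with a term $\lesssim \tfrac{1}{\varepsilon_0}\E_t\norm{\ee^t}^2$ (using $\norm{\E_t\ee^t}^2 \le \E_t\norm{\ee^t}^2$ by Jensen and $\mD^{t-1}\mleq \tfrac{1}{\varepsilon_0}\mI$). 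For the quadratic term I bound $\norm{\mD^{t-1}\vv}^2 \le \tfrac{1}{\varepsilon_0}\inp{\vv}{\mD^{t-1}\vv}$, apply the relaxed triangle inequality (Lemma~\ref{lem:norm-sum}) to separate $\cc^t$ from $\ee^t$, and then separate mean and variance of $\cc^t$ in the spirit of Lemma~\ref{lem:independent} (the cross term again vanishing because $\mD^{t-1}$ is deterministic). This yields the advertised $\tfrac{L\tilde\eta^2 G^2}{S\varepsilon_0^2}$ term, one further copy of $\inp{\nabla f}{\mD^{t-1}\nabla f}$ with coefficient $O(L\tilde\eta^2/\varepsilon_0)$, and an additional $O(L\tilde\eta^2/\varepsilon_0^2)\,\E_t\norm{\ee^t}^2$ term.

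Finally I collect terms. All the $\inp{\nabla f}{\mD^{t-1}\nabla f}$ contributions are combined first; the negative descent contribution is lower-bounded using $\mD^{t-1}\mgeq\tfrac{1}{H+\varepsilon_0}\mI$ while the positive Young and quadratic contributions are upper-bounded using $\mD^{t-1}\mleq\tfrac{1}{\varepsilon_0}\mI$, and the step-size restriction $\tilde\eta \le \varepsilon_0^2/(L(H+\varepsilon_0))$ guarantees the positive $O(L\tilde\eta^2/\varepsilon_0^2)\norm{\nabla f}^2$ pieces are dominated, leaving exactly $-\tfrac{\tilde\eta}{4(H+\varepsilon_0)}\norm{\nabla f}^2$. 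The $\E_t\norm{\ee^t}^2$ contributions, all of order $\tilde\eta$ after the step-size bound is used on the quadratic piece, assemble into the stated coefficient $\tfrac{\tilde\eta((H+\varepsilon_0)+\varepsilon_0/(H+\varepsilon_0))}{2\varepsilon_0^2}$. The main obstacle is precisely this bookkeeping with the coordinate-wise preconditioner: one must resist replacing $\mD^{t-1}$ by a scalar too early, so that the harmful positive $\norm{\nabla f}^2$ term arising from the $L$-smoothness step (which scales like $L\tilde\eta^2/\varepsilon_0^2$) can be matched against the descent term (which scales like $\tilde\eta/(H+\varepsilon_0)$) exactly through the step-size condition, and choosing the Young's parameter correctly is what pins down the precise constants in the $\E_t\norm{\ee^t}^2$ coefficient.
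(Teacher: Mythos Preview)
Your proposal is correct and follows essentially the same route as the paper: apply the smoothness descent inequality to the preconditioned update, exploit that $\mD^{t-1}=\diag(1/(\sqrt{\vv^{t-1}}+\varepsilon_0))$ is fixed under $\E_t$ so that $\E_t[\cc^t]=\nabla f(\xx^{t-1})$ passes through, control the $\ee^t$ cross term via Young's inequality, split $\norm{\cc^t+\ee^t}^2$ and use $\E_t\norm{\cc^t-\nabla f}^2\le G^2/S$, and finally invoke the two-sided bound on $\mD^{t-1}$ together with the step-size restriction to clean up. The only cosmetic difference is that the paper collapses $\norm{\mD^{t-1}\vv}^2\le \varepsilon_0^{-2}\norm{\vv}^2$ immediately in the quadratic term, whereas you retain the weighted form $\inp{\vv}{\mD^{t-1}\vv}$ one step longer; this changes nothing in the final constants and the argument is otherwise identical.
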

\begin{proof}
  Starting from Lemma~\ref{lem:reduction-adam-step} and the smoothness of $f$, we have
  \begin{align*}
      \E f(\xx^{t}) &\leq \E f(\xx^{t-1}) - \tilde\eta \E\inp{\nabla f(\xx^{t-1})}{\E_t[\cU\rbr*{\cc^t + \ee^t}]} + \frac{L \tilde\eta^2}{2}\E\norm{\cU\rbr*{\cc^t + \ee^t; \vv^{t-1}}}^2\\
      &\leq  \E f(\xx^{t-1}) - \tilde\eta \E\inp{\nabla f(\xx^{t-1})}{\E_t\sbr*{\frac{\cc^t + \ee^t}{\sqrt{\vv^{t-1}} + \varepsilon_0}}} + \frac{L \tilde\eta^2}{2}\E\norm{\cU\rbr*{\cc^t + \ee^t; \vv^{t-1}}}^2\\
      &\leq  \E f(\xx^{t-1}) - \tilde\eta \E\inp{\nabla f(\xx^{t-1})}{\sbr*{\frac{\nabla f(\xx^{t-1}) + \ee^t}{\sqrt{\vv^{t-1}} + \varepsilon_0}}} + \frac{L \tilde\eta^2}{2 \varepsilon_0^2}\E\norm{\cc^t + \ee^t}^2\\
      &\leq  \E f(\xx^{t-1}) - \frac{\tilde\eta}{H + \varepsilon_0} \norm{\nabla f(\xx^{t-1})}^2 - \tilde\eta\E \inp{\nabla f(\xx^{t-1})}{\frac{\ee^t}{\sqrt{\vv^{t-1}} + \varepsilon_0}}+ \frac{L \tilde\eta^2}{2 \varepsilon_0^2}\E\norm{\cc^t + \ee^t}^2\\
      &\leq  \E f(\xx^{t-1}) - \frac{\tilde\eta}{2(H + \varepsilon_0)} \norm{\nabla f(\xx^{t-1})}^2 + \frac{\tilde\eta(H+\varepsilon_0)}{2}\E \norm{\frac{\ee^t}{\sqrt{\vv^{t-1}} + \varepsilon_0}}^2+ \frac{L \tilde\eta^2}{2 \varepsilon_0^2}\E\norm{\cc^t + \ee^t}^2\\
      &\leq  \E f(\xx^{t-1}) - \rbr*{\frac{\tilde\eta}{2(H + \varepsilon_0)} - \frac{L \tilde\eta^2}{ \varepsilon_0^2}}\norm{\nabla f(\xx^{t-1})}^2 + \frac{\tilde\eta(H+\varepsilon_0) + 2L \tilde\eta^2}{2 \varepsilon_0^2}\E \norm{\ee^t}^2 +  \frac{L \tilde\eta^2G^2}{ S\varepsilon_0^2}\\
      &\leq  \E f(\xx^{t-1}) - \frac{\tilde\eta}{4(H + \varepsilon_0)}\norm{\nabla f(\xx^{t-1})}^2 + \frac{\tilde\eta\rbr*{(H+\varepsilon_0) + \varepsilon_0/(H + \varepsilon_0)}}{2 \varepsilon_0^2}\E \norm{\ee^t}^2 +  \frac{L \tilde\eta^2G^2}{S \varepsilon_0^2}
  \end{align*}
  Here we used our bound on the step-size that $\tilde\eta \leq \frac{\varepsilon_0}{4L (H + \varepsilon_0)}$.
\end{proof}

\paragraph{Convergence of MimeAdam.}
\begin{lemma}
  Suppose that assumptions~\ref{asm:heterogeneity}--\eqref{asm:noise} hold and further $|\nabla_j f_i(\xx)| \leq H$. Then, running MimeAdam with step-size $\tilde\eta \leq \frac{\varepsilon_0^2}{12 L (H+\varepsilon_0)}$, we have
  \[
        \frac{1}{T}\sum_{t=1}^T \E\norm{\nabla f(\xx^{t-1})}^2 \leq \frac{96L(H+\varepsilon_0)^2 (f(\xx_0) - f^\star)}{\varepsilon_0^2 T} + \frac{2G^2}{S}\,.
  \]
\end{lemma}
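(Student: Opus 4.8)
The plan is to run a standard descent-plus-telescope argument fed by the two ingredients already in hand: the per-round progress bound for Adam (Lemma~\ref{lem:reduction-adam-one-round}) and the control on the perturbation $\ee^t$ for Mime (Lemma~\ref{lem:reduction-final}). First I would instantiate Lemma~\ref{lem:reduction-final} with the Adam updater, for which $B = 1/\varepsilon_0$, to obtain $\E\norm{\ee^t}^2 \leq \tfrac{18 L^2\tilde\eta^2}{\varepsilon_0^2}\E\norm{\cc^t}^2$. Since $\cc^t$ is unbiased for $\nabla f(\xx^{t-1})$ with variance at most $G^2/S$ (also from Lemma~\ref{lem:reduction-final}), I would further bound $\E\norm{\cc^t}^2 \leq \E\norm{\nabla f(\xx^{t-1})}^2 + G^2/S$, so that the error term is expressed purely in terms of the true gradient norm and the statistical floor $G^2/S$.

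Substituting this into Lemma~\ref{lem:reduction-adam-one-round}, the coefficient multiplying $\E\norm{\ee^t}^2$ there is $\cO(\tilde\eta/\varepsilon_0^2)$ times $(H+\varepsilon_0) + \varepsilon_0/(H+\varepsilon_0)$, so the net contribution of the error to the gradient term scales like $\tilde\eta^3 L^2 (H+\varepsilon_0)/\varepsilon_0^4$. The next step is the key bookkeeping: the stated step-size restriction $\tilde\eta \leq \varepsilon_0^2/(12L(H+\varepsilon_0))$ is calibrated precisely so that this $\tilde\eta^3$ gradient term is at most half of the leading descent term $\tfrac{\tilde\eta}{4(H+\varepsilon_0)}\E\norm{\nabla f(\xx^{t-1})}^2$, after the harmless simplification $\varepsilon_0/(H+\varepsilon_0) \leq H + \varepsilon_0$. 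This leaves a clean one-round inequality of the form $\E f(\xx^t) \leq \E f(\xx^{t-1}) - \tfrac{\tilde\eta}{8(H+\varepsilon_0)}\E\norm{\nabla f(\xx^{t-1})}^2 + (\text{noise})$, where the noise collects the $\tilde\eta^2 G^2/(S\varepsilon_0^2)$ term from Lemma~\ref{lem:reduction-adam-one-round} together with the $G^2/S$ piece carried by the error bound.

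Finally I would sum this inequality over $t = 1,\dots,T$, cancel the telescoping $f$-differences and drop $f(\xx^T) - f^\star \geq 0$, then divide through by $\tilde\eta T/(8(H+\varepsilon_0))$. Setting $\tilde\eta$ to its maximal admissible value $\varepsilon_0^2/(12L(H+\varepsilon_0))$ turns the optimization term $8(H+\varepsilon_0)(f(\xx_0)-f^\star)/(\tilde\eta T)$ into exactly $96 L (H+\varepsilon_0)^2(f(\xx_0)-f^\star)/(\varepsilon_0^2 T)$, while the two noise contributions add up to at most $2G^2/S$, giving the claimed bound.

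The main obstacle is the second step, namely the constant-chasing that shows the $\tilde\eta^3$ gradient term is absorbed by half the descent term. This requires keeping track of the two separate factors of $1/\varepsilon_0^2$ (one from the Lipschitz constant $B=1/\varepsilon_0$ of the Adam updater entering $\norm{\ee^t}^2$, and one from the effective step-size lower bound $1/(H+\varepsilon_0)$ in Lemma~\ref{lem:reduction-adam-step}) and verifying that the factor $(H+\varepsilon_0)+\varepsilon_0/(H+\varepsilon_0)$ does not spoil the cancellation; everything else is routine telescoping.
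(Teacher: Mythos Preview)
Your proposal is correct and follows essentially the same route as the paper: combine Lemma~\ref{lem:reduction-adam-one-round} with the Mime error bound from Lemma~\ref{lem:reduction-final} (instantiated with $B=1/\varepsilon_0$), split $\E\norm{\cc^t}^2 \leq \E\norm{\nabla f(\xx^{t-1})}^2 + G^2/S$, absorb the resulting $\tilde\eta^3$ gradient term into half the descent term using the step-size restriction, and telescope. The only nuance you gloss over as ``harmless'' is that $\varepsilon_0/(H+\varepsilon_0)\leq H+\varepsilon_0$ requires $(H+\varepsilon_0)^2\geq \varepsilon_0$, which the paper handles by noting one may replace $H$ with $\max(H,\sqrt{\varepsilon_0}-\varepsilon_0)$ without loss of generality; otherwise the argument is identical.
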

Combining Lemma~\ref{lem:reduction-adam-one-round} with the bound on $\ee^t$ from Lemma~\ref{lem:reduction-final} we get,
\begin{align*}
    \E f(\xx^t) &\leq \E f(\xx^{t-1}) - \frac{\tilde\eta}{4(H + \varepsilon_0)}\norm{\nabla f(\xx^{t-1})}^2 + \frac{\tilde\eta\rbr*{(H+\varepsilon_0) + \varepsilon_0/(H + \varepsilon_0)}}{2 \varepsilon_0^2}\E \norm{\ee^t}^2 +  \frac{L \tilde\eta^2G^2}{S \varepsilon_0^2}\\
    &\leq \E f(\xx^{t-1}) - \frac{\tilde\eta}{4(H + \varepsilon_0)}\norm{\nabla f(\xx^{t-1})}^2 + \frac{9 L^2\tilde\eta^3 \rbr*{(H+\varepsilon_0) + \varepsilon_0/(H + \varepsilon_0)}}{ \varepsilon_0^4}\E \norm{\cc^t}^2 \\&\hspace{2cm} +  \frac{L \tilde\eta^2G^2}{S \varepsilon_0^2}\\
    &\leq \E f(\xx^{t-1}) - \rbr*{\frac{\tilde\eta}{4(H + \varepsilon_0)} - \frac{9 L^2\tilde\eta^3 \rbr*{(H+\varepsilon_0) + \varepsilon_0/(H + \varepsilon_0)}}{ \varepsilon_0^4}} \norm{\nabla f(\xx^{t-1})}^2 \\&\hspace{2cm} +  \frac{L \tilde\eta^2G^2}{S \varepsilon_0^2} + \frac{9 L^2\tilde\eta^3 \rbr*{(H+\varepsilon_0) + \varepsilon_0/(H + \varepsilon_0)} G^2}{ S\varepsilon_0^4}\\
    &\leq \E f(\xx^{t-1}) - \rbr*{\frac{\tilde\eta}{4(H + \varepsilon_0)} - \frac{18 L^2\tilde\eta^3(H+\varepsilon_0) }{ \varepsilon_0^4}} \norm{\nabla f(\xx^{t-1})}^2 \\&\hspace{2cm} +  \frac{L \tilde\eta^2G^2}{S \varepsilon_0^2} + \frac{18 L^2\tilde\eta^3 (H+\varepsilon_0) G^2}{ S\varepsilon_0^4}\\
    &\leq \E f(\xx^{t-1}) - \frac{\tilde\eta}{8(H + \varepsilon_0)}\norm{\nabla f(\xx^{t-1})}^2 + \frac{ \tilde\eta G^2}{4 S (H+ \varepsilon_0)}\,.
\end{align*}
To simplify computations, here we assumed we assumed $(H + \varepsilon_0)^2 \geq \varepsilon_0$ without loss of generality. If this is not true, we can replace $H$ by $\max(H, \sqrt{\varepsilon_0} - \varepsilon_0)$. Assuming $\tilde\eta \leq \frac{\varepsilon_0^2}{12 L (H + \varepsilon_0)}$, we have $\frac{18 L^2\tilde\eta^2(H+\varepsilon_0) }{ \varepsilon_0^4} \leq \frac{1}{8(H+\varepsilon_0)}$. Rearranging the terms and substituting the bounds on the step-size yields the lemma.

\paragraph{Convergence of MimeLiteAdam.}
\begin{lemma}
  Suppose that assumptions~\ref{asm:heterogeneity}--\eqref{asm:noise} hold and further $|\nabla_j f_i(\xx)| \leq H$. Then, running MimeLiteAdam with step-size $\tilde\eta \leq \frac{\varepsilon_0^2}{12 L \sqrt{S}(H+\varepsilon_0)}$, we have for $\tilde G^2:= G^2 + \sigma^2/K$,
  \[
        \frac{1}{T}\sum_{t=1}^T \E\norm{\nabla f(\xx^{t-1})}^2 \leq \frac{96L\sqrt{S}(H+\varepsilon_0)^2 (f(\xx_0) - f^\star)}{\varepsilon_0^2 T} + \frac{2\tilde G^2}{S}\,.
  \]
\end{lemma}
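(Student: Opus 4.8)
The plan is to mirror the \mime Adam convergence lemma proved immediately above, replacing the Mime error bound by the \mimelite one and compensating with a $\sqrt{S}$-smaller step size. As in that argument, the starting point is the one-round descent inequality of Lemma~\ref{lem:reduction-adam-one-round}, which is agnostic to whether we run Mime or MimeLite:
\[
\E f(\xx^t) \leq \E f(\xx^{t-1}) - \frac{\tilde\eta}{4(H+\varepsilon_0)}\norm{\nabla f(\xx^{t-1})}^2 + \frac{\tilde\eta\rbr*{(H+\varepsilon_0) + \varepsilon_0/(H+\varepsilon_0)}}{2\varepsilon_0^2}\E\norm{\ee^t}^2 + \frac{L\tilde\eta^2 G^2}{S\varepsilon_0^2}\,.
\]
The only place the two algorithms differ is in the bound on $\E\norm{\ee^t}^2$. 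For \mimelite, Lemma~\ref{lem:reduction-final} gives (using $B=1/\varepsilon_0$ for Adam and absorbing $\sigma^2/(2K)$ into $\tilde G^2=G^2+\sigma^2/K$)
\[
\E\norm{\ee^t}^2 \leq \frac{18 L^2\tilde\eta^2}{\varepsilon_0^2}\rbr*{\E\norm{\nabla f(\xx^{t-1})}^2 + \tilde G^2}\,,
\]
in contrast to the Mime bound which contains $\E\norm{\cc^t}^2$ and hence only a $G^2/S$ additive constant. The crucial difference is that the additive constant here is $\tilde G^2$ rather than $\tilde G^2/S$.

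Next I would substitute this into the one-round inequality, using the simplification $(H+\varepsilon_0)+\varepsilon_0/(H+\varepsilon_0)\le 2(H+\varepsilon_0)$ (valid after replacing $H$ by $\max(H,\sqrt{\varepsilon_0}-\varepsilon_0)$ so that $(H+\varepsilon_0)^2\ge\varepsilon_0$, exactly as in the Mime case). The coefficient of $\norm{\nabla f(\xx^{t-1})}^2$ produced by the error term is then $\tfrac{18L^2\tilde\eta^3(H+\varepsilon_0)}{\varepsilon_0^4}$; requiring $\tfrac{18L^2\tilde\eta^2(H+\varepsilon_0)}{\varepsilon_0^4}\le\tfrac{1}{8(H+\varepsilon_0)}$ preserves at least $-\tfrac{\tilde\eta}{8(H+\varepsilon_0)}\norm{\nabla f(\xx^{t-1})}^2$ of the descent, and this already holds under the weaker step size $\tilde\eta\le\tfrac{\varepsilon_0^2}{12L(H+\varepsilon_0)}$.

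The main obstacle is the leftover constant $\tfrac{L\tilde\eta^2 G^2}{S\varepsilon_0^2} + \tfrac{18L^2\tilde\eta^3(H+\varepsilon_0)}{\varepsilon_0^4}\tilde G^2$: the second summand carries $\tilde G^2$ with \emph{no} $1/S$ factor, so a naive choice of step size would pollute the final rate with a $\tilde G^2$ (rather than $\tilde G^2/S$) term. This is precisely why \mimelite needs the $\sqrt{S}$-smaller step size. Imposing $\tilde\eta\le\tfrac{\varepsilon_0^2}{12L\sqrt{S}(H+\varepsilon_0)}$ gives $\tfrac{18L^2\tilde\eta^2(H+\varepsilon_0)}{\varepsilon_0^4}\le\tfrac{1}{8S(H+\varepsilon_0)}$, so the second summand is at most $\tfrac{\tilde\eta\tilde G^2}{8S(H+\varepsilon_0)}$; the first summand is at most $\tfrac{\tilde\eta G^2}{12S(H+\varepsilon_0)}\le\tfrac{\tilde\eta\tilde G^2}{12S(H+\varepsilon_0)}$, so the whole constant is bounded by $\tfrac{\tilde\eta\tilde G^2}{4S(H+\varepsilon_0)}$. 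This reduces the per-round inequality to
\[
\E f(\xx^t)\le \E f(\xx^{t-1}) - \frac{\tilde\eta}{8(H+\varepsilon_0)}\norm{\nabla f(\xx^{t-1})}^2 + \frac{\tilde\eta\tilde G^2}{4S(H+\varepsilon_0)}\,.
\]
Finally I would telescope over $t=1,\dots,T$, use $f(\xx^T)\ge f^\star$, divide by $T$, and multiply through by $\tfrac{8(H+\varepsilon_0)}{\tilde\eta}$; substituting $\tilde\eta=\tfrac{\varepsilon_0^2}{12L\sqrt{S}(H+\varepsilon_0)}$ turns the optimization term into $\tfrac{96L\sqrt{S}(H+\varepsilon_0)^2(f(\xx_0)-f^\star)}{\varepsilon_0^2 T}$ and the statistical term into $\tfrac{2\tilde G^2}{S}$, which is exactly the claimed bound.
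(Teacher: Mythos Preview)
Your proposal is correct and follows essentially the same approach as the paper: start from the one-round Adam descent inequality (Lemma~\ref{lem:reduction-adam-one-round}), plug in the \mimelite error bound from Lemma~\ref{lem:reduction-final}, use the $(H+\varepsilon_0)^2\ge\varepsilon_0$ simplification, and impose the $\sqrt{S}$-smaller step size so that $\tfrac{18L^2\tilde\eta^2(H+\varepsilon_0)}{\varepsilon_0^4}\le\tfrac{1}{8S(H+\varepsilon_0)}$ absorbs both the gradient coefficient and the $\tilde G^2$ constant into the desired $\tilde G^2/S$ form. Your write-up is in fact more explicit than the paper's about why the extra $\sqrt{S}$ is needed and how the two residual constants combine to $\tfrac{\tilde\eta\tilde G^2}{4S(H+\varepsilon_0)}$, but the argument is identical.
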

Combining Lemma~\ref{lem:reduction-adam-one-round} with the bound on $\ee^t$ from Lemma~\ref{lem:reduction-final} we get for $\tilde G^2:= G^2 + \sigma^2/K$,
\begin{align*}
    \E f(\xx^t) &\leq \E f(\xx^{t-1}) - \frac{\tilde\eta}{4(H + \varepsilon_0)}\norm{\nabla f(\xx^{t-1})}^2 + \frac{\tilde\eta(H+\varepsilon_0)}{\varepsilon_0^2}\E \norm{\ee^t}^2 +  \frac{L \tilde\eta^2G^2}{S \varepsilon_0^2}\\
    &\leq \E f(\xx^{t-1}) - \frac{\tilde\eta}{4(H + \varepsilon_0)}\norm{\nabla f(\xx^{t-1})}^2 
        \\&\hspace{2cm} + \frac{18L^2\tilde\eta^3(H+\varepsilon_0)}{\varepsilon_0^4}\E \norm{\nabla f(\xx^{t-1})}^2 + \frac{18L^2\tilde\eta^3(H+\varepsilon_0) (\tilde G^2)}{\varepsilon_0^4} +  \frac{L \tilde\eta^2G^2}{S \varepsilon_0^2}\\
    &\leq \E f(\xx^{t-1}) - \frac{\tilde\eta}{8(H + \varepsilon_0)}\norm{\nabla f(\xx^{t-1})}^2 + \frac{\tilde\eta \tilde G^2}{4 S (H + \varepsilon_0)}
\end{align*}
Again as before to simplify computations, here we assumed $(H + \varepsilon_0)^2 \geq \varepsilon_0$ without loss of generality. If this is not true, we can replace $H$ by $\max(H, \sqrt{\varepsilon_0} - \varepsilon_0)$. Assuming $\tilde\eta \leq \frac{\varepsilon_0^2}{12 L (H + \varepsilon_0)\sqrt{S}}$, we have $\frac{18 L^2\tilde\eta^2(H+\varepsilon_0) }{ \varepsilon_0^4} \leq \frac{1}{8S(H+\varepsilon_0)}$. Rearranging the terms and substituting the bounds on the step-size yields the lemma.


\section{Circumventing server-only lower bounds}\label{sec:improvement-analysis}

In this section we see how to use momentum based variance reduction \cite{cutkosky2019momentum,tran2019hybrid} to reduce the variance of the updates and improve convergence. It should be noted that MVR does not exactly fit the \mime framework \eqref{eqn:base-alg} since it requires computing gradients at two points on the same batch. However, it is straightforward to extend the idea of \mime to MVR as we will now do. We use MVR as a theoretical justification for why the usual momentum works well in practice. An interesting future direction would be to adapt the algorithm and analysis of \cite{cutkosky2020momentum}, which does fit the framework of \mime.

For the sake of convenience, we summarize the notation used in the proof in a table.
\begin{table}[H]
  \caption{Summary of all notation used in the MVR proofs}
  \centering
  \begin{tabular}{@{}cl@{}} 
  \toprule
   $\sigma^2$, $G^2$, and $\delta$ & intra-client gradient, inter-client gradient, and inter-client Hessian variance \\
   $\eta$, $a$ & step-size, $(1-\beta)$ momentum parameters\\
   $T$, $t$ & total number, index of communication rounds\\ 
   $K$, $k$ & total number, index of client local update steps\\ 
   $\cS^t$, $S$, and $i$ & sampled set, size, and index of clients in round $t$\\ 
   $\xx^t$ & aggregated server model \emph{after} round $t$\\
   $\mm^t$ & server momentum computed \emph{after} round $t$\\
   $\cc^t$ & control variate of server \emph{after} round $t$ (only \mime)\\ 
   $\yy^t_{i,k}$ & model parameters of $i$th client in round $t$ \emph{after} step $k$\\ 
   $\zeta^t_{i,k}$ & mini-batch data used by $i$th client in round $t$ and step $k$\\ 
   $\dd^t_{i,k}$ & parameter update by $i$th client in round $t$, step $k$\\ 
   $\ee^t$ & error in momentum $\mm^t - \nabla f(\xx^{t-1})$\\ 
   $\Delta^t_{i,k}$, $\Delta^{t-1}$ & $\E\norm{\yy^t_{i,k} - \xx^{t-2}}^2$, $\E\norm{\xx^{t-1} - \xx^{t-2}}^2 = \Delta_{i,0}^t$\\ 
  \bottomrule
  \end{tabular}
  \end{table}

\subsection{Algorithm descriptions}

Now, we formally describe the \mime MVR and \mimelite MVR algorithms. In each round $t$, we sample clients $\cS^{t}$ such that $\abs{\cS^t} = S$. The server communicates the server parameters $\xx^{t-1}$, the past parameters $\xx^{t-2}$, and the momentum $\mm^{t-1}$ term.
\mime additionally uses a control variate $\cc^{t-1}$ as we describe next.
\paragraph{Control variate in Mime.}
\mime uses an additional control variate $\cc^{t-1}$ to reduce the variance.
\begin{equation}\label{eqn:def-mimemvr-control}
  \cc^{t-1} = \frac{1}{S}\sum_{i\in \cS^t} \nabla f_i(\xx^{t-2})\,.
\end{equation} 
Note that both $\cc^{t-1}$ and $\mm^{t-1}$ use gradients and parameters from previous rounds (different from the previous section). A naive implementation of this method requires two steps of communication per round to implement this algorithm. Alternatively, we can reserve some clients in the previous round for computing $\cc^{t-1}$ which can then be used in the current round, removing the need for two steps of communication. In particular, it can be computed on a different set of an independent sampled clients $\tilde\cS^{t-1}$. In fact, all our theoretical results hold even if we use \emph{a single client} to perform the local updates and the rest of clients are used only to compute $\cc^{t-1}$ each round.

\paragraph{Local client updates.}
Then each client $i \in \cS^t$  makes a copy $\yy_{i, 0}^t = \xx^{t-1}$ and perform $K$ local client updates. In each local client update $k \in [K]$, the client samples a dataset $\zeta_{i,k}^t$. \mime performs the following update:
\begin{equation}\label{eqn:def-mimemvr-update}
  \begin{split}
    \yy_{i, k}^t &= \yy_{i, k-1}^t - \eta \dd_{i,k}^t \,, \text{ where }\\
    \dd_{i,k}^t &= a(\nabla f_i(\yy_{i, k-1}^t; \zeta_{i,k}^t) - \nabla f_i(\xx^{t-1}; \zeta_{i,k}^t) + \cc^{t-1}) + (1-a)\mm^{t-1} \\&\hspace{2cm}+ (1-a)(\nabla f_i(\yy_{i, k-1}^t; \zeta_{i,k}^t) - \nabla f_i(\xx^{t-1}; \zeta_{i,k}^t))    \,.
  \end{split}
\end{equation}

\mimelite on the other hand uses a very similar but simpler update scheme which does not rely on $\cc^{t-1}$:
\begin{equation}\label{eqn:def-mimelitemvr-update}
  \begin{split}
    \yy_{i, k}^t &= \yy_{i, k-1}^t - \eta \dd_{i,k}^t \,, \text{ where }\\
    \dd_{i,k}^t &= a\nabla f_i(\yy_{i, k-1}^t; \zeta_{i,k}^t) + (1-a)\mm^{t-1} \\&\hspace{2cm}+ (1-a)(\nabla f_i(\yy_{i, k-1}^t; \zeta_{i,k}^t) - \nabla f_i(\xx^{t-1}; \zeta_{i,k}^t))    \,.
  \end{split}
\end{equation}

\paragraph{Server updates.}
After $K$ such local updates, the server then aggregates the new client parameters as
\begin{equation}\label{eqn:def-mimemvr-aggregation}
  \xx^t = \frac{1}{S}\sum_{j\in\cS^t}\yy_{j,K}^t\,.
\end{equation}

The momentum term is updated at the end of the round for $a \geq 0$ as
\begin{equation}\label{eqn:def-mimemvr-momentum}
  \mm^{t} = \underbrace{a (\tfrac{1}{S}\tsum_{j\in \cS^t} \nabla f_j(\xx^{t-1}) ) + (1-a)\mm^{t-1}}_{\text{SGDm}} + \underbrace{(1-a) (\tfrac{1}{S}\tsum_{j\in \cS^t} \nabla f_j(\xx^{t-1}) -\nabla f_j(\xx^{t-2}) )}_{\text{correction}}\,.
\end{equation}
As we can see, the momentum update of MVR can be broken down into the usual SGDm update, and a correction. Intuitively, this correction term is very small since $f_i$ is smooth and $\xx^{t-1} \approx \xx^{t-2}$. Another way of looking at the update \eqref{eqn:def-mimemvr-momentum} is to note that if all functions are identical i.e. $f_j = f_k$ for any $j,k$, then \eqref{eqn:def-mimemvr-momentum} just becomes the usual gradient descent. Thus MimeMVR tries to maintain an exponential moving average of only the variance terms, reducing its bias. We refer to \cite{cutkosky2019momentum} for more detailed explanation of MVR.

\subsection{Bias in updates}

The main difference in MimeMVR from the centralized versions of \cite{tran2019hybrid,cutkosky2019momentum} is the additional local steps which are biased. In particular, for $k \geq 1$ the expected gradient $\E[\nabla f_i(\yy_{i,k}^t)] \neq \nabla f(\yy_{i,k}^t)$ because $\yy_{i,k}^t$ also depends on the sample $i$. This bias is in fact the underlying cause of client drift and controlling it is a crucial step for our analysis.
\begin{lemma}[Mime bias]\label{lem:mimemvr-bias-control}
  For any values of $\xx$ and $\yy_i$ where $\yy_i$ may depend on $i$, the following holds for any client $i$ almost surely given that \eqref{asm:heterogeneity} and \eqref{asm:hessian-similarity} hold:
  \[
    \E_{\cS, \zeta}\norm*{\nabla f_i(\yy_i; \zeta) + \frac{1}{\abs{\cS}}\sum_{j\in\cS}\nabla f_j(\xx) - \nabla f_i(\xx; \zeta) \quad - \quad \nabla f(\yy_i)}^2 \leq 2\delta^2\E_{\cS}\norm{\yy_i - \xx}^2 + \frac{2G^2}{S}\,.
  \]
\end{lemma}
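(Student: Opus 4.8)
The plan is to split the argument of the norm by adding and subtracting $\nabla f(\xx)$, so that it decomposes into one piece controlled by the Hessian dissimilarity \eqref{asm:hessian-similarity} and one piece controlled by the gradient dissimilarity \eqref{asm:heterogeneity}. Concretely, write the quantity inside the norm as $A + B$, where
\begin{align*}
  A &:= \nabla f_i(\yy_i; \zeta) - \nabla f_i(\xx; \zeta) + \nabla f(\xx) - \nabla f(\yy_i)\,,\\
  B &:= \frac{1}{\abs{\cS}}\sum_{j\in\cS}\nabla f_j(\xx) - \nabla f(\xx)\,.
\end{align*}
The $+\nabla f(\xx) - \nabla f(\yy_i)$ that I insert into $A$ is exactly cancelled by the $-\nabla f(\xx)$ I insert into $B$, so $A+B$ is indeed the original argument. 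Applying the relaxed triangle inequality (Lemma~\ref{lem:norm-sum}, second part with $\tau = 2$) gives $\norm{A+B}^2 \leq 2\norm{A}^2 + 2\norm{B}^2$, which conveniently avoids having to track any cross term between $A$ and $B$ — this matters because $B$ depends on the whole sampled set $\cS$ whereas $A$ depends only on the single client $i$ and the minibatch $\zeta$.

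Next I would bound the two pieces separately. Term $A$ is precisely of the form appearing in the similarity lemma (Lemma~\ref{lem:similarity}), so $\norm{A}^2 \leq \delta^2\norm{\yy_i - \xx}^2$. The key point here is that Lemma~\ref{lem:similarity} holds almost surely for \emph{arbitrary} $\xx,\yy$; in particular it remains valid when we substitute $\yy = \yy_i$ even though $\yy_i$ is a random vector that may depend on $i$ (and on $\cS$). Taking $\E_{\cS,\zeta}$ of this pointwise bound yields $\E_{\cS,\zeta}\norm{A}^2 \leq \delta^2\,\E_{\cS}\norm{\yy_i - \xx}^2$. For term $B$, note that each summand $\nabla f_j(\xx) - \nabla f(\xx)$ is zero-mean under the client sampling $j\sim\cD$ and has second moment at most $G^2$ by \eqref{asm:heterogeneity}; since $B$ is the average of $S$ such i.i.d.\ terms, Lemma~\ref{lem:independent} (separating mean and variance, with the $\xi_i$ all equal to the common mean $\nabla f(\xx)$) gives $\E_{\cS}\norm{B}^2 \leq G^2/S$.

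Combining, $\E_{\cS,\zeta}\norm{A+B}^2 \leq 2\delta^2\,\E_{\cS}\norm{\yy_i - \xx}^2 + 2G^2/S$, which is exactly the claimed bound. The conceptual crux — and the only place where anything subtle happens — is the handling of term $A$: naively $\nabla f_i(\yy_i;\zeta)$ is a \emph{biased} estimate of $\nabla f(\yy_i)$ because $\yy_i$ was itself produced using the loss $f_i$, and this bias is the source of client drift. The SVRG-style correction built into the Mime update is precisely what rewrites that bias as the difference $A$, which the similarity lemma controls by $\delta^2\norm{\yy_i-\xx}^2$ rather than by a constant; everything else is the routine decomposition and variance-of-an-average argument above. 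I expect no real obstacle beyond making sure the similarity lemma is invoked in its pointwise (almost-sure) form so that the randomness and $i$-dependence of $\yy_i$ cause no difficulty.
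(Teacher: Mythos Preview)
Your proposal is correct and follows essentially the same approach as the paper: add and subtract $\nabla f(\xx)$ to split the argument into the term $A$ (bounded via the similarity Lemma~\ref{lem:similarity}) and the term $B$ (bounded by $G^2/S$ via \eqref{asm:heterogeneity}), then combine with the relaxed triangle inequality. The paper phrases the $\norm{A+B}^2 \leq 2\norm{A}^2 + 2\norm{B}^2$ step as ``Young's inequality'' and is terser about the variance-of-an-average step, but the argument is identical.
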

\begin{proof}
  We can separate the noise from the rest of the terms and expand as
  \begin{align*}
      \E_{\zeta, \cS}\norm*{\nabla f_i(\yy_i; \zeta) + \frac{1}{\abs{\cS}}\sum_{j\in\cS}\nabla f_j(\xx) - \nabla f_i(\xx; \zeta)  - \nabla f(\yy_i)}^2 \hspace{-8cm}&\\
      &\leq 2\E_{\cS}\norm*{\nabla f_i(\yy_i; \zeta) +\nabla f(\xx)  - \nabla f_i(\xx; \zeta)  - \nabla f(\yy_i)}^2 +
        2\E_{\cS}\norm*{ \frac{1}{\abs{\cS}}\sum_{j\in\cS}\nabla f_j(\xx) - \nabla f(\xx)}^2 \\
      &\leq 2\E_{\cS}\norm*{\nabla f_i(\yy_i; \zeta) +\nabla f(\xx)  - \nabla f_i(\xx; \zeta)  - \nabla f(\yy_i)}^2 +
        \frac{2G^2}{S} \\
      &\leq 2\E_{\cS}\delta^2\norm{\yy_i - \xx}^2 + \frac{2G^2}{S}\,.
  \end{align*}
  The first inequality used Young's inequality, the second used \eqref{asm:heterogeneity}, and the last used \eqref{asm:hessian-similarity} in the form of Lemma~\ref{lem:similarity}.
\end{proof}

We can perform a similar analysis of the bias of local updates encountered by \mimelite.
\begin{lemma}[MimeLite bias]\label{lem:mimelitemvr-bias-control}
  For any values of $\xx$ and $\yy_i$ where $\yy_i$ may depend on $i$, the following holds for any client $i$ randomly chosen from $\cD$ given that \eqref{asm:heterogeneity}, \eqref{asm:hessian-similarity} and \eqref{asm:noise} hold:
  \[
    \E_{i, \zeta}\norm*{\nabla f_i(\yy_i; \zeta) \quad - \quad \nabla f(\yy_i)}^2 \leq 2\delta^2\E_{i}\norm{\yy_i - \xx}^2 + 2G^2 + \sigma^2 \,.
  \]
\end{lemma}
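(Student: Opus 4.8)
The plan is to mirror the argument for the Mime bias bound (Lemma~\ref{lem:mimemvr-bias-control}), but since \mimelite uses no control variate I expect the inter-client term to appear at full strength ($G^2$ rather than $G^2/S$), together with a within-client noise term $\sigma^2$. The main structural idea is to first condition on the sampled client $i$ and peel off the zero-mean stochastic noise, and only afterwards handle the remaining deterministic (full-batch) bias by a similarity-plus-heterogeneity split.

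Concretely, first I would decompose $\nabla f_i(\yy_i; \zeta) - \nabla f(\yy_i) = (\nabla f_i(\yy_i; \zeta) - \nabla f_i(\yy_i)) + (\nabla f_i(\yy_i) - \nabla f(\yy_i))$. Conditioned on $i$, so that $\yy_i$ is fixed, the first bracket has mean zero and variance at most $\sigma^2$ by \eqref{asm:noise}, while the second bracket is deterministic. Hence Lemma~\ref{lem:independent} (separating mean and variance, applied conditionally on $i$ with $\tau = 1$) gives, after taking expectation over $i$,
\[
  \E_{i,\zeta}\norm{\nabla f_i(\yy_i; \zeta) - \nabla f(\yy_i)}^2 \leq \E_i\norm{\nabla f_i(\yy_i) - \nabla f(\yy_i)}^2 + \sigma^2\,.
\]
The key point is that separating mean and variance this way charges the noise only once, keeping the $\sigma^2$ coefficient at $1$; applying a cruder triangle-inequality split at the stochastic level would cost an extra factor of two and break the claimed constant.

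It then remains to bound the deterministic full-batch bias $\E_i\norm{\nabla f_i(\yy_i) - \nabla f(\yy_i)}^2$. For this I would add and subtract the gradients at $\xx$ and write $\nabla f_i(\yy_i) - \nabla f(\yy_i) = (\nabla f_i(\yy_i) - \nabla f_i(\xx) + \nabla f(\xx) - \nabla f(\yy_i)) + (\nabla f_i(\xx) - \nabla f(\xx))$. Applying the relaxed triangle inequality (Lemma~\ref{lem:norm-sum}, two terms), the first group is controlled by Lemma~\ref{lem:similarity} and yields $\delta^2\norm{\yy_i - \xx}^2$, while the second group is bounded in expectation by $G^2$ via \eqref{asm:heterogeneity}. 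This produces $2\delta^2\E_i\norm{\yy_i - \xx}^2 + 2G^2$, and combining with the display above gives exactly the claimed $2\delta^2\E_i\norm{\yy_i - \xx}^2 + 2G^2 + \sigma^2$.

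I expect the only real subtlety to be the bookkeeping of what is random versus fixed: $\yy_i$ is treated as a deterministic function of $i$, so the conditional-on-$i$ expectation over the fresh sample $\zeta$ makes $\nabla f_i(\yy_i; \zeta)$ unbiased for $\nabla f_i(\yy_i)$, which is precisely what lets the noise and full-batch parts decouple without incurring a cross term. The one place to be careful is invoking Lemma~\ref{lem:similarity} at the level of full-batch gradients rather than per-sample gradients; this is justified because each full-batch gradient is an average of per-sample gradients and the squared norm is convex, so Jensen transfers the almost-sure per-$\zeta$ bound to the averaged quantity.
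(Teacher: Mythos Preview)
Your proposal is correct and follows essentially the same approach as the paper: first separate the zero-mean stochastic noise to extract $\sigma^2$, then split the remaining full-batch term via the relaxed triangle inequality into a Hessian-similarity piece (Lemma~\ref{lem:similarity}) and a heterogeneity piece (\ref{asm:heterogeneity}). Your explicit justification for applying Lemma~\ref{lem:similarity} at the full-batch level via Jensen is a nice point that the paper leaves implicit.
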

\begin{proof}
  We can separate the noise from the rest of the terms and expand as
  \begin{align*}
      \E_{\zeta, i}\norm*{\nabla f_i(\yy_i; \zeta) - \nabla f(\yy_i)}^2 &= \E_{\zeta, i}\norm*{\nabla f_i(\yy_i; \zeta) \pm \nabla f_i(\xx) \pm \nabla f(\xx) - \nabla f(\yy_i)}^2 \\
      &\leq \E_{i}\norm*{\nabla f_i(\yy_i) \pm \nabla f_i(\xx) \pm \nabla f(\xx) - \nabla f(\yy_i)}^2 + \sigma^2\\
      &\leq 2\E_{i}\norm*{\nabla f_i(\yy_i) +\nabla f(\xx)  - \nabla f_i(\xx)  - \nabla f(\yy_i)}^2 
      \\&\hspace{2cm} +
        2\E_{i}\norm*{\nabla f_i(\xx) - \nabla f(\xx)}^2 +    \sigma^2\\
      &\leq 2\E_{i}\norm*{\nabla f_i(\yy_i) +\nabla f(\xx)  - \nabla f_i(\xx)  - \nabla f(\yy_i)}^2 +
        2G^2 +    \sigma^2\\
      &\leq 2\delta^2\E_{i}\norm{\yy_i - \xx}^2 + 2G^2 + \sigma^2 \,.
  \end{align*}
  The first inequality used \eqref{asm:noise}, the second used Young's inequality, the third used \eqref{asm:heterogeneity}, and the last used \eqref{asm:hessian-similarity} in the form of Lemma~\ref{lem:similarity}.
\end{proof}
Note that the bias for MimeLite is very similar to that of Mime, except that Mime has dependence of $\frac{G^2}{S}$, whereas MimeLite has $G^2 + \sigma^2$. Hence, the rate of convergence of MimeLite will depend on $G^2$ wheras Mime will have the optimal dependency of $G^2/S$. Hence, in the rest of the proof, we will consider \textbf{only Mime} and simply replace $G^2/S$ with $(G^2 + \sigma^2)$ to obtain the corresponding results for MimeLite.

\subsection{Change in each client update}

\paragraph{Client update variance.} Now we examine the variance of our update in each local step $\dd_{i,k}^t$.

\begin{lemma}\label{lem:mimemvr-update-bound}
  For the client update \eqref{eqn:def-mimemvr-update}, given \eqref{asm:heterogeneity} and \eqref{asm:hessian-similarity}, the following holds for any $a \in [0,1]$  where $\ee^{t} := \mm^{t} - \nabla f(\xx^{t-1})$ and $\Delta_{i,k}^t := \expect\norm{\yy_{i,k}^t - \xx^{t-2}}^2 $:
  \[
    \expect\norm{\dd_{i,k}^t - \nabla f(\yy_{i,k-1}^{t})}^2 \leq 3 \expect\norm{\ee^{t-1}}^2 + 3 \delta^2 \Delta_{i, k-1}^t + \frac{3a^2 G^2}{S}\,.
  \]
\end{lemma}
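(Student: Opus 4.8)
The plan is to first simplify $\dd_{i,k}^t$ by merging its two SVRG-style difference terms. Abbreviating $\yy := \yy_{i,k-1}^t$ and $\zeta := \zeta_{i,k}^t$, and writing $\gg := \nabla f_i(\yy;\zeta) - \nabla f_i(\xx^{t-2};\zeta)$ for the control-variate correction (anchored, like $\cc^{t-1}$ and $\mm^{t-1}$, at the previous iterate $\xx^{t-2}$), the update \eqref{eqn:def-mimemvr-update} collapses to
\[
  \dd_{i,k}^t = \gg + a\,\cc^{t-1} + (1-a)\,\mm^{t-1}\,,
\]
since the $a\gg$ and $(1-a)\gg$ contributions simply add. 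This exposes the update as a single stochastic gradient difference plus a combination of the control variate and the momentum.

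Next I would subtract the target $\nabla f(\yy)$ and regroup around the deterministic vector $\nabla f(\xx^{t-2})$ so that it cancels:
\begin{align*}
  \dd_{i,k}^t - \nabla f(\yy)
  &= \big(\nabla f_i(\yy;\zeta) - \nabla f_i(\xx^{t-2};\zeta) + \nabla f(\xx^{t-2}) - \nabla f(\yy)\big) \\
  &\quad + a\big(\cc^{t-1} - \nabla f(\xx^{t-2})\big) + (1-a)\big(\mm^{t-1} - \nabla f(\xx^{t-2})\big)\,,
\end{align*}
where the coefficients of $\nabla f(\xx^{t-2})$ sum to $1 - a - (1-a) = 0$. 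Labelling the three bracketed terms (I), (II), (III), the relaxed triangle inequality (Lemma~\ref{lem:norm-sum}) with $\tau = 3$ gives
\[
  \norm{\dd_{i,k}^t - \nabla f(\yy)}^2 \leq 3\norm{\text{(I)}}^2 + 3a^2\norm{\text{(II)}}^2 + 3(1-a)^2\norm{\text{(III)}}^2\,.
\]

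Finally I would bound the three terms separately, which requires no independence between them. Term (I) is exactly the combination appearing in the similarity Lemma~\ref{lem:similarity} applied to the pair $(\yy,\xx^{t-2})$, so $\norm{\text{(I)}}^2 \leq \delta^2\norm{\yy - \xx^{t-2}}^2$ holds almost surely and its expectation is $\delta^2\Delta_{i,k-1}^t$; note that reusing the same minibatch $\zeta$ in both gradients is what removes any $\sigma^2$ contribution here. Term (II) is the control-variate noise: since $\cc^{t-1}$ averages $S$ full client gradients at $\xx^{t-2}$, conditioning on $\xx^{t-2}$ and using Lemma~\ref{lem:independent} together with \eqref{asm:heterogeneity} gives $\E\norm{\text{(II)}}^2 \leq G^2/S$. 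Term (III) equals $\ee^{t-1}$ by definition. Combining these with $(1-a)^2 \leq 1$ for $a \in [0,1]$ yields the claim. The step I would check most carefully is the anchoring bookkeeping in the regrouping: it is precisely because the SVRG correction, $\cc^{t-1}$, and $\mm^{t-1}$ all reference $\xx^{t-2}$ that the $\nabla f(\xx^{t-2})$ terms cancel and the residual bias of the local step is measured by $\Delta_{i,k-1}^t$, hence by $\delta$ rather than by $L$ — which is the whole point of the lemma.
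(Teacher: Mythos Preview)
Your proof is correct and follows essentially the same route as the paper: the same three-term decomposition around $\nabla f(\xx^{t-2})$, the relaxed triangle inequality with $\tau=3$, then Lemma~\ref{lem:similarity} for the Hessian-similarity term, \eqref{asm:heterogeneity} for the control-variate term, and $(1-a)^2\le 1$ for the momentum-error term. Your reading of the SVRG anchor as $\xx^{t-2}$ (rather than the $\xx^{t-1}$ literally appearing in \eqref{eqn:def-mimemvr-update}) is exactly what the paper's own proof does as well, and is the only choice consistent with the definitions of $\cc^{t-1}$, $\ee^{t-1}$, and $\Delta_{i,k}^t$.
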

\begin{proof}
  Starting from the client update \eqref{eqn:def-mimemvr-update}, we can rewrite it as
  \begin{align*}
    \dd_{i,k}^t - \nabla f(\yy_{i,k-1}^{t}) &= (1-a)\ee^{t-1}  \\&\hspace{.5cm} + \rbr*{\nabla f_i(\yy_{i,k-1}^{t}; \zeta_{i,k}^{t}) - \nabla f_i(\xx^{t-2}; \zeta_{i,k}^{t})) - \nabla f(\yy_{i,k-1}^{t}) + \nabla f(\xx^{t-2})} \\&\hspace{.5cm} + a\rbr*{\frac{1}{S}\sum_{j \in \cS^t} \nabla f_j(\xx^{t-2}) - \nabla f(\xx^{t-2})}\,.
  \end{align*}
  We can use the relaxed triangle inequality Lemma~\ref{lem:norm-sum} to claim
  \begin{align*}
    \expect\norm{\dd_{i,k}^t - \nabla f(\yy_{i,k-1}^{t})}^2 \\
    &\hspace{-2cm} = 3(1-a)^2 \expect\norm{\ee^{t-1}}^2  \\&\hspace{-1.5cm} + 3(1-a)^2\norm*{(\nabla f_i(\yy_{i,k-1}^{t}; \zeta_{i,k}^{t}) - \nabla f_i(\xx^{t-2}; \zeta_{i,k}^{t})) - (\nabla f(\yy_{i,k-1}^{t}) - \nabla f(\xx^{t-2}))}^2 \\&\hspace{-1.5cm} + 3a^2 \norm*{\frac{1}{S}\sum_{j \in \cS^t} \nabla f_j(\xx^{t-2}) - \nabla f(\xx^{t-2})}^2\\
    &\hspace{-2cm}\leq 3 \expect\norm{\ee^{t-1}}^2 + 3\delta^2\norm{\yy_{i,k-1}^t - \xx^{t-2}}^2 + \frac{3a^2 G^2}{S}\,.
  \end{align*}
  The last inequality used the Hessian similarity Lemma~\ref{lem:similarity} to bound the second term and the heterogeneity bound \eqref{asm:heterogeneity} to bound the last term. Also, $(1-a)^2 \leq 1$ since $a \in [0,1]$.
\end{proof}

\paragraph{Distance moved in each step.} We show that the distance moved by a client in each step during the client update can be controlled.
\begin{lemma}\label{lem:mimemvr-distance-bound}
  For MimeMVR updates \eqref{eqn:def-mimemvr-update} with $\eta \leq \frac{1}{6 K \delta}$ and given \eqref{asm:heterogeneity} and \eqref{asm:hessian-similarity}, the following holds
  \[
    \Delta_{i,k}^t \leq \rbr*{1 + \frac{1}{K}}\Delta_{i,k-1}^t + 18\eta^2 K a^2 \frac{G^2}{S} + 18 \eta^2 K \expect\norm{\ee^{t-1}}^2 + 6\eta^2 K \norm{\nabla f(\yy_{i,k-1}^t)}^2\,,
  \]
  where we define $\Delta_{i,k}^t := \expect\norm{\yy_{i,k}^t - \xx^{t-2}}^2 $.
\end{lemma}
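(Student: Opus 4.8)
The plan is to treat this as a standard one-step ``distance-growth'' recursion driven by the update rule $\yy_{i,k}^t = \yy_{i,k-1}^t - \eta\dd_{i,k}^t$. First I would write
\[
\yy_{i,k}^t - \xx^{t-2} = (\yy_{i,k-1}^t - \xx^{t-2}) - \eta\dd_{i,k}^t
\]
and apply the first part of the relaxed triangle inequality (Lemma~\ref{lem:norm-sum}) with the free parameter chosen as $c = \tfrac{1}{2K}$, which gives, after taking expectations,
\[
\Delta_{i,k}^t \leq \rbr*{1 + \tfrac{1}{2K}}\Delta_{i,k-1}^t + (1 + 2K)\eta^2\,\E\norm{\dd_{i,k}^t}^2\,.
\]
The factor $(1+2K) \leq 3K$ (valid for $K \geq 1$) is what ultimately produces all the $K$-dependent coefficients in the statement.

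Next I would bound the raw second moment $\E\norm{\dd_{i,k}^t}^2$. Splitting it via the second part of Lemma~\ref{lem:norm-sum} (with $\tau=2$),
\[
\E\norm{\dd_{i,k}^t}^2 \leq 2\,\E\norm{\dd_{i,k}^t - \nabla f(\yy_{i,k-1}^t)}^2 + 2\norm{\nabla f(\yy_{i,k-1}^t)}^2\,,
\]
and then feeding in the variance bound already established in Lemma~\ref{lem:mimemvr-update-bound}, namely $\E\norm{\dd_{i,k}^t - \nabla f(\yy_{i,k-1}^t)}^2 \leq 3\E\norm{\ee^{t-1}}^2 + 3\delta^2\Delta_{i,k-1}^t + \tfrac{3a^2G^2}{S}$, yields
\[
\E\norm{\dd_{i,k}^t}^2 \leq 6\delta^2\Delta_{i,k-1}^t + 6\E\norm{\ee^{t-1}}^2 + \tfrac{6a^2G^2}{S} + 2\norm{\nabla f(\yy_{i,k-1}^t)}^2\,.
\]

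The final step is to substitute this into the recursion and collect the $\Delta_{i,k-1}^t$ terms. The newly generated $\Delta_{i,k-1}^t$ contribution is $(1+2K)\eta^2 \cdot 6\delta^2 \Delta_{i,k-1}^t \leq 18K\eta^2\delta^2\Delta_{i,k-1}^t$, and here I would invoke the step-size restriction $\eta \leq \tfrac{1}{6K\delta}$, so that $18K\eta^2\delta^2 \leq \tfrac{1}{2K}$. Adding this to the $\tfrac{1}{2K}$ already present converts the contraction factor into exactly $\rbr*{1+\tfrac1K}$. The remaining terms each carry the factor $(1+2K)\eta^2 \leq 3K\eta^2$ and hence become $18\eta^2 K\,\E\norm{\ee^{t-1}}^2$, $18\eta^2 K a^2 \tfrac{G^2}{S}$, and $6\eta^2 K\norm{\nabla f(\yy_{i,k-1}^t)}^2$, matching the claim verbatim.

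The argument is essentially bookkeeping; the only place requiring genuine care is the joint tuning of the splitting parameter $c=\tfrac{1}{2K}$ against the step-size bound, so that the self-referential $\delta^2\Delta_{i,k-1}^t$ term is absorbed \emph{precisely} into the $\rbr*{1+\tfrac1K}$ factor rather than leaking into a strictly larger coefficient that would break the telescoping recursion used downstream. That balancing is the main (minor) obstacle; everything else follows mechanically from the two relaxed triangle inequalities and the previously proven Lemma~\ref{lem:mimemvr-update-bound}.
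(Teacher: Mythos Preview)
Your proposal is correct and essentially identical to the paper's proof: the paper also applies the relaxed triangle inequality with the same splitting (it writes $c=2K$ on the other vector, which is equivalent to your $c=\tfrac{1}{2K}$), then uses $\norm{\dd_{i,k}^t}^2 \leq 2\norm{\dd_{i,k}^t - \nabla f(\yy_{i,k-1}^t)}^2 + 2\norm{\nabla f(\yy_{i,k-1}^t)}^2$, invokes Lemma~\ref{lem:mimemvr-update-bound}, and absorbs the $18K\eta^2\delta^2$ term into $\tfrac{1}{2K}$ via the step-size bound. The bookkeeping and constants match exactly.
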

\begin{proof}
  Starting from the MimeMVR update \eqref{eqn:def-mimemvr-update} and the relaxed triangle inequality with $c = 2K$,
  \begin{align*}
    \expect\norm{\yy_{i,k}^t - \xx^{t-2}}^2 &= \expect\norm{\yy_{i,k-1}^t -\eta \dd_{i,k}^t - \xx^{t-2}}^2\\
    &\leq \rbr*{1 + \frac{1}{2K}}\expect\norm{\yy_{i,k-1}^t - \xx^{t-2}}^2 + (2K + 1)\eta^2 \expect\norm{\dd_{i,k}^t}^2\\
    &\leq \rbr*{1 + \frac{1}{2K}}\expect\norm{\yy_{i,k-1}^t - \xx^{t-2}}^2 + 6K\eta^2 \expect\norm{\dd_{i,k}^t - \nabla f(\yy_{i,k-1}^{t})}^2 \\&\hspace{2cm}+ 6K\eta^2\expect\norm{\nabla f(\yy_{i,k-1}^{t})}^2\\
    &\leq \rbr*{1 + \frac{1}{2K} + 18K\eta^2 \delta^2}\expect\norm{\yy_{i,k-1}^t - \xx^{t-2}}^2 \\&\hspace{2cm}+ 18K\eta^2  \expect\norm{\ee^{t-1}}^2 +  \frac{18K\eta^2 a^2 G^2}{S} +6K\eta^2\expect\norm{\nabla f(\yy_{i,k-1}^{t})}^2 \,.
  \end{align*}
  The last inequality used the update variance bound Lemma~\ref{lem:mimemvr-update-bound}. We can simplify the expression further since $\eta \leq \frac{1}{6 K \delta}$ implies $18K\eta^2 \delta^2 \leq \frac{1}{2K}$.
\end{proof}

\paragraph{Progress in one step.} Now we can compute the progress made in each step.
\begin{lemma}\label{lem:mimemvr-step-progress}
  For any client update step with step size $\eta \leq \min\rbr*{\frac{1}{L}, \frac{1}{192 \delta K}}$ and given that \eqref{asm:heterogeneity}, \eqref{asm:hessian-similarity} hold, we have
  \begin{align*}
    \E f(\yy_{i,k}^t) + \delta\rbr*{1 + \frac{2}{K}}^{K - k}\Delta_{i,k}^t 
    &\leq \E f(\yy_{i,k-1}^t) + \delta\rbr*{1 + \frac{2}{K}}^{K - (k-1)}\Delta_{i,k-1}^t\\
    &\hspace{2cm}  - \frac{\eta}{4}\expect\norm{\nabla f(\yy_{i,k-1}^t)}^2 + 3\eta\E\norm{\ee^{t-1} }^2  + \frac{3\eta a^2 G^2}{S}\,.
  \end{align*}
\end{lemma}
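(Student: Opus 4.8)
The plan is to run a \textbf{Lyapunov (potential function) argument}, where the potential at local step $k$ is $f(\yy_{i,k}^t) + \delta\rbr*{1+\tfrac{2}{K}}^{K-k}\Delta_{i,k}^t$, combining the per-step descent of $f$ with the growth of the auxiliary anchor-distance term $\Delta_{i,k}^t := \expect\norm{\yy_{i,k}^t - \xx^{t-2}}^2$. The geometric weight $\rbr*{1+\tfrac{2}{K}}^{K-k}$ is chosen precisely so that the multiplicative factor $\rbr*{1+\tfrac1K}$ coming out of the distance recursion (Lemma~\ref{lem:mimemvr-distance-bound}) is dominated by the change in weight between consecutive steps, letting the $\Delta$ terms telescope across the round.

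First I would apply $L$-smoothness of $f$ to the update $\yy_{i,k}^t = \yy_{i,k-1}^t - \eta\dd_{i,k}^t$, yielding $\E f(\yy_{i,k}^t) \le \E f(\yy_{i,k-1}^t) - \eta\inp{\nabla f(\yy_{i,k-1}^t)}{\E\dd_{i,k}^t} + \tfrac{L\eta^2}{2}\E\norm{\dd_{i,k}^t}^2$, the expectation being over the fresh randomness of the step. Since the local direction $\dd_{i,k}^t$ is \emph{biased} (its conditional mean is not $\nabla f(\yy_{i,k-1}^t)$), I would not attempt to debias it; instead I split $\dd_{i,k}^t = \nabla f(\yy_{i,k-1}^t) + (\dd_{i,k}^t - \nabla f(\yy_{i,k-1}^t))$, expand both the inner product and the square, and control every cross term by Young's inequality together with the Jensen bound $\norm{\E\dd_{i,k}^t - \nabla f}^2 \le \E\norm{\dd_{i,k}^t - \nabla f}^2$. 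For $\eta \le \tfrac1L$ the coefficients collapse to the clean estimate $\E f(\yy_{i,k}^t) \le \E f(\yy_{i,k-1}^t) - \tfrac{\eta}{2}\norm{\nabla f(\yy_{i,k-1}^t)}^2 + \tfrac{\eta}{2}\E\norm{\dd_{i,k}^t - \nabla f(\yy_{i,k-1}^t)}^2$, after which substituting the update-variance bound of Lemma~\ref{lem:mimemvr-update-bound} replaces the last term by $\tfrac{\eta}{2}\rbr*{3\E\norm{\ee^{t-1}}^2 + 3\delta^2\Delta_{i,k-1}^t + \tfrac{3a^2G^2}{S}}$.

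Next I would add $\delta\rbr*{1+\tfrac2K}^{K-k}\Delta_{i,k}^t$ to both sides and invoke Lemma~\ref{lem:mimemvr-distance-bound} to expand $\Delta_{i,k}^t$ in terms of $\Delta_{i,k-1}^t$, $\E\norm{\ee^{t-1}}^2$, $\tfrac{a^2G^2}{S}$, and $\norm{\nabla f(\yy_{i,k-1}^t)}^2$. The $\Delta_{i,k-1}^t$ term then carries coefficient $\delta\rbr*{1+\tfrac2K}^{K-k}\rbr*{1+\tfrac1K} \le \delta\rbr*{1+\tfrac2K}^{K-(k-1)}$, which is exactly the weight required on the right-hand side, so the distance contribution telescopes. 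It remains to check three numerical constraints: the net coefficient of $\norm{\nabla f(\yy_{i,k-1}^t)}^2$ is $-\tfrac{\eta}{2} + 6\delta\rbr*{1+\tfrac2K}^{K-k}\eta^2 K$ and must be $\le -\tfrac{\eta}{4}$, while the coefficients of $\E\norm{\ee^{t-1}}^2$ and of $\tfrac{a^2G^2}{S}$ must each stay below $3\eta$. Using $\rbr*{1+\tfrac2K}^{K-k} \le e^2$ and the cap $\eta \le \tfrac{1}{192\delta K}$ (note $24e^2 < 192$), one checks $6\delta e^2\eta K \le \tfrac14$ and $18\delta e^2\eta K \le \tfrac32$, so all three hold simultaneously and pin down the constants $\tfrac14$, $3$, and $3$ in the statement.

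The main obstacle is the \textbf{bookkeeping of the two coupled recursions}: the geometric weight must be designed so the distance term both telescopes and still leaves enough negative curvature to hold the gradient coefficient at $-\tfrac\eta4$, and then the induced inequalities on the numerical constants must be compatible with a single step-size choice. The subtler conceptual point is that, unlike centralized MVR, the per-step gradient is biased, so the descent must be carried out through the deviation $\dd_{i,k}^t - \nabla f(\yy_{i,k-1}^t)$ rather than by assuming unbiasedness; this is what injects the $\delta$-dependence (via Lemma~\ref{lem:mimemvr-update-bound}) and forces the step-size restriction $\eta \le \tfrac{1}{192\delta K}$ rather than a purely $L$-based bound.
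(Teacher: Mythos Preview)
Your approach matches the paper's: smoothness descent (the paper uses the polarization identity $-2ab=(a-b)^2-a^2-b^2$ rather than Young plus Jensen, but lands at the identical bound $-\tfrac{\eta}{2}\norm{\nabla f}^2 + \tfrac{\eta}{2}\E\norm{\dd - \nabla f}^2$ once $\eta\le\tfrac1L$), then Lemma~\ref{lem:mimemvr-update-bound}, then Lemma~\ref{lem:mimemvr-distance-bound} weighted by $\delta(1+\tfrac2K)^{K-k}$, with the same constant accounting (the paper uses the bound $(1+\tfrac2K)^{K-k}\le 8$ where you use $e^2$).

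There is one bookkeeping slip. After substituting Lemma~\ref{lem:mimemvr-update-bound} you correctly record a $\tfrac{3\eta\delta^2}{2}\Delta_{i,k-1}^t$ term in the descent inequality, but when you later tally the total $\Delta_{i,k-1}^t$ coefficient you list only the distance-recursion contribution $\delta(1+\tfrac2K)^{K-k}(1+\tfrac1K)$ and forget this extra piece; your inequality $(1+\tfrac1K)\le(1+\tfrac2K)$ is then not enough by itself to make the $\Delta$ terms telescope. The paper handles this by writing the weight comparison as an \emph{equality},
\[
(1+\tfrac2K)^{K-k}(1+\tfrac1K) \;=\; (1+\tfrac2K)^{K-(k-1)} \;-\; \tfrac1K(1+\tfrac2K)^{K-k}\,,
\]
so there is in fact a negative slack of at least $\tfrac{\delta}{K}\Delta_{i,k-1}^t$ available on the right; since $\tfrac{3\eta\delta^2}{2} \le \tfrac{\delta}{K}$ follows immediately from $\eta \le \tfrac{1}{192\delta K}$, this absorbs the leftover term. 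Add this as a fourth numerical check alongside your three and the argument closes exactly as you outlined.
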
 
\begin{proof}
  The assumption that $f$ is $L$-smooth implies a quadratic upper bound \eqref{eqn:quad-upper}.
  \begin{align*}
    f(\yy_{i,k}^t) - f(\yy_{i,k-1}^t) &\leq -\eta\inp{\nabla f(\yy_{i,k-1}^t)}{\dd_{i,k}^t} + \frac{L\eta^2}{2}\norm{\dd_{i,k}^t}^2\\
    &=-\frac{\eta}{2}\norm{\nabla f(\yy_{i,k-1}^t)}^2 + \frac{L\eta^2 - \eta}{2}\norm{\dd_{i,k}^t}^2 + \frac{\eta}{2}\norm{\dd_{i,k}^t - \nabla f(\yy_{i,k-1}^t)}^2\,.
  \end{align*}
  The second equality used the fact that for any $a,b$, $-2ab = (a-b)^2 - a^2 - b^2$. The second term can be removed since $\eta \leq \frac{1}{L}$. Taking expectation on both sides and using the update variance bound Lemma~\ref{lem:mimemvr-update-bound},
  \begin{align*}
    \expect f(\yy_{i,k}^t) - \expect f(\yy_{i,k-1}^t) &\leq - \frac{\eta}{2}\expect\norm{\nabla f(\yy_{i,k-1}^t)}^2 + \frac{3 \eta a^2 G^2}{2 S} \\&\hspace{2cm} + \frac{3\eta}{2} \expect\norm{\ee^{t-1}}^2 + \frac{3\eta\delta^2}{2} \Delta^t_{i,k-1}\\
    &\leq - \frac{\eta}{2}\expect\norm{\nabla f(\yy_{i,k-1}^t)}^2 + \frac{3 \eta a^2 G^2}{2 S} \\&\hspace{2cm} + \frac{3\eta}{2} \expect\norm{\ee^{t-1}}^2 + \frac{3\eta\delta^2}{2} \Delta^t_{i,k-1}
  \end{align*}


  Multiplying the distance bound Lemma~\ref{lem:mimemvr-distance-bound} by $\delta\rbr*{1 + \frac{2}{K}}^{K - k}$. Note that for any $K\geq 1$ and $k \in [K]$, we have $1 \leq \rbr*{1 + \frac{2}{K}}^{K - k} \leq 8$. Then we get
  \begin{align*}
      \delta\rbr*{1 + \frac{2}{K}}^{K - k}\Delta_{i,k}^t &\leq \delta\rbr*{1 + \frac{2}{K}}^{K - k}\Bigg(\rbr[\Big]{1 + \frac{1}{K}}\Delta_{i,k-1}^t + 18\eta^2 K a^2 \frac{G^2}{S} 
        \\&\hspace{3cm}+ 18 \eta^2 K \expect\norm{\ee^{t-1}}^2 + 6\eta^2 K \norm{\nabla f(\yy_{i,k-1}^t)}^2 \Bigg)\\
      &\leq \delta\rbr*{1 + \frac{2}{K}}^{K - (k-1)}\Delta_{i,k-1}^t -\frac{\delta}{K}\rbr*{1 + \frac{2}{K}}^{K - k}\Delta_{i,k-1}^t  \\
        &\hspace{1cm} + 48 \eta^2 \delta K\E\norm{\nabla f(\yy_{i,k-1}^t)}^2+\frac{144 \eta^2 \delta K a^2 G^2}{S} + 144 \eta^2 \delta K \E\norm{\ee^{t-1}}^2\\
    &\leq \delta\rbr*{1 + \frac{2}{K}}^{K - (k-1)}\Delta_{i,k-1}^t -\frac{\delta}{K}\Delta_{i,k-1}^t + 48 \eta^2 \delta K\E\norm{\nabla f(\yy_{i,k-1}^t)}^2 \\
        &\hspace{1cm}+\frac{144 \eta^2 \delta K a^2 G^2}{S} + 144 \eta^2 \delta K \E\norm{\ee^{t-1}}^2\,.
  \end{align*}
Adding these two inequalities together yields
\begin{align*}
    \E f(\yy_{i,k}^t) + \delta\rbr*{1 + \frac{2}{K}}^{K - k}\Delta_{i,k}^t &\leq \E f(\yy_{i,k-1}^t) + \delta\rbr*{1 + \frac{2}{K}}^{K - (k-1)}\Delta_{i,k-1}^t\\
    &\hspace{1cm} -\rbr*{\frac{\eta}{2} - 48\eta^2\delta K}\expect\norm{\nabla f(\yy_{i,k-1}^t)}^2\\
    &\hspace{1cm} +\rbr*{\frac{3\eta}{2} + 144\eta^2\delta K}\expect\norm{\ee^{t-1}}^2 \\
    &\hspace{1cm} +\rbr*{\frac{3\eta}{2}+ + 144\eta^2\delta K}\frac{a^2G^2}{S}\,.
\end{align*}
    Using our bound on the step-size that $\eta \leq \frac{1}{192 \delta K}$ implies that $\eta \delta K \leq \frac{1}{48*4}$.
\end{proof}

\subsection{Change in each round}
We now see how the quantities we defined change across rounds.

\paragraph{Distance moved in a round.}
\begin{lemma}\label{lem:mimemvr-distance-round-bound}
  For MimeMVR updates \eqref{eqn:def-mimemvr-update} with $\eta \leq \frac{1}{6 K \delta}$ and given \eqref{asm:heterogeneity} and \eqref{asm:hessian-similarity}, the following holds
  \[
    \Delta^t \leq 54K^2\eta^2  \expect\norm{\ee^{t-1}}^2 +  \frac{54 K^2 \eta^2 a^2 G^2}{S} + \frac{1}{KS} \sum_{i,k}18K^2\eta^2\expect\norm{\nabla f(\yy_{i,k-1}^{t})}^2\,,
  \]
  where we define $\Delta^t := \expect\norm{\xx^t - \xx^{t-1}}^2$.
\end{lemma}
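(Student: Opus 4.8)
The plan is to start from the explicit form of a single server step. Since each sampled client $i\in\cS^t$ initializes $\yy_{i,0}^t=\xx^{t-1}$ and performs the $K$ updates $\yy_{i,k}^t=\yy_{i,k-1}^t-\eta\dd_{i,k}^t$ from \eqref{eqn:def-mimemvr-update}, telescoping gives $\yy_{i,K}^t-\xx^{t-1}=-\eta\sum_{k=1}^K\dd_{i,k}^t$; averaging over the sampled clients via \eqref{eqn:def-mimemvr-aggregation} then yields
\[
  \xx^t-\xx^{t-1}=-\frac{\eta}{S}\sum_{i\in\cS^t}\sum_{k=1}^K\dd_{i,k}^t\,.
\]
Applying Jensen's inequality (the second part of Lemma~\ref{lem:norm-sum}) to this average of $KS$ vectors, after pulling out a factor $K$, immediately gives $\Delta^t\le \frac{K\eta^2}{S}\sum_{i,k}\E\norm{\dd_{i,k}^t}^2$, so it only remains to bound the second moment of a single client update.

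Next I would split each update into its mean-field part and a fluctuation, $\dd_{i,k}^t=\nabla f(\yy_{i,k-1}^t)+(\dd_{i,k}^t-\nabla f(\yy_{i,k-1}^t))$, and apply the relaxed triangle inequality followed by the update-variance bound already established in Lemma~\ref{lem:mimemvr-update-bound}. This yields $\E\norm{\dd_{i,k}^t}^2\le 2\norm{\nabla f(\yy_{i,k-1}^t)}^2+6\E\norm{\ee^{t-1}}^2+6\delta^2\Delta_{i,k-1}^t+6a^2G^2/S$. Summing over $i,k$ and multiplying by $K\eta^2/S$ produces exactly the three desired quantities — the gradient, momentum-error, and client-sampling-variance contributions — plus an extra client-drift term $\frac{6K\eta^2\delta^2}{S}\sum_{i,k}\Delta_{i,k-1}^t$ that does not appear in the statement and must be absorbed.

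The main obstacle is precisely this residual drift term. The key is the step-size restriction $\eta\le\frac{1}{6K\delta}$, which forces $K\eta^2\delta^2=\cO(1/K)$ and hence makes the prefactor of $\sum_{i,k}\Delta_{i,k-1}^t$ small. I would control each $\Delta_{i,k-1}^t=\E\norm{\yy_{i,k-1}^t-\xx^{t-2}}^2$ by invoking the per-step distance recursion of Lemma~\ref{lem:mimemvr-distance-bound}, whose driving terms are themselves $\cO(\eta^2)$ multiples of the same gradient, momentum-error, and $a^2G^2/S$ quantities; once multiplied by the $\cO(1/K)$ prefactor these feed back subdominantly and merely inflate the constants from $(2,6,6)$ to the stated $(18,54,54)$. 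The delicate bookkeeping is that the drift terms are measured against the two-rounds-old anchor $\xx^{t-2}$ rather than $\xx^{t-1}$, so care is needed to ensure the step-size smallness closes the estimate without leaving an uncontrolled dependence on the previous round's movement; verifying this, together with tracking the geometric factor $(1+1/K)^K\le e$ that arises from unrolling the recursion, is the part requiring the most attention, while everything else is a routine application of the relaxed triangle inequality and the assumptions \eqref{asm:heterogeneity} and \eqref{asm:hessian-similarity}.
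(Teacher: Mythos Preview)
Your approach is genuinely different from the paper's, and the concern you flag at the end is exactly the place where it fails to recover the stated bound.

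The paper does \emph{not} reuse Lemma~\ref{lem:mimemvr-distance-bound} (which tracks $\Delta_{i,k}^t=\E\norm{\yy_{i,k}^t-\xx^{t-2}}^2$). Instead it sets up the \emph{same} one-step recursion but anchored at $\xx^{t-1}$: it bounds $\E\norm{\yy_{i,k}^t-\xx^{t-1}}^2$ in terms of $\E\norm{\yy_{i,k-1}^t-\xx^{t-1}}^2$ plus the forcing terms $18K\eta^2\E\norm{\ee^{t-1}}^2$, $18K\eta^2a^2G^2/S$, and $6K\eta^2\E\norm{\nabla f(\yy_{i,k-1}^t)}^2$. The point of re-anchoring is that the initial condition of this recursion is \emph{zero} (since $\yy_{i,0}^t=\xx^{t-1}$), so unrolling and using $(1+1/K)^{K-k}\le 3$ gives a bound on $\E\norm{\yy_{i,K}^t-\xx^{t-1}}^2$ that contains only the three desired terms, with no reference to $\Delta^{t-1}$ at all. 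Jensen over $i\in\cS^t$ then finishes.

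By contrast, your plan invokes Lemma~\ref{lem:mimemvr-distance-bound} to control the residual $\frac{6K\eta^2\delta^2}{S}\sum_{i,k}\Delta_{i,k-1}^t$. But that lemma's recursion starts from $\Delta_{i,0}^t=\E\norm{\xx^{t-1}-\xx^{t-2}}^2=\Delta^{t-1}$, so unrolling produces a leading term of order $\Delta^{t-1}$. After multiplying by the prefactor $6K\eta^2\delta^2\le \tfrac{1}{6K}$ and summing over $KS$ indices, this leaves a contribution of order $\tfrac{1}{2}\Delta^{t-1}$ on the right-hand side---a quantity that is \emph{absent} from the stated lemma and that the step-size condition alone cannot kill. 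The constants $(2,6,6)$ cannot be inflated to $(18,54,54)$ to absorb it, because $\Delta^{t-1}$ is not bounded by any of the three terms appearing in the claim. Your instinct that ``care is needed to ensure the step-size smallness closes the estimate without leaving an uncontrolled dependence on the previous round's movement'' is correct; the resolution is not more careful bookkeeping with Lemma~\ref{lem:mimemvr-distance-bound}, but rather to change the anchor to $\xx^{t-1}$ so that the recursion starts at zero.
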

\begin{proof}
  Starting from the MimeMVR update \eqref{eqn:def-mimemvr-update} and following the proof of Lemma~\ref{lem:mimemvr-distance-bound},
  \begin{align*}
    \expect\norm{\yy_{i,k}^t - \xx^{t-1}}^2 &= \expect\norm{\yy_{i,k-1}^t -\eta \dd_{i,k}^t - \xx^{t-1}}^2\\
    &\leq \rbr*{1 + \frac{1}{2K}}\expect\norm{\yy_{i,k-1}^t - \xx^{t-1}}^2 + (2K + 1)\eta^2 \expect\norm{\dd_{i,k}^t}^2\\
    &\leq \rbr*{1 + \frac{1}{2K}}\expect\norm{\yy_{i,k-1}^t - \xx^{t-1}}^2 + 6K\eta^2 \expect\norm{\dd_{i,k}^t - \nabla f(\yy_{i,k-1}^{t})}^2 \\&\hspace{2cm}+ 6K\eta^2\expect\norm{\nabla f(\yy_{i,k-1}^{t})}^2\\
    &\leq \rbr*{1 + \frac{1}{K}}\expect\norm{\yy_{i,k-1}^t - \xx^{t-1}}^2 \\&\hspace{2cm}+ 18K\eta^2  \expect\norm{\ee^{t-1}}^2 +  \frac{18K\eta^2 a^2 G^2}{S} +6K\eta^2\expect\norm{\nabla f(\yy_{i,k-1}^{t})}^2 \,.
  \end{align*}
  Note that $\xx^t = \frac{1}{S}\sum_{i \in \cS}\yy_{i,K}^t$ and so,
  \begin{align*}
      \expect\norm{\xx^t - \xx^{t-1}}^2& \\
      &\hspace{-2cm}\leq \frac{1}{S}\sum_{i \in \cS}\expect\norm{\yy_{i,K}^t - \xx^{t-1}}^2\\
      &\hspace{-2cm}\leq \frac{1}{S}\sum_{i \in \cS}\sum_{k}\rbr*{18K\eta^2  \expect\norm{\ee^{t-1}}^2 +  \frac{18K\eta^2 a^2 G^2}{S} +6K\eta^2\expect\norm{\nabla f(\yy_{i,k-1}^{t})}^2 }\rbr*{1 + \frac{1}{K}}^{K-k}\\
      &\hspace{-2cm}\leq 54K^2\eta^2  \expect\norm{\ee^{t-1}}^2 +  \frac{54 K^2 \eta^2 a^2 G^2}{S} + \frac{1}{KS} \sum_{i,k}18K^2\eta^2\expect\norm{\nabla f(\yy_{i,k-1}^{t})}^2\,.
  \end{align*}
  Here we used the inequality that for all $k$, $\rbr*{1 + \frac{1}{K}}^{K-k} \leq 3$.
\end{proof}

\paragraph{Server momentum variance.} We compute the error of the server momentum $\mm^{t-1}$ defined as $\ee^{t} = \mm^{t} - \nabla f(\xx^{t-1})$. Its expected norm can be bounded as follows.
\begin{lemma}\label{lem:mimemvr-momentum-bound}
  For the momentum update \eqref{eqn:def-mimemvr-momentum}, given \eqref{asm:heterogeneity} and \eqref{asm:hessian-similarity}, the following holds for any $\eta \leq \frac{1}{51 \delta K}$ and $1 \geq a \geq 2592 K^2 \delta^2 \eta^2$,
  \[
    \expect\norm{\ee^{t}}^2 \leq (1-\tfrac{23a}{24})\expect\norm{\ee^{t-1}}^2 + \frac{3a^2 G^2}{S} + \frac{1}{KS} \sum_{i,k}36 K^2 \delta^2 \eta^2\expect\norm{\nabla f(\yy_{i,k-1}^{t})}^2\,.
  \]
\end{lemma}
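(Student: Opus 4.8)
The plan is to track the momentum error $\ee^t := \mm^t - \nabla f(\xx^{t-1})$ through a single contractive recursion. First I would simplify the momentum update \eqref{eqn:def-mimemvr-momentum}: writing $\gg_1 := \tfrac1S\sum_{j\in\cS^t}\nabla f_j(\xx^{t-1})$ and $\gg_2 := \tfrac1S\sum_{j\in\cS^t}\nabla f_j(\xx^{t-2})$, the SGDm and correction pieces collapse to $\mm^t = \gg_1 + (1-a)(\mm^{t-1}-\gg_2)$. Subtracting $\nabla f(\xx^{t-1})$ and inserting $\pm\nabla f(\xx^{t-2})$ gives
\[
\ee^t = (1-a)\ee^{t-1} + \zz^t, \qquad \zz^t := \rbr{\gg_1 - \nabla f(\xx^{t-1})} - (1-a)\rbr{\gg_2 - \nabla f(\xx^{t-2})}.
\]
The decisive structural point is that $\zz^t$ is conditionally mean-zero: since $\cS^t$ is sampled fresh while $\xx^{t-1},\xx^{t-2}$ are fixed beforehand, $\E[\gg_1]=\nabla f(\xx^{t-1})$ and $\E[\gg_2]=\nabla f(\xx^{t-2})$, so $\E[\zz^t\mid\mathcal F_{t-1}]=0$. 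This cancellation hinges on evaluating the correction on the \emph{same} clients $\cS^t$ at both anchors $\xx^{t-1}$ and $\xx^{t-2}$.

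Next I would separate mean and variance. Because $(1-a)\ee^{t-1}$ is $\mathcal F_{t-1}$-measurable and $\zz^t$ has zero conditional mean, the cross term vanishes, so (Lemma~\ref{lem:independent}) $\E\norm{\ee^t}^2 = (1-a)^2\E\norm{\ee^{t-1}}^2 + \E\norm{\zz^t}^2$. To bound the variance I would split $\zz^t = a\,\bar v + (\bar u - \bar v)$ with $\bar u := \gg_1 - \nabla f(\xx^{t-1})$ and $\bar v := \gg_2 - \nabla f(\xx^{t-2})$, giving $\norm{\zz^t}^2 \le 2a^2\norm{\bar v}^2 + 2\norm{\bar u-\bar v}^2$. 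The first piece is controlled by the inter-client variance \eqref{asm:heterogeneity}, gaining a $1/S$ from averaging over the $S$ sampled clients, hence $2a^2G^2/S$. The second piece is a per-client difference of gradient increments between $\xx^{t-1}$ and $\xx^{t-2}$; bounding it by its per-client average (Jensen, \emph{without} invoking the $1/S$ so as to stay valid even for a single active client) and applying the Hessian-similarity estimate of Lemma~\ref{lem:similarity} gives $\norm{\bar u-\bar v}^2 \le \delta^2\norm{\xx^{t-1}-\xx^{t-2}}^2$. Thus $\E\norm{\zz^t}^2 \le 2a^2G^2/S + 2\delta^2\Delta$, with $\Delta$ the squared distance moved in the relevant round.

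Finally I would convert that distance into gradient norms and close the recursion. Substituting the round-distance bound of Lemma~\ref{lem:mimemvr-distance-round-bound} turns $2\delta^2\Delta$ into an extra $G^2/S$ term, a gradient term $\tfrac{1}{KS}\sum_{i,k}36K^2\delta^2\eta^2\E\norm{\nabla f(\yy_{i,k-1})}^2$ (since $2\delta^2\cdot 18K^2\eta^2 = 36K^2\delta^2\eta^2$, matching the target coefficient exactly), and a residual $108K^2\delta^2\eta^2\E\norm{\ee}^2$. The momentum constraint does the work here: $a \ge 2592K^2\delta^2\eta^2$ forces $108K^2\delta^2\eta^2 \le a/24$, so this residual is at most $\tfrac{a}{24}\E\norm{\ee^{t-1}}^2$; combined with $(1-a)^2\le 1-a$ (using $a^2\le a$ for $a\in[0,1]$) this produces the contraction factor $(1-a)^2 + \tfrac a{24} \le 1 - \tfrac{23a}{24}$. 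The bound $a\le 1$ then collapses the two $a^2G^2/S$ constants into $3a^2G^2/S$, while $\eta\le\tfrac{1}{51\delta K}$ keeps the distance lemma applicable.

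The main obstacle is the bias/variance accounting for $\zz^t$: one must confirm that the correction genuinely cancels to a mean-zero increment, since otherwise a bias term of order $\norm{\ee^{t-1}}$ or a non-contracting $G^2$ term would survive and destroy the recursion. After that, the delicate part is tracking constants tightly enough that the distance-induced feedback into the momentum error is a strict fraction ($\le\tfrac{1}{24}$) of $a$, which is exactly what the lower bound $a\gtrsim K^2\delta^2\eta^2$ buys. A secondary, purely bookkeeping point is aligning the round indices of the distance and gradient terms when invoking Lemma~\ref{lem:mimemvr-distance-round-bound}.
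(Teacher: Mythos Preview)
Your proposal is correct and follows essentially the same route as the paper: decompose $\ee^t$ into the contractive part $(1-a)\ee^{t-1}$ plus a conditionally mean-zero increment, split that increment (via the relaxed triangle inequality) into an $a^2G^2/S$ piece from \eqref{asm:heterogeneity} and a $\delta^2\Delta$ piece from Lemma~\ref{lem:similarity}, then feed in Lemma~\ref{lem:mimemvr-distance-round-bound} and absorb the resulting $108K^2\delta^2\eta^2\E\norm{\ee^{t-1}}^2$ feedback using $a\ge 2592K^2\delta^2\eta^2$. The only cosmetic difference is that the paper writes the mean-zero increment as $(1-a)(\bar u-\bar v)+a\bar u$ rather than your $a\bar v+(\bar u-\bar v)$, which yields the same constants; your explicit flag about the round-index alignment when invoking Lemma~\ref{lem:mimemvr-distance-round-bound} is apt, as the paper glosses over it.
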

\begin{proof}
  Starting from the momentum update \eqref{eqn:def-mimemvr-momentum},
  \begin{align*}
    \ee^{t} &= (1-a)\ee^{t-1}  \\&\hspace{.5cm} + (1-a)\rbr*{\frac{1}{S}\sum_{j \in \cS^t}( \nabla f_j(\xx^{t-1}) - \nabla f_j(\xx^{t-2})) - \nabla f(\xx^{t-1}) + \nabla f(\xx^{t-2})} \\&\hspace{.5cm} + a\rbr*{\frac{1}{S}\sum_{j \in \cS^t}( \nabla f_j(\xx^{t-1}) - \nabla f(\xx^{t-1})}\,.
  \end{align*}
  Now, the term $\ee^{t-1}$ does not have any information from round $t$ and hence is statistically independent of the rest of the terms. Further, the rest of the terms have mean 0. Hence, we can separate out the zero mean noise terms from the $\ee^{t-1}$ following Lemma~\ref{lem:independent} and then the relaxed triangle inequality Lemma~\ref{lem:norm-sum} to claim
  \begin{align*}
    \expect\norm{\ee^{t}}^2 &\leq (1-a)^2 \expect\norm{\ee^{t-1}}^2  \\&\hspace{.5cm} + 2(1-a)^2\norm*{\frac{1}{S}\sum_{j \in \cS^t}( \nabla f_j(\xx^{t-1}) - \nabla f_j(\xx^{t-2})) - \nabla f(\xx^{t-1}) + \nabla f(\xx^{t-2})}^2 \\&\hspace{.5cm} + 2a^2 \norm*{\frac{1}{S}\sum_{j \in \cS^t}( \nabla f_j(\xx^{t-1}) - \nabla f(\xx^{t-1})}^2\\
    &\leq (1-a)^2 \expect\norm{\ee^{t-1}}^2 + 2(1-a)^2\delta^2\norm{\xx^{t-1} - \xx^{t-2}}^2 + \frac{2a^2 G^2}{S}\,.
  \end{align*}
  The inequality used the Hessian similarity Lemma~\ref{lem:similarity} to bound the second term and the heterogeneity bound \eqref{asm:heterogeneity} to bound the last term. Finally, note that $(1-a)^2 \leq (1-a)\leq 1$ for $a \in [0,1]$.
  We can continue by bounding $\Delta^{t-1}$ using Lemma~\ref{lem:mimemvr-distance-round-bound}.
  \begin{align*}
    \expect\norm{\ee^{t}}^2 &\leq (1-a) \expect\norm{\ee^{t-1}}^2 + 2\delta^2\Delta^{t-1} + \frac{2a^2 G^2}{S}\\
    &\leq (1-a)\expect\norm{\ee^{t-1}}^2 + \frac{2a^2 G^2}{S} \\&\hspace{0.5cm}+ 108 K^2\delta^2\eta^2  \expect\norm{\ee^{t-1}}^2 +  \frac{108 K^2\delta^2 \eta^2 a^2 G^2}{S} + \frac{1}{KS} \sum_{i,k}36 K^2 \delta^2 \eta^2\expect\norm{\nabla f(\yy_{i,k-1}^{t})}^2\\
    &\leq (1-\tfrac{23a}{24})\expect\norm{\ee^{t-1}}^2 + \frac{3a^2 G^2}{S} + \frac{1}{KS} \sum_{i,k}36 K^2 \delta^2 \eta^2\expect\norm{\nabla f(\yy_{i,k-1}^{t})}^2\,.
  \end{align*}
  The last step used our bound on the momentum parameter that $1 \geq a \geq 2592\eta^2\delta^2K^2$. Note that $\eta \leq \frac{1}{51\delta K}$ ensures that this set is non-empty.
\end{proof}

\paragraph{Progress in one round.} Finally, we can compute the progress made in a round. Note that we need a technical condition that $f$ is $\delta$-weakly convex. However, this is only needed because we insist on running the algorithm on $S$ clients in parallel and then averaging their weights---the averaging requires weak convexity to ensure that the loss doesn't blow up. It has been experimentally observed in \cite{mcmahan2017communication} that with the right initialization, averaging of the parameters does not increase the loss value and so weak convexity within this region might be vaalid. Finally note that if we instead simply run the local updates on a single chosen client with all the rest only being used to compute $\cc^{t-1}$, we will retain all convergence rates without needing weak-convexity.

\begin{lemma}\label{lem:mimemvr-round-progress}
  For any round of MimeMVR with step size $\eta \leq \min\rbr*{\frac{1}{L}, \frac{1}{864 \delta K}}$ and momentum parameter $a \geq 912 \eta^2 \delta^2 K^2$. Then, given that \eqref{asm:heterogeneity}--\eqref{asm:hessian-similarity} hold and $f$ is $\delta$-weakly convex, we have
  \begin{align*}
    \frac{\eta}{24 KS} \sum_{k \in [K], j\in\cS^t}\expect \norm{ \nabla  f(\yy_{i,k-1}^t)}^2 \leq \Phi^{t-1} - \Phi^{t} + \frac{17 \eta a \delta^2 K^2 G^2}{S}\,,
  \end{align*}
where we define the sequence 
\[ \Phi^t := \tfrac{1}{K}\expect [f(\xx^{t}) - f^\star]+ \frac{96\eta}{23 a}\expect\norm{\ee^{t}}^2 + \frac{8 \delta }{K}\Delta^t\,.
\]
\end{lemma}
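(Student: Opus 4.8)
The plan is to build the stated Lyapunov potential $\Phi^t$ by combining the three single-round estimates already in hand---the per-step progress bound (Lemma~\ref{lem:mimemvr-step-progress}), the server-momentum recursion (Lemma~\ref{lem:mimemvr-momentum-bound}), and the round-distance bound (Lemma~\ref{lem:mimemvr-distance-round-bound})---with carefully chosen weights so that the drift quantities $\Delta^t$ and $\E\norm{\ee^t}^2$ telescope across rounds while the gradient norms accumulate with a strictly negative coefficient.

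First I would fix a client $i$ and sum the step-progress inequality over $k=1,\dots,K$. Since both sides share the pattern $\E f(\yy_{i,k}^t) + \delta(1+\tfrac{2}{K})^{K-k}\Delta_{i,k}^t$, the intermediate terms telescope; using $\yy_{i,0}^t=\xx^{t-1}$, $\Delta_{i,0}^t=\Delta^{t-1}$, and $(1+\tfrac{2}{K})^K\le 8$ this leaves
\begin{align*}
\E f(\yy_{i,K}^t) + \delta\Delta_{i,K}^t &\le \E f(\xx^{t-1}) + 8\delta\,\Delta^{t-1} - \tfrac{\eta}{4}\tsum_{k}\E\norm{\nabla f(\yy_{i,k-1}^t)}^2 \\
&\quad + 3\eta K\,\E\norm{\ee^{t-1}}^2 + \tfrac{3\eta K a^2 G^2}{S}\,.
\end{align*}
Averaging over $i\in\cS^t$ and invoking the averaging Lemma~\ref{lem:averaging} with reference point $\xx^{t-2}$ and parameter $\gamma=2\delta$ (valid because $f$ is $\delta$-weakly convex) turns $\tfrac{1}{S}\sum_i[\E f(\yy_{i,K}^t)+\delta\Delta_{i,K}^t]$ into an upper bound for $\E f(\xx^t)$. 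Dividing by $K$ produces the $\tfrac{1}{K}\E[f(\xx^t)-f(\xx^{t-1})]$ block of $\Phi^t-\Phi^{t-1}$.

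Next I would add $\tfrac{96\eta}{23a}$ times the momentum recursion and $\tfrac{8\delta}{K}$ times the distance bound. The weight $\tfrac{8\delta}{K}$ is chosen so that the $-\tfrac{8\delta}{K}\Delta^{t-1}$ coming from $\tfrac{8\delta}{K}(\Delta^t-\Delta^{t-1})$ cancels the $+\tfrac{8\delta}{K}\Delta^{t-1}$ inherited from the first block, leaving the clean telescoping term $\tfrac{8\delta}{K}\Delta^t$. Collecting the $\E\norm{\ee^{t-1}}^2$ contributions gives coefficient $3\eta-4\eta+432\delta K\eta^2$, and $\eta\le\tfrac{1}{864\delta K}$ forces this below $-\tfrac{\eta}{2}\le 0$, so the momentum-error terms telescope into the $\tfrac{96\eta}{23a}\E\norm{\ee^t}^2$ block. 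Finally the gradient-norm contributions collect with coefficient $\tfrac{\eta}{KS}\rbr{-\tfrac14+\tfrac{3456\,\eta^2\delta^2K^2}{23a}+144\,\delta\eta K}$, and the two constraints $\eta\le\tfrac{1}{864\delta K}$ and $a\ge 912\eta^2\delta^2K^2$ are exactly what drive the bracket below $-\tfrac{1}{24}$, giving $-\tfrac{\eta}{24KS}\sum_{i,k}\E\norm{\nabla f(\yy_{i,k-1}^t)}^2$. The residual $G^2/S$ noise from all three sources then collects into the single noise term on the right-hand side of the claim.

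The main obstacle is exactly this last bookkeeping: the three recursions each emit $\Delta$, $\E\norm{\ee}^2$ and gradient-norm terms with competing signs, and the argument closes only if the weights $(1/K,\ 96\eta/23a,\ 8\delta/K)$ together with the step-size and momentum constraints render every unwanted coefficient nonpositive while still leaving a genuine descent coefficient on the gradient norms. I would therefore track which constraint tames each positive term---$\eta\le 1/(864\delta K)$ controlling the $\delta\eta K$ and $\delta K\eta^2$ contributions, and $a\ge 912\eta^2\delta^2K^2$ controlling the $\eta^2\delta^2K^2/a$ contribution---and substitute the numerical bounds only at the very end to read off the constants $\tfrac{1}{24}$ and the final noise coefficient.
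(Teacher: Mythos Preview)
Your proposal is correct and follows essentially the same route as the paper: telescope Lemma~\ref{lem:mimemvr-step-progress} over $k$, average the endpoint via Lemma~\ref{lem:averaging} (with anchor $\xx^{t-2}$), divide by $K$, and then add $\tfrac{8\delta}{K}$ times Lemma~\ref{lem:mimemvr-distance-round-bound} and $\tfrac{96\eta}{23a}$ times Lemma~\ref{lem:mimemvr-momentum-bound}, using $\eta\le\tfrac{1}{864\delta K}$ and $a\ge 912\eta^2\delta^2K^2$ to make the $\E\norm{\ee^{t-1}}^2$, $\Delta^{t-1}$, and gradient-norm coefficients close with the right signs. Your bookkeeping of the coefficients (in particular the bracket $-\tfrac14+\tfrac{3456\,\eta^2\delta^2K^2}{23a}+144\,\delta\eta K$) is exactly the calculation the paper carries out.
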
 
\begin{proof}
  We start by summing over the progress in single client updates as in Lemma~\ref{lem:mimemvr-step-progress}
  \begin{align*}
    \sum_{k\in[K]}\frac{\eta}{4}\expect\norm{\nabla f(\yy_{i,0}^t)}^2 &\leq \E f(\yy_{i,0}^t) + \delta\rbr*{1 + \frac{2}{K}}^{K}\Delta_{i,0}^t \\
    &\hspace{1cm} - \E f(\yy_{i,K}^t) - \delta\Delta_{i,K}^t \\
    &\hspace{1cm}   + 3\eta K\E\norm{\ee^{t-1} }^2  + \frac{3\eta K a^2 G^2}{S}\\
    &\leq \E f(\yy_{i,0}^t) + 8\delta \Delta_{i,0}^t - \E f(\yy_{i,K}^t) - \delta\Delta_{i,K}^t\\
    &\hspace{1cm}   + 3\eta K\E\norm{\ee^{t-1} }^2  + \frac{3\eta K a^2 G^2}{S}\\
    &\leq \E f(\xx^{t-1}) + 8\delta\Delta^{t-1} - \E f(\yy_{i,K}^t) - \delta\Delta_{i,K}^t\\
    &\hspace{1cm}   + 3\eta K\E\norm{\ee^{t-1} }^2  + \frac{3\eta K a^2 G^2}{S}\,.
  \end{align*}

  Recall that $\Delta_{i,k}^t = \expect\norm{\yy_{i,k}^t - \xx^{t-2}}^2$ and $\yy^t_{i,0} = \xx^{t-1}$. This gives the last step above, making $\Delta_{i,0}^t = \Delta^{t-1}$.
  Then by the averaging Lemma~\ref{lem:averaging}, we have
  \begin{align*}
    \frac{1}{S}\sum_{j \in \cS^t}\expect [f(\yy_{j,K}^t)] + \delta \Delta_{j,K}^t &= \frac{1}{S}\sum_{j \in \cS}\expect [f(\yy_{j,K}^t)] + \delta\expect\norm{\xx^{t-2} - \yy_{j,K}^t}^2 \\
    &\geq \expect [f(\xx^t)] + \delta\expect\norm{\xx^{t-2} - \xx^t}^2\,.
  \end{align*}
  So by averaging our inequality over the sampled clients, and diving our summation over the updates by $K$, we get

  \begin{align*}
    \frac{\eta}{4 KS} \sum_{k \in [K], j\in\cS^t} \expect \norm{\nabla f(\yy_{i,k-1}^t)}^2&\\
    &\hspace{-2cm}\leq \tfrac{1}{K}\expect [f(\xx^{t-1})]+ 3\eta \expect\norm{\ee^{t-1}}^2 + \frac{8 \delta}{K}\Delta^{t-1}  - \tfrac{1}{K}\expect [f(\xx^{t})] + \frac{3 \eta a^2 G^2}{S}\,.
  \end{align*}
  We can use the bound on $\Delta_t$ from Lemma~\ref{lem:mimemvr-distance-round-bound} to proceed as
  \begin{align*}
    \frac{\eta}{4 KS} \sum_{k \in [K], j\in\cS^t} &\expect \norm{\nabla f(\yy_{i,k-1}^t)}^2 \\
    &\leq \tfrac{1}{K}\expect [f(\xx^{t-1})] - \tfrac{1}{K}\expect [f(\xx^{t})]  + 3\eta \expect\norm{\ee^{t-1}}^2  + \frac{3 \eta a^2 G^2}{S}\\
    &\hspace{0.5cm} + \frac{8 \delta}{K}\Delta^{t-1} - \frac{8 \delta}{K}\Delta^{t}\\
  &\hspace{0.5cm}+ 432 K \delta \eta^2  \expect\norm{\ee^{t-1}}^2 +  \frac{432 K \delta \eta^2 a^2 G^2}{S} + \frac{1}{KS} \sum_{i,k}144 K \delta \eta^2\expect\norm{\nabla f(\yy_{i,k-1}^{t})}^2\\
  &\leq \tfrac{1}{K}\expect [f(\xx^{t-1})] - \tfrac{1}{K}\expect [f(\xx^{t})]  + 4\eta \expect\norm{\ee^{t-1}}^2  + \frac{4 \eta a^2 G^2}{S}\\
    &\hspace{0.5cm} + \frac{8 \delta}{K}\Delta^{t-1} - \frac{8 \delta}{K}\Delta^{t} + \frac{\eta}{6 KS} \sum_{i,k}\expect\norm{\nabla f(\yy_{i,k-1}^{t})}^2
  \end{align*}
  The last step used the bound on the step size that $\eta \leq \frac{1}{864 \delta K}$. Now, multiplying the error bound Lemma~\ref{lem:mimemvr-momentum-bound} by $\frac{96\eta}{23a}$ gives
  \[
    \frac{96\eta}{23a}\expect\norm{\ee^{t}}^2 \leq \frac{4*24\eta}{23a}(1-\tfrac{23a}{24})\expect\norm{\ee^{t-1}}^2 + \frac{13 \eta a G^2}{S} + \frac{1}{KS} \sum_{i,k}\frac{38 K^2 \delta^2 \eta^3}{a}\expect\norm{\nabla f(\yy_{i,k-1}^{t})}^2\,.
  \]
  Adding this to the previously obtained bound yields
  \begin{align*}
    \frac{\eta}{4 KS} \sum_{k \in [K], j\in\cS^t} \expect \norm{\nabla f(\yy_{i,k-1}^t)}^2 &\leq \rbr*{\frac{1}{6} + \frac{38K^2 
    \delta^2 \eta^2}{a}}\frac{\eta}{KS} \sum_{k \in [K], j\in\cS^t}\expect \norm{\nabla f(\yy_{i,k-1}^t)}^2
    \\&\hspace{1cm} + \tfrac{1}{K}\expect [f(\xx^{t-1})] - \tfrac{1}{K}\expect [f(\xx^{t})]
    \\&\hspace{1cm} + \frac{96 \eta }{23 a}\E\norm{\ee^{t-1}}^2 - \frac{96 \eta }{23 a}\E\norm{\ee^{t}}^2
    \\&\hspace{1cm} + \frac{8 \delta}{K}\Delta^{t-1} - \frac{8 \delta}{K}\Delta^{t}
    \\&\hspace{1cm} - \tfrac{1}{K}\expect [f(\xx^{t})] - \frac{4\eta}{a}\expect\norm{\ee^{t}}^2
    \\&\hspace{1cm} + \rbr*{13 \eta a + 3\eta a^2} \frac{G^2}{S}\,.
  \end{align*}
  Since $a \geq 912 \eta^2 K^2 \delta^2$, we have $ \frac{1}{4} - \rbr*{\frac{1}{6} - \frac{38K^2 
    \delta^2 \eta^2}{a}} \geq \frac{1}{24}$. Using this proves the lemma.
\end{proof}

\subsection{Final convergence rates}
\begin{theorem}[Convergence of MimeMVR]
  Let us run MimeMVR with step size $\eta = \min\rbr*{ \frac{1}{L}, \frac{1}{864 \delta K}, \rbr*{\frac{S (f(\xx^{0}) - f^\star)}{6936 K^3 T \delta^2 G^2}}^{1/3}}$ and momentum parameter $a = \max\rbr*{1536 \eta^2 \delta^2 K^2, \frac{1}{T}}$. Then, given that \eqref{asm:heterogeneity} and \eqref{asm:hessian-similarity} hold, we have
  \begin{align*}
    \frac{1}{KST} \sum_{t\in[T]}\sum_{k \in [K]}\sum_{j\in\cS^t}\expect \norm{\nabla f(\yy_{i,k-1}^t)}^2 \leq \cO\rbr[\bigg]{ \rbr[\Big]{\frac{\delta^2 G^2 F}{S T^2}}^{1/3} + \frac{G^2}{S T} + \frac{(L + \delta K)F}{K T}}\,,
  \end{align*}
where we define $F := f(\xx^0) - f^\star$.
\end{theorem}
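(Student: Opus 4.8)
The plan is to read Lemma~\ref{lem:mimemvr-round-progress} as a one-step descent inequality for the Lyapunov potential $\Phi^t$ and simply telescope it over the $T$ rounds. Summing that lemma over $t=1,\dots,T$ collapses the potential differences, giving
\[
    \frac{\eta}{24 KS}\sum_{t\in[T]}\sum_{k\in[K],\,j\in\cS^t}\E\norm{\nabla f(\yy_{i,k-1}^t)}^2 \;\leq\; \Phi^0 - \Phi^T + \cO\rbr*{\tfrac{\eta a G^2 T}{S}}\,,
\]
where I use that the per-round noise in that lemma is of order $\eta a G^2/S$ (as its proof establishes, ending in $(13\eta a + 3\eta a^2)G^2/S$). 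Since $f(\xx^T)\geq f^\star$ and $\E\norm{\ee^T}^2,\Delta^T\geq 0$, the potential $\Phi^T$ is nonnegative and can be dropped. Multiplying through by $\tfrac{24}{\eta T}$ turns the left-hand side into exactly $\tfrac{1}{KST}\sum_t\sum_{k,j}\E\norm{\nabla f(\yy_{i,k-1}^t)}^2$, the averaged quantity in the statement (and equal to $\E\norm{\nabla f(\xx^{\mathrm{out}})}^2$ since $\xx^{\mathrm{out}}$ is drawn uniformly from the client iterates).

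Next I would bound $\Phi^0$ from the initialization: taking $\xx^{-1}=\xx^0$ gives $\Delta^0=0$, and the first momentum satisfies $\E\norm{\ee^0}^2\leq G^2/S$, so $\Phi^0\leq \tfrac{F}{K}+\tfrac{96\eta}{23a}\tfrac{G^2}{S}$. Substituting and expanding leaves three groups of terms on the right: a descent term $\cO\rbr*{\tfrac{F}{\eta KT}}$, an accumulated-noise term $\cO\rbr*{\tfrac{aG^2}{S}}$, and an initial-momentum term $\cO\rbr*{\tfrac{G^2}{aST}}$. Each of the three terms in the claimed rate arises by matching one of these against the prescribed $\eta$ and $a$.

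The crux, and the step I expect to be most delicate, is the case analysis forced by $\eta=\min(\cdot,\cdot,\cdot)$ and $a=\max(\cdot,\cdot)$. The two optimization branches $\eta\leq \tfrac1L$ and $\eta\leq\tfrac{1}{864\delta K}$ convert the descent term into $\cO\rbr*{\tfrac{(L+\delta K)F}{KT}}$, the third claimed term. The statistical branch $\eta\sim\rbr*{\tfrac{SF}{K^3 T\delta^2 G^2}}^{1/3}$, paired with $a\sim\eta^2\delta^2 K^2$, is precisely the choice that balances $\tfrac{F}{\eta KT}$ against the noise term $\tfrac{aG^2}{S}$, and this balance yields the momentum-variance-reduction rate $\cO\rbr*{\rbr*{\tfrac{\delta^2 G^2 F^2}{ST^2}}^{1/3}}$ matching the first claimed term. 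Finally, the floor $a\geq\tfrac1T$ makes the noise term contribute at least $\cO\rbr*{\tfrac{G^2}{ST}}$, the second claimed term.

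The one genuinely subtle point is the initial-momentum contribution $\cO\rbr*{\tfrac{G^2}{aST}}$: with $a$ as small as $1/T$ a naive bound leaves a non-decaying $\tfrac{G^2}{S}$, so one must exploit the choice $a\geq 1/T$ (equivalently a single warm-up round that controls the initial momentum error) to absorb this into the $\tfrac{G^2}{ST}$ and statistical terms already present rather than carrying it as a constant. Once this is handled, collecting the surviving terms and folding constants into $\cO(\cdot)$ gives the stated bound. The MimeLite version then follows verbatim by replacing every $G^2/S$ with $G^2+\sigma^2$, as licensed by the bias estimate of Lemma~\ref{lem:mimelitemvr-bias-control}.
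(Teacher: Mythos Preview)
Your telescoping of Lemma~\ref{lem:mimemvr-round-progress} and the case analysis on $\eta$ and $a$ track the paper's proof essentially line for line. The gap is exactly where you flag it, and your proposed fix does not close it. With $\E\norm{\ee^0}^2 \leq G^2/S$ from a single initialization batch, the contribution after dividing by $\eta T$ is $\cO\bigl(G^2/(aST)\bigr)$; invoking $a\geq 1/T$ then only yields $G^2/(aST) \leq G^2/S$, a non-decaying constant. A \emph{single} warm-up round does nothing to improve the $G^2/S$ bound on $\E\norm{\ee^0}^2$, so neither branch of your parenthetical actually absorbs the term.

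The paper's resolution is to spend $T$ warm-up rounds: run for $2T$ rounds total, using the first $T$ solely to accumulate $\mm^0 = \tfrac{1}{TS}\sum_{t=1}^{T}\sum_{j\in\cS^t}\nabla f_j(\xx^0)$. This drives $\E\norm{\ee^0}^2 \leq G^2/(ST)$, so the offending term becomes $\cO\bigl(G^2/(aST^2)\bigr)$, and \emph{now} $a\geq 1/T$ cleanly gives $\cO\bigl(G^2/(ST)\bigr)$. The paper also notes the alternative of time-varying step sizes as in \cite{cutkosky2019momentum}, which avoids the warm-up at the price of a longer argument. Either way, $\E\norm{\ee^0}^2$ must be pushed down to $\cO\bigl(G^2/(ST)\bigr)$, not $\cO(G^2/S)$; one extra round cannot achieve this. (Minor: your statistical term should read $(\delta^2 G^2 F/(ST^2))^{1/3}$ with a single $F$, not $F^2$.)
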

\begin{proof}
  Unroll the one round progress Lemma~\ref{lem:mimemvr-round-progress} and average over $T$ rounds to get
  \begin{align*}
    \frac{1}{KST} \sum_{t\in[T]}\sum_{k \in [K]}\sum_{j\in\cS^t}\expect \norm{f(\yy_{i,k-1}^t)}^2 &\leq \frac{24(\Phi^0 - \Phi^T)}{\eta T} + \frac{408 a G^2}{S}\,.
  \end{align*}
  Recall that we defined 
  \[ \Phi^t := \tfrac{1}{K}\expect [f(\xx^{t}) - f^\star]+ \frac{96\eta}{23 a}\expect\norm{\ee^{t}}^2 + \frac{8 \delta }{K}\Delta^t\,.
\]
Hence, $\Phi^T \geq 0$. Further, note that by definition $\Delta^0 =0$ and $\E\norm{\ee_0}^2 := \E\norm{\mm^0 - \nabla f(\xx^0)}^2$. \cite{cutkosky2019momentum} show that by using time-varying step sizes, it is possible to directly control the error $\ee_0$. Alternatively, \cite{tran2019hybrid} use a large initial accumulation for the momentum term. For the sake of simplicity, we will follow the latter approach. It is straightforward to extend our techniques to the time-varying step-size case as well but with additional proof complexity. Note that either way, the total complexity only changes by a factor of 2.
Suppose that we run the algorithm for $2T$ rounds wherein for the first $T$ rounds, we simply compute 
$
    \mm^0 = \frac{1}{T_0 S}\sum_{t=1}^{T_0} \sum_{j\in \cS^t}\nabla f_j(\xx^0)\,.
$
With this, we have 
$
\ee_0 = \E\norm{\mm^0 - \nabla f(\xx^0)}^2 \leq \frac{G^2}{ST}\,.
$
Thus, we have for the first round $t=1$
\[ \Phi^0 = \tfrac{1}{K}\expect [f(\xx^{0}) - f^\star]+ \frac{96\eta}{23a}\expect\norm{\ee^{0}}^2 \leq \tfrac{1}{K}\expect [f(\xx^{0}) - f^\star] + \frac{96 \eta G^2}{23 a T S} \,.
\]
Together, this gives
\begin{align*}
    \frac{1}{KST} \sum_{t\in[T]}\sum_{k \in [K]}\sum_{i\in\cS^t}\expect \norm{f(\yy_{i,k-1}^t)}^2 &\leq \frac{24(f(\xx^{0}) - f^\star)}{\eta K T}  + \frac{96 G^2}{a T^2 S} + \frac{408 a G^2}{S}\,.
  \end{align*}
  The above equation holds for any choice of $\eta \leq \min\rbr*{\frac{1}{L}, \frac{1}{864 \delta K}}$ and momentum parameter $a \geq 912 \eta^2 \delta^2 K^2$. Set the momentum parameter as 
  \[
    a = \max\rbr*{912 \eta^2 \delta^2 K^2 , \frac{1}{T}}
    \]
    With this choice, we can simplify the rate of convergence as 
      \[
        \frac{24 (f(\xx^{0}) - f^\star)}{ \eta K T}  + \frac{96 G^2}{T S} + \frac{166464 \eta^2 \delta^2 K^2 G^2}{S} + \frac{408 G^2}{ ST}\,.
      \]
      Now let us pick 
      \[
      \eta = \min\rbr*{ \frac{1}{L}, \frac{1}{864 \delta K}, \rbr*{\frac{S (f(\xx^{0}) - f^\star)}{6936 K^3 T \delta^2 G^2}}^{1/3}}\,.
      \]
      For this combination of step size $\eta$ and $a$, the rate simplifies to
      \[
      \frac{504 G^2}{T S} + 916 \rbr*{\frac{(f(\xx^{0}) - f^\star) \delta^2 G^2}{S T^2}}^{1/3} +  \frac{24 (L + 864 \delta K) (f(\xx^{0}) - f^\star)}{K T}\,.
      \]
  This finishes the proof of the theorem.
\end{proof}

\begin{theorem}[Convergence of MimeLiteMVR]
  Let us run MimeLiteMVR with step size $\eta = \min\rbr*{ \frac{1}{L}, \frac{1}{864 \delta K}, \rbr*{\frac{(f(\xx^{0}) - f^\star)}{6936 K^3 T \delta^2 (G^2 + \sigma^2)}}^{1/3}}$ and momentum parameter $a = \max\rbr*{1536 \eta^2 \delta^2 K^2, \frac{1}{T}}$. Then, given that \eqref{asm:heterogeneity} and \eqref{asm:smoothness} hold, we have
  \begin{align*}
    \frac{1}{KST} \sum_{t\in[T]}\sum_{k \in [K]}\sum_{j\in\cS^t}\expect \norm{\nabla f(\yy_{i,k-1}^t)}^2 \leq \cO\rbr[\bigg]{ \rbr[\Big]{\frac{\delta^2 (G^2 + \sigma^2) F}{ T^2}}^{1/3} + \frac{G^2 + \sigma^2}{T} + \frac{(L + \delta K)F}{K T}}\,,
  \end{align*}
where we define $F := f(\xx^0) - f^\star$.
\end{theorem}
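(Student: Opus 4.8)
The plan is to reuse the entire MimeMVR machinery almost verbatim, exploiting the observation made just before the statement: MimeLite differs from Mime only in that the SVRG control variate $\cc^{t-1}$, an average of $S$ client gradients, is replaced by a single client's stochastic gradient. Concretely, the MimeLite update \eqref{eqn:def-mimelitemvr-update} rewrites as $\dd_{i,k}^t = \nabla f_i(\yy_{i,k-1}^t;\zeta_{i,k}^t) + (1-a)(\mm^{t-1} - \nabla f_i(\xx^{t-1};\zeta_{i,k}^t))$, which is structurally identical to the Mime direction once one swaps the $a$-weighted control-variate term for the raw single-client gradient. The key consequence is that the variance of the local direction is now governed by the MimeLite bias estimate Lemma~\ref{lem:mimelitemvr-bias-control} (yielding $2\delta^2\E\norm{\yy_i - \xx}^2 + 2G^2 + \sigma^2$) in place of Lemma~\ref{lem:mimemvr-bias-control} (which gave $2\delta^2\E\norm{\yy_i - \xx}^2 + 2G^2/S$). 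Thus every occurrence of the sampling-reduced variance $G^2/S$ in the Mime analysis is replaced by $(G^2+\sigma^2)$, which is exactly the substitution flagged in the text.

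First I would re-establish the per-step estimates with this substitution. The update-variance bound (analogue of Lemma~\ref{lem:mimemvr-update-bound}) becomes $\expect\norm{\dd_{i,k}^t - \nabla f(\yy_{i,k-1}^{t})}^2 \leq 3\expect\norm{\ee^{t-1}}^2 + 3\delta^2\Delta_{i,k-1}^t + 3a^2(G^2+\sigma^2)$, obtained by the same relaxed-triangle decomposition into the momentum error $\ee^{t-1}$, a Hessian-similarity term controlled by Lemma~\ref{lem:similarity}, and a single-client deviation term now bounded via \eqref{asm:heterogeneity} and \eqref{asm:noise} rather than by averaging over $S$ clients. Feeding this into the per-step distance recursion (Lemma~\ref{lem:mimemvr-distance-bound}), the one-step descent (Lemma~\ref{lem:mimemvr-step-progress}), the per-round distance bound (Lemma~\ref{lem:mimemvr-distance-round-bound}), and the momentum-error recursion (Lemma~\ref{lem:mimemvr-momentum-bound}) changes nothing except carrying $(G^2+\sigma^2)$ wherever $G^2/S$ appeared; all step-size thresholds ($\eta \leq \min\{1/L, 1/(864\delta K)\}$) and the admissible momentum range ($a \geq 912\eta^2\delta^2 K^2$) are unchanged, since they depend only on $\delta$, $K$, and $L$.

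Next I would assemble the one-round progress inequality exactly as in Lemma~\ref{lem:mimemvr-round-progress}, using the same potential $\Phi^t = \tfrac1K\expect[f(\xx^t)-f^\star] + \tfrac{96\eta}{23a}\expect\norm{\ee^t}^2 + \tfrac{8\delta}{K}\Delta^t$ and the averaging Lemma~\ref{lem:averaging} (whose required $\delta$-weak convexity is supplied by $L$-smoothness). Telescoping over $T$ rounds and applying the same burn-in initialization of $\mm^0$, which now gives $\ee^0 \leq (G^2+\sigma^2)/T$ in place of $G^2/(ST)$, produces the intermediate bound $\frac{1}{KST}\sum_{t,k,j}\expect\norm{\nabla f(\yy_{i,k-1}^t)}^2 \leq \frac{24F}{\eta KT} + \frac{96(G^2+\sigma^2)}{aT^2} + 408\,a(G^2+\sigma^2)$ with the constants already appearing in the Mime proof. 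Plugging in $a = \max\{1536\eta^2\delta^2 K^2, 1/T\}$ and $\eta = \min\{1/L,\, 1/(864\delta K),\, (F/(6936 K^3 T \delta^2(G^2+\sigma^2)))^{1/3}\}$ and simplifying term-by-term yields the three claimed contributions $(\delta^2(G^2+\sigma^2)F/T^2)^{1/3}$, $(G^2+\sigma^2)/T$, and $(L+\delta K)F/(KT)$.

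The hard part will not be any new estimate but verifying that the substitution is globally legitimate: I must check that in the Mime proof the SVRG correction was used \emph{only} to shrink the sampling variance of the local direction and of the momentum, and never to cancel a bias term that MimeLite cannot cancel. The delicate point is the momentum-error recursion, where the server momentum \eqref{eqn:def-mimemvr-momentum} is still averaged over $S$ clients even for MimeLite; here one must either re-derive the recursion keeping the genuine $G^2/S$ factor and then upper bound it by $(G^2+\sigma^2)$ for a clean statement, or argue that the dominant $(G^2+\sigma^2)$ contribution from the biased local steps already subsumes it. Tracking the extra within-client variance $\sigma^2$ consistently through Lemma~\ref{lem:mimelitemvr-bias-control} and the burn-in phase, so that no spurious $1/S$ survives into the final rate, is where the bookkeeping demands the most care.
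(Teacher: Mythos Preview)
Your proposal is correct and follows exactly the paper's own approach: the paper's proof is a one-paragraph remark that the MimeLiteMVR analysis is identical to MimeMVR's with $G^2/S$ replaced by $(G^2+\sigma^2)$ everywhere via Lemma~\ref{lem:mimelitemvr-bias-control}, together with the observation that only the weaker condition $\norm{\nabla^2 f_i(\xx) - \nabla^2 f(\xx)}\leq \delta$ (rather than the stochastic version) is required. Your more careful bookkeeping---in particular noting that the server-momentum recursion and burn-in could retain the genuine $G^2/S$ factor and are merely over-bounded by $(G^2+\sigma^2)$ for a uniform statement---goes beyond what the paper spells out but is fully consistent with it.
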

\begin{proof}
  The proof for MimeLiteMVR is identical to that of MimeMVR, except that as noted in Lemma~\ref{lem:mimelitemvr-bias-control}, the $\frac{G^2}{S}$ term in Mime gets replaced by $(G^2 + \sigma^2)$ everywhere. Note that MimeLiteMVR (Lemma~\ref{lem:mimelitemvr-bias-control}) requires a weaker Hessian variance condition of $\norm{\nabla^2 f_i(\xx) - \nabla^2 f(\xx)}\leq \delta$ as opposed to MimeMVR which needs  $\norm{\nabla^2 f_i(\xx; \zeta) - \nabla^2 f(\xx)}\leq \delta$.
\end{proof}


\end{document}